\newcommand{\LineComment}[1]{\hfill$\rhd\ $\text{#1}}
\newcommand{\TVD}[1]{\norm{#1}_\text{TV}}
\newcommand{\EG}{\textsc{Epoch-Greedy}\xspace}
\newcommand{\minimonster}{\textsc{ILOVETOCONBANDITS}\xspace}
\newcommand{\AdaEG}{\textsc{Ada-Greedy}\xspace}
\newcommand{\AdaILTCB}{\textsc{Ada-ILTCB}\xspace}
\newcommand{\AdaBIN}{\textsc{Ada-BinGreedy}\xspace}
\newcommand{\algo}{$\textsc{Ada-ILTCB}^+$\xspace}
\newcommand{\bistro}{\textsc{BISTRO+}\xspace}
\newcommand{\term}[1]{\textsc{Term$_{#1}$}}
\newcommand{\termprime}[1]{\overline{\textsc{Term}}\textsc{$_{#1}$}}
\newcommand{\event}[1]{\textbf{\textsc{Event$_{#1}$}}}
\newcommand{\calA}{{\mathcal{A}}}
\newcommand{\calB}{{\mathcal{B}}}
\newcommand{\calX}{{\mathcal{X}}}
\newcommand{\calS}{{\mathcal{S}}}
\newcommand{\calF}{{\mathcal{F}}}
\newcommand{\calI}{{\mathcal{I}}}
\newcommand{\calJ}{{\mathcal{J}}}
\newcommand{\calK}{{\mathcal{K}}}
\newcommand{\calD}{{\mathcal{D}}}
\newcommand{\calE}{{\mathcal{E}}}
\newcommand{\calR}{{\mathcal{R}}}
\newcommand{\avgR}{\wh{\cal{R}}}
\newcommand{\ips}{\wh{r}}
\newcommand{\whpi}{\wh{\pi}}
\newcommand{\whV}{\wh{V}}
\newcommand{\Reg}{\text{\rm Reg}}
\newcommand{\whReg}{\wh{\text{\rm Reg}}}
\newcommand{\one}{\boldsymbol{1}}
\newcommand{\var}{\Delta}
\newcommand{\bvar}{\bar{\Delta}}
\DeclareMathOperator*{\argmax}{argmax}
\DeclarePairedDelimiter\abs{\lvert}{\rvert}
\DeclarePairedDelimiter\bigabs{\big\lvert}{\big\rvert}
\newcommand{\field}[1]{\mathbb{#1}}
\newcommand{\fR}{\field{R}}
\newcommand{\E}{\field{E}}
\newcommand{\testblock}{\textsc{EndofBlockTest}\xspace}
\newcommand{\testreplay}{\textsc{EndofReplayTest}\xspace}
\newcommand{\norm}[1]{\left\|{#1}\right\|}
\newcommand{\wh}{\widehat}
\newcommand{\Kprime}{\overline{K}}
\newtheorem*{main}{Main Result}
\newtheorem{fact}[theorem]{Fact}
\newcommand{\order}{\ensuremath{\mathcal{O}}}
\newcommand{\otil}{\ensuremath{\widetilde{\mathcal{O}}}}
\title[A New Algorithm for Non-stationary Contextual Bandits]{A New Algorithm for Non-stationary Contextual Bandits: \\ Efficient, Optimal, and Parameter-free}
\begin{document}
\maketitle

\begin{abstract}

We propose the first contextual bandit algorithm that is parameter-free, efficient, and optimal in terms of dynamic regret.
Specifically, our algorithm achieves $\order(\min\{\sqrt{KST}, K^{\frac{1}{3}}\Delta ^{\frac{1}{3}}T^{\frac{2}{3}}\})$ dynamic regret 
for a contextual bandit problem with $T$ rounds, $K$ actions, $S$ switches and $\Delta$ total variation in data distributions.
Importantly, our algorithm is adaptive and does not need to know $S$ or $\Delta$ ahead of time,
and can be implemented efficiently assuming access to an ERM oracle.

Our results strictly improve the $\order (\min \{S^{\frac{1}{4}}T^{\frac{3}{4}}, \Delta^{\frac{1}{5}}T^{\frac{4}{5}}\})$ bound of~\citep{LuoWA018}, 
and greatly generalize and improve the $\order(\sqrt{ST})$ result of~\citep{AuerGO018} that holds only for the two-armed bandit problem without contextual information.
The key novelty of our algorithm is to introduce {\it replay phases}, 
in which the algorithm acts according to its previous decisions for a certain amount of time in order to detect non-stationarity
while maintaining a good balance between exploration and exploitation.

\end{abstract}

\begin{keywords}%
contextual bandit, non-stationarity, optimal dynamic regret, oracle-efficiency, 
parameter-free, replay
\end{keywords}
\section{Introduction}

For online learning problems, a standard performance measure is \emph{static regret}, which compares the total reward of the best fixed policy (or action/arm/expert under different contexts) and that of the algorithm. 
While minimizing static regret makes sense when there exists a fixed policy with large total reward,
it becomes much less meaningful in a non-stationary environment where data distribution is changing over time and no single policy can perform well all the time.

Instead, in this case a more natural benchmark would be to compare the algorithm with the \textit{best sequence of policies}. 
This is formally defined as \textit{dynamic regret}, 
which is the difference between the total reward of the best sequence of policies and the total reward of the algorithm. 
Due to the ubiquity of non-stationary data, 
there is an increasing trend of designing online algorithms with strong dynamic regret guarantee. 
We provide a more detailed review of related work in Section~\ref{sec:related_work}.
In short, while obtaining dynamic regret is relatively well-studied in the full-information setting,
for the more challenging bandit feedback, most existing works only focus on the simplest multi-armed bandit problem.
More importantly, a sharp contrast between these two regimes is that except for the recent work of~\citep{AuerGO018} for a two-armed bandit problem,
none of the others achieves optimal dynamic regret {\it without the knowledge of the non-stationarity of the data} in the bandit setting,
indicating the extra challenge of being adaptive to non-stationary data with partial information.


In this work, we make a significant step in this direction.
Specifically we consider the general contextual bandit setting~\citep{AuerCeFrSc02, LangfordZh08} which subsumes many other bandit problems.
For an environment with $T$ rounds where at each time $t$ the data is generated from some distribution $\calD_t$,
denote by $S=1+\sum_{t=2}^T \one\{\calD_{t}\neq \calD_{t-1}\}$ the number of switches (plus one) 
and by $\var = \sum_{t=2}^T \TVD{\calD_t-\calD_{t-1}}$ the total variation of these distributions
(see Section~\ref{sec:setup} for more formal definition of the setting) .
Our main contribution is to propose an algorithm called \algo with the following guarantee:

\begin{main}
\algo achieves the optimal dynamic regret bound $\order\left(\min\left\{\sqrt{ST}, \Delta^{\frac{1}{3}}T^{\frac{2}{3}}\right\}\right)$
without knowing $S$ or $\Delta$. Moreover, \algo is oracle-efficient.
\end{main}

Here the dependence on all other parameters are omitted (see Theorem~\ref{theorem: main theorem} for the complete version)
and the optimality of the dependence on $S, \Delta$ and $T$ are well-known \citep{garivier2011upper, BesbesGuZe14}. 
Oracle-efficiency refers to efficiency assuming access to an ERM oracle, 
a common assumption made in most prior works for efficient contextual bandit (formally defined in Section~\ref{sec:setup}).

Our result is by far the best and most general dynamic regret bound for bandit problems.
Recent work by~\citet{LuoWA018} studies the exact same setting and achieves the same optimal bound only if $S$ and $\Delta$ are known;
otherwise their algorithms only achieve suboptimal bounds such as 
$\order (\min \{S^{\frac{1}{4}}T^{\frac{3}{4}}, \Delta^{\frac{1}{5}}T^{\frac{4}{5}}\})$.
On the other hand, \citet{AuerGO018} propose the first bandit algorithm with expected regret $\order(\sqrt{ST})$ without knowing $S$,
but only for the simplest setting: the two-armed bandit problem without contexts.
In contrast, our algorithm works for the general multi-armed bandit problem with contextual information,
enjoys a meaningful bound as long as $\Delta$ is small (even when $S$ is $\order(T)$), 
works with high probability, 
and importantly is oracle-efficient as well.

Our key technique is inspired by~\citep{AuerGO018}.
The high level idea of their algorithm is to occasionally enter some pure exploration phase in order to detect non-stationarity,
and crucially the durations of these exploration phases are \textit{multi-scale} and determined in some randomized way.
The reason behind this is that smaller non-stationarity requires more time to discover and vice versa.
%
We extend this multi-scale idea to the contextual bandit setting.
However, the extension is highly non-trivial and requires the following two new elements:

\begin{enumerate}
\item
First, we find that pure exploration over arms (used by~\citet{AuerGO018, LuoWA018}) is not the optimal way to detect non-stationarity in contextual bandit. 
Instead, we propose to let the algorithm occasionally enter \textit{replay phases},
meaning that the algorithm acts according to some policy distribution used earlier by the algorithm itself. 
The duration of a replay phase and which previous policy distribution to replay are both determined in some randomized way similar to~\citep{AuerGO018}. 
This can be seen as an interpolation between using the current policy distribution and using pure exploration,
and as shown by our analysis achieves a better trade off between exploitation and exploration in non-stationary environments. 

\item
Second, the algorithm of \citep{AuerGO018} is an ``arm-elimination'' approach, which eliminates arms as long as their sub-optimality is identified. 
Direct extension to contextual bandit leads to an inefficient approach similar to \textsc{PolicyElimination} by~\citet{DudikHsKaKaLaReZh11}.
Instead, our algorithm is based on the \textit{soft elimination scheme} of \citep{AgarwalHsKaLaLiSc14}
and can be efficiently implemented with an ERM oracle.
Combining this soft elimination scheme and the replay idea in a proper way is another key novelty of our work.
\end{enumerate}

We review related work in Section~\ref{sec:related_work} and introduce all necessary preliminaries in Section~\ref{sec:setup}.
Our algorithm is presented in Section~\ref{Section:algorithm}. 
The rest of the paper is dedicated to the relatively involved analysis of our algorithm.

\section{Related Work}\label{sec:related_work}
\paragraph{Different forms of dynamic regret bound.}
Bounding dynamic regret in terms of the number of switches $S$ is traditionally referred to as switching regret or tracking regret,
and has been studied under various settings.
Note, however, that in some works $S$ refers to the number of switches of data distributions just as our definition
(e.g.~\citep{garivier2011upper, WeiHoLu16, liu2018change, LuoWA018}),
while in others $S$ refers to the more general notion of number of switches in the competitor sequence 
(e.g.~\citep{herbster1998tracking, bousquet2002tracking, AuerCeFrSc02, hazan2009efficient}).

Bounding dynamic regret in terms of the variation of loss functions or data distributions is also widely studied
(e.g.~\citep{BesbesGuZe14, BesbesGuZe15, LuoWA018}),
and there are in fact several other forms of dynamic regret bounds studied in the literature
(e.g.~\citep{Zinkevich03, slivkins2008adapting, jadbabaie2015online, WeiHoLu16, yang2016tracking, zhang2017improved}).

\paragraph{Adaptivity to non-stationarity.}
Achieving optimal dynamic regret bounds without any prior knowledge of the non-stationarity is the main focus of this work.
This has been achieved for most full-information problems~\citep{LuoSc15, jun2017online, ZhangYaJiZh18},
but is much more challenging in the bandit setting.
Several recent attempts only achieve suboptimal bounds~\citep{KarninAn16, LuoWA018, cheung2019learning}.
It was not clear whether optimal bounds were achievable in this case,
until the recent work of~\citet{AuerGO018} answers this in the affirmative for the two-armed bandit problem.
As mentioned our results significantly generalize their work.



\paragraph{Contextual bandits.} 
Contextual bandit is a generalization of the multi-armed bandit problem.
While direct generalization of the classic multi-armed bandit algorithm already achieves the optimal static regret~\citep{AuerCeFrSc02},
recent research has been focusing on developing practically efficient algorithms with strong regret guarantee due to their applicability to real-world applications.
To avoid running time that is linear in the size of the policy set,
most existing works make the practical assumption that an ERM oracle is given to solve the corresponding offline problem.
Based on this assumption, a series of progress has been made on developing oracle-efficient algorithms with small static regret~\citep{LangfordZh08, DudikHsKaKaLaReZh11, AgarwalHsKaLaLiSc14, syrgkanis2016efficient, rakhlin2016bistro, SyrgkanisLuKrSc16}.
All these results rely on some stationary assumption of the environment,
since it is known that minimizing static regret oracle-efficiently is impossible in an adversarial environment~\citep{Hazan2016}.


Despite the negative result for static regret with oracle-efficient algorithms, 
\citet{LuoWA018} find that this is no longer true for dynamic regret, and develop oracle-efficient algorithms with optimal dynamic regret when the non-stationarity is known.
Their work is most closely related to ours and our algorithm is in essence similar to their \AdaILTCB algorithm.
The key novelty compared to theirs is the replay phases mentioned earlier, 
which eventually allows the algorithm to adapt to the non-stationarity of the data.

\paragraph{Replay phases.}
Introducing replay phases is one of our key contributions. 
The closest idea in the literature is the method of ``mixing past posteriors'' of~\citep{bousquet2002tracking, adamskiy2012putting},
which at each time acts according to some weighted combination of all previous distributions.
One key difference of our method is that once it enters into a replay phase,
it has to continue for a certain amount of time to gather enough information for non-stationarity detection.
Another difference is that in~\citep{bousquet2002tracking, adamskiy2012putting} the main point of ``mixing past posteriors'' is to obtain some form of ``long-term memory'';
otherwise for typical dynamic regret bounds it is enough to just mix with some amount of pure exploration.
It is not clear to us whether our replay idea actually equips the algorithm with some kind of ``long-term memory'' as well,
and we leave this as a future direction.

\section{Preliminaries}\label{sec:setup}
The contextual bandit problem is defined as follows. Let $\calX$ be some arbitrary context space and $K$ be the number of actions. A policy $\pi: \calX\rightarrow [K]$ is a mapping from the context space to the actions.\footnote{%
Throughout the paper we use the notation $[n]$ to denote the set $\{1, \ldots, n\}$ for some integer $n$.
}  
The learner is given a set of policies $\Pi$,
assumed to be finite for simplicity but with a huge cardinality ${|\Pi|}$.
Before the learning procedure starts, the environment decides $T$ distributions $\calD_1, \ldots, \calD_T$ on $\calX\times [0,1]^K$, and draws $T$ independent samples from them: $(x_t, r_t)\sim \calD_t, \forall t\in[T]$.\footnote{%
Technically $\calD_1, \ldots, \calD_T$ are density functions assumed to be absolutely continuous.
} 
The learning procedure then proceeds as follows: for each time $t = 1, \ldots, T$, the learner first receives the context $x_t$, and then based on this context picks an action $a_t\in [K]$. Afterwards the learner receives the reward feedback $r_t(a_t)$ for the selected action but not others. The instantaneous regret against a policy $\pi$ at time $t$ is $r_t(\pi(x_t))-r_t(a_t)$. The classic goal of contextual bandit algorithms is to minimize $\max_{\pi\in\Pi}\sum_{t=1}^T r_t(\pi(x_t))-r_t(a_t)$, that is, the cumulative regret against the best fixed policy,
and the optimal bound is known to be $\order(\sqrt{KT\ln {|\Pi|}})$ in expectation~\citep{AuerCeFrSc02}.

The classic regret is not a good performance measure for non-stationary environments where no single policy can perform well all the time.
Instead, we consider dynamic regret that compares the reward of the algorithm to the reward of the best policy {\it at each time}.
Specifically, denote the expected reward of policy $\pi$ at time $t$ as $\calR_t(\pi)\triangleq \E_{(x,r)\sim \calD_t}\left[r(\pi(x))\right]$, and the optimal policy at time $t$ as $\pi_t^*\triangleq \argmax_{\pi\in \Pi}\calR_t(\pi)$. The dynamic regret is then defined as $\sum_{t=1}^T r_t(\pi_t^*(x_t)) - r_t(a_t)$. 

It is well-known that in general it is impossible to achieve sub-linear dynamic regret.
Instead, typical dynamic regret bounds are expressed in terms of some quantities that characterize the non-stationarity of the data distributions, and are meaningful as long as these quantities are sublinear in $T$. Two such quantities considered in this work are: the number of distribution hard switches (plus one) $S\triangleq 1+\sum_{t=2}^T \one\{\calD_{t}\neq \calD_{t-1}\}$ and the total variation of distributions $\var \triangleq \sum_{t=2}^T \TVD{\calD_t-\calD_{t-1}} = \sum_{t=2}^T\int_{[0,1]^K}\int_{\calX} \bigabs{\calD_{t}(x,r)-\calD_{t-1}(x,r)} dxdr$. 

\paragraph{More notation.}
For any integer $1\leq s \leq s' \leq T$, we denote by $[s,s']$ the time interval $\{s,s+1, \ldots, s'\}$. For an interval $\calI=[s,s']$, we define the number of switches and the variation on this interval respectively as $S_\calI\triangleq 1+\sum_{\tau=s+1}^{s'}\one\{\calD_{\tau}\neq \calD_{\tau-1}\}$ and $\Delta_\calI\triangleq \sum_{\tau=s+1}^{s'} \TVD{\calD_{\tau}-\calD_{\tau-1}}$. 

As in most algorithms, at each time $t$ we sample an action $a_t$ according to some distribution $p_t$, calculated based on the history before time $t$.
After receiving the reward feedback $r_t(a_t)$, we construct the usual importance-weighted estimator $\ips_t$, which is defined as $\ips_t(a) \triangleq \frac{r_t(a)}{p_t(a)}\one\{a_t=a\}, \forall a\in[K]$ and is clearly unbiased with mean $r_t$. 

For any interval $\calI\subseteq [T]$, we define the average reward of a policy $\pi$ over this interval as $\calR_\calI(\pi) \triangleq \frac{1}{|\calI|}\sum_{t\in\calI} \calR_t(\pi)$ and similarly its empirical average reward as $\avgR_\calI(\pi) \triangleq \frac{1}{|\calI|}\sum_{t\in\calI} \ips_t(\pi(x_t))$. The optimal policy in interval $\calI$ is defined as $\pi_\calI^* \triangleq \argmax_{\pi\in\Pi} \calR_\calI(\pi)$ while the empirically best policy is $\whpi_\calI \triangleq \argmax_{\pi\in\Pi} \avgR_\calI(\pi)$. 
Furthermore, 
the expected and empirical interval (static) regret of a policy $\pi$ for an interval $\calI$ are respectively defined as $\Reg_\calI(\pi) \triangleq \calR_{\calI}(\pi_\calI^*) - \calR_\calI(\pi)$ and $\whReg_\calI(\pi) \triangleq \avgR_\calI(\whpi_\calI) - \avgR_\calI(\pi)$. 
When $\calI = [t, t]$, we simply use $t$ to replace $\calI$ as the subscript. For example, $\Reg_t(\pi)$ represents $\Reg_{[t,t]}(\pi)$.

For a context $x$ and a distribution over the policies $Q \in
\Delta^\Pi \triangleq \{Q \in \fR^{|\Pi|}_+: \sum_{\pi \in \Pi} Q(\pi) =
1\}$, the projected distribution over the actions is denoted by
$Q(\cdot|x)$ such that $Q(a|x) = \sum_{\pi: \pi(x)=a} Q(\pi)$
for all $a \in [K]$. The smoothed projected distribution with a minimum
probability $\nu \in (0, 1/K]$ is defined as $Q^\nu(\cdot|x) = \nu\one +
(1-K\nu)Q(\cdot|x)$ where $\one$ is the all-one
vector. Similarly to~\citep{AgarwalHsKaLaLiSc14}, our algorithm keeps track of a bound
on the variance of the reward estimates. To this end, define for a policy $\pi$, an
interval $\calI$,  a distribution $Q$, and a minimum probability $\nu$,
the empirical and expected variance as
\[
\whV_\calI (Q, \nu, \pi)\triangleq \frac{1}{|\calI|} \sum_{t \in \calI} \left[ \frac{1}{Q^{\nu}(\pi(x_t) | x_t)} \right], 
\quad V_\calI(Q, \nu, \pi) \triangleq \frac{1}{|\calI|} \sum_{t \in \calI} \E_{x \sim \calD_t^\calX} \left[ \frac{1}{Q^{\nu}(\pi(x) | x)} \right],
\]
where $\calD_t^\calX$ is the marginal distribution of $\calD_t$ over the context space $\calX$.
Again, $\whV_t$ and $V_t$ are shorthands for $\whV_{[t, t]}$ and $V_{[t, t]}$ respectively.

We are interested in efficient algorithms assuming access to an ERM oracle~\citep{AgarwalHsKaLaLiSc14}, defined as:

\begin{definition}
An ERM oracle is an algorithm which takes any set $\mathcal{T}$ of
context-reward pairs $(x, r) \in \calX \times \fR^K$ as
inputs and outputs any policy in $\argmax_{\pi \in \Pi} \sum_{(x,r)\in
  \mathcal{T}} r(\pi(x))$.
\end{definition}

An algorithm is oracle-efficient if its total running time and the
number of oracle calls are both polynomial in $T, K$ and $\ln {|\Pi|}$,
excluding the running time of the oracle itself. 

Finally, we use notation $\otil(\cdot)$ to suppress logarithmic dependence on $T, K$, and $1/\delta$
for some confidence level $\delta$.
For notational convenience we also define $\Kprime=(\log_2T)K$.  
A complete notation table can be found in Appendix~\ref{app:notation_table}.

\section{Algorithm}
\label{Section:algorithm}

\begin{savenotes}
\begin{algorithm}[t]
\SetAlgoVlined
 \caption{\algo}
\label{algorithm:main}
\textbf{Input:} confidence level $\delta\in (0,1)$, time horizon $T$, underlying policy class $\Pi$. \\
\textbf{Definition:} 
$\nu_j = \sqrt{\frac{C_0}{K2^{j} L}}$, where $C_0=\ln\left(\frac{8T^3|\Pi|^2}{\delta}\right)$ and 
$L=\lceil 4KC_0 \rceil$ (block base length). \\
$\calB_{(i,j)} \triangleq [\tau_i, \tau_i+2^j L-1]$, 
where $\tau_i$ is the beginning of epoch $i$, as defined in the algorithm. \\
\nl \textbf{Initialize:} $t=1,i=1$. \\ 
\nl $\tau_i \leftarrow t$. \label{line:restart} \LineComment{$i$ indexes an epoch}  \\
\nl \For(\LineComment{$j$ indexes a block}){$j=0,1,2,\ldots$}{ 
\nl      If $j=0$, define $Q_{(i,j)}$ as an 
         arbitrary sparse distribution over $\Pi$;\footnote{%
We emphasize ``sparse distribution'' only to ensure the efficiency of the algorithm. Whether $Q_{(i,j)}$ is sparse or not does not affect the regret bound since we trivially bound the regret for block $0$ by its length.
}
         otherwise, let $Q_{(i,j)}$ be 
         the solution of (OP) with inputs $\calI=\calB_{(i,j-1)}$ and $\nu = \nu_j$. \\
\nl      $\mathcal{S}\leftarrow\emptyset$.  \LineComment{$\calS$ records replay indices and intervals} \label{line: S reset}  \\
\nl      \While{$t\leq \tau_i+2^j L-1 $}{
         \ \\
\nl      \fbox{$\rhd$ {Step 1. Randomly start a replay phase}} \\
\nl      Sample 
         \textsc{rep}$\sim \text{Bernoulli}\left(\frac{1}{L}\times2^{-j/2}\times \sum_{m=0}^{j-1}2^{-m/2} \right)$.  \label{line: start rep}\\
\nl      \If{$\textsc{rep}=1$}{
\nl          Sample $m$ from $\{0, \ldots, j-1\}$
             s.t. $\Pr[m=b]\propto 2^{-b/2}$.\\
\nl          $\calS \leftarrow \calS \cup \left\{\left(m, [t,t+2^m L-1]\right)\right\}$. \LineComment{start a new replay phase} \label{line: end rep}
            
        }
        \ \\ 
\nl     \fbox{$\rhd$  {Step 2. Sample an action}} \\  
\nl     Let 
        $M_t \triangleq \{m \ \vert\  \exists \;\calI \text{\ such that\ } t\in\calI \text{\ and\ } (m, \calI)\in \calS \}$. \label{line: M_t definition} \\
\nl     \If{$M_t$ is empty}{
\nl         Play $a_t\sim Q_{(i,j)}^{\nu_j}(\cdot|x_t)$. 
        }
\nl        \Else{
\nl         Sample $m\sim \text{Uniform}(M_t)$\\
\nl         Play $a_t\sim Q_{(i,m)}^{\nu_m}(\cdot|x_t)$.   
        }
        
        \ \\ 
\nl     \fbox{$\rhd$ {Step 3. Perform non-stationarity tests}}\\ 
\nl     \For{$(m, [s, s']) \in \calS$}{
\nl          \If{$s'=t$ \emph{\textbf{and}}   \label{line: check restart1}
                \testreplay $\left(i,j,m, [s, t]\right)=$ \textit{Fail}}
               {  
\nl            $t\leftarrow t+1$,          
               $i\leftarrow i+1$ and
               \textbf{goto} Line~\ref{line:restart} to start a new epoch.     \label{line: restart1}
            }
        }
\nl     \If{$t=\tau_i+2^jL-1$ \emph{\textbf{and}}       \label{line: check restart2}
               \testblock$\left(i,j\right)=$\textit{Fail}}
            {
\nl            $t\leftarrow t+1$,
               $i\leftarrow i+1$ and
           \textbf{goto} Line~\ref{line:restart} to start a new epoch.      \label{line: restart2}
        }
        \ \\ 
\nl     $t\leftarrow t+1$. 
      }
}
\end{algorithm}
\end{savenotes}

\begin{figure}[ht]
\begin{framed}
\textbf{Optimization Problem (OP)}\\
\textbf{Input:} time interval $\calI$, minimum exploration probability $\nu$ \\
Return $Q \in \Delta^\Pi$ such that for constant $C=1.2\times 10^7$, 
\begin{gather}
    \sum_{\pi \in \Pi} Q(\pi)\whReg_\calI(\pi) \leq 2CK\nu, \label{eqn:op1} \\
    \forall \pi \in \Pi: \whV_\calI(Q,\nu,\pi) \leq 2K + \frac{\whReg_\calI(\pi)}{C\nu}.  \label{eqn:op2}  
\end{gather}
\end{framed}

\begin{framed}
\textbf{\testreplay}$(i, j, m, \calA)$\\
Return \textit{Fail} if there exists $\pi\in\Pi$ such that any of the following inequalities holds: 
\begin{gather}
     \whReg_{\calA}(\pi) - 4\whReg_{\calB_{(i,j-1)}}(\pi) \geq D_1\Kprime\nu_m, \label{eqn:test_cond1}\\
     \whReg_{\calB_{(i,j-1)}}(\pi) - 4\whReg_{\calA}(\pi) \geq D_1\Kprime\nu_m, \label{eqn:test_cond2}\\
     \whV _{\calA}(Q_{(i,m)}, \nu_m,\pi) - 41 \whV_{\calB_{(i,j-1)}}(Q_{(i,m)}, \nu_m,\pi) \geq  D_2K, \label{eqn:test_cond3} 
\end{gather} 
where $D_1=6400$ and $D_2=800$; otherwise return \textit{Pass}.
\end{framed}

\begin{framed}
\textbf{\testblock}$(i,j)$\\
Return \textit{Fail} if there exists $k\in \{0,\ldots, j-1\}$ and $\pi\in\Pi$ such that any of the following inequalities holds: 
\begin{gather}
    \whReg_{\calB_{(i,j)}}(\pi) - 4\whReg_{\calB_{(i,k)}}(\pi) \geq D_4\Kprime\nu_k, \label{eqn:test_cond4}\\
    \whReg_{\calB_{(i,k)}}(\pi) - 4\whReg_{\calB_{(i,j)}}(\pi) \geq D_4\Kprime\nu_k, \label{eqn:test_cond5} \\
     \whV _{\calB_{(i,j)}}(Q_{(i,k+1)}, \nu_{k+1},\pi) - 41 \whV_{\calB_{(i,k)}}(Q_{(i,k+1)}, \nu_{k+1},\pi) \geq  D_5K, \label{eqn:test_cond6}
\end{gather}
where $D_4=6400$ and $D_5=800$; otherwise return \textit{Pass}.  
\end{framed}

\caption{Optimization subroutine and non-stationarity tests}
\label{fig:tests_and_OP}
\end{figure}

Our algorithm is built upon \minimonster of~\citep{AgarwalHsKaLaLiSc14}.
The main idea of their algorithm is to find a sparse distribution over the policies with both low empirical regret and low empirical variance on the collected data,
and then sample actions according to this distribution.
Finding such distributions is formalized in Figure~\ref{fig:tests_and_OP}, Optimization Problem (OP),
and~\citet{AgarwalHsKaLaLiSc14} show that this can be efficiently implemented using an ERM oracle and importantly the distribution is sparse.
Under a stationary environment, 
it can be shown that the empirical regret concentrates around the expected regret reasonably well and thus the algorithm has low regret.

The \AdaILTCB algorithm of \citep{LuoWA018} works by equipping \minimonster with some non-stationarity tests and restarting once non-stationarity is detected.
Our algorithm \algo works under a similar framework with similar tests,
but importantly enters into replay phases occasionally.
The complete pseudocode is included in Algorithm~\ref{algorithm:main}
and we describe in detail how it works below.

The algorithm starts a new {\it epoch} every time it restarts
(that is, on execution of Line~\ref{line: restart1} or~\ref{line: restart2}).
We index an epoch by $i$ and denote the first round of epoch $i$ by $\tau_i$. 
Within an epoch, the algorithm works on a {\it block} schedule.
Specifically, in epoch $i$,
we call the interval $[\tau_i, \tau_i+L-1]$ block $0$ and interval $[\tau_i+2^{j-1}L, \tau_i+2^jL-1]$ block $j$ for any $j\geq 1$ 
(in the case of restart, the block ends earlier),
where $L$ is some fixed base length.\footnote{%
The lengths of these blocks are doubling except that block 0 and block 1 have the same length $L$. This is merely for notational convenience and it is not crucial.
}
%
Each block is associated with an exploration probability $\nu_j$ of order $1/\sqrt{K 2^{j}L}$.
At the beginning of each block $j$ (for $j\geq 1$),
the algorithm first solves the Optimization Problem (OP) (Figure~\ref{fig:tests_and_OP}) using exploration probability $\nu_j$ and all data collected since the beginning of the current epoch,
that is, data from $\calB_{(i,j-1)} \triangleq [\tau_i, \tau_i+2^{j-1} L-1]$.
%
%
The solution is denoted by $Q_{(i,j)}$, which is a sparse distribution over policies. 



Afterwards, for most of the time of the current block, 
the algorithm simply plays according to $Q^{\nu_j}_{(i,j)}(\cdot|x_t)$, just like \minimonster.
The difference is that at each time, with probability $\frac{1}{L}2^{-j/2}2^{-m/2}$ the algorithm enters into a \textit{replay phase} of index $m\in \{0, 1, \ldots, j-1\}$ which lasts for $2^m L$ rounds.
This is implemented in Line~\ref{line: start rep}-\ref{line: end rep}, 
where we first sample a Bernoulli variable \textsc{rep} to decide whether or not to enter into a replay phase, 
and if so then randomly select a replay index $m$ to ensure the aforementioned probability.
The set $\calS$ is used to record all pairs of replay index and replay interval.
Similar to~\citep{AuerGO018},
the reason of using different lengths is to allow the algorithm to detect different level of non-stationarity:
a longer replay interval with a larger index is used to detect smaller non-stationarity.

Note that at each time $t$, 
the algorithm could potentially be in multiple replay phases simultaneously.
Let $M_t$ be the set of indices of all the \textit{ongoing} replay intervals (defined in Line~\ref{line: M_t definition}). 
If $M_t$ is empty, the algorithm is not in any replay phase and simply samples an action according to $Q^{\nu_j}_{(i,j)}(\cdot|x_t)$ as mentioned. 
On the other hand, if $M_t$ is not empty, 
the algorithm uniformly at random picks an index $m$ from $M_t$, 
and then replays the distribution learned at the beginning of block $m$, that is, samples an action according to $Q^{\nu_m}_{(i,m)}(\cdot|x_t)$.
Recall that our reward estimators $\ips_t$'s are defined in terms of a distribution $p_t$ over actions, 
and it is clear that for our algorithm $p_t(\cdot) = \one\{|M_t|=0\} Q^{\nu_j}_{(i,j)}(\cdot|x_t) +  \one\{|M_t|\neq0\} \frac{1}{|M_t|}\sum_{m\in M_t}Q^{\nu_m}_{(i,m)}(\cdot|x_t)$.

\begin{figure}
    \includegraphics[width=15cm]{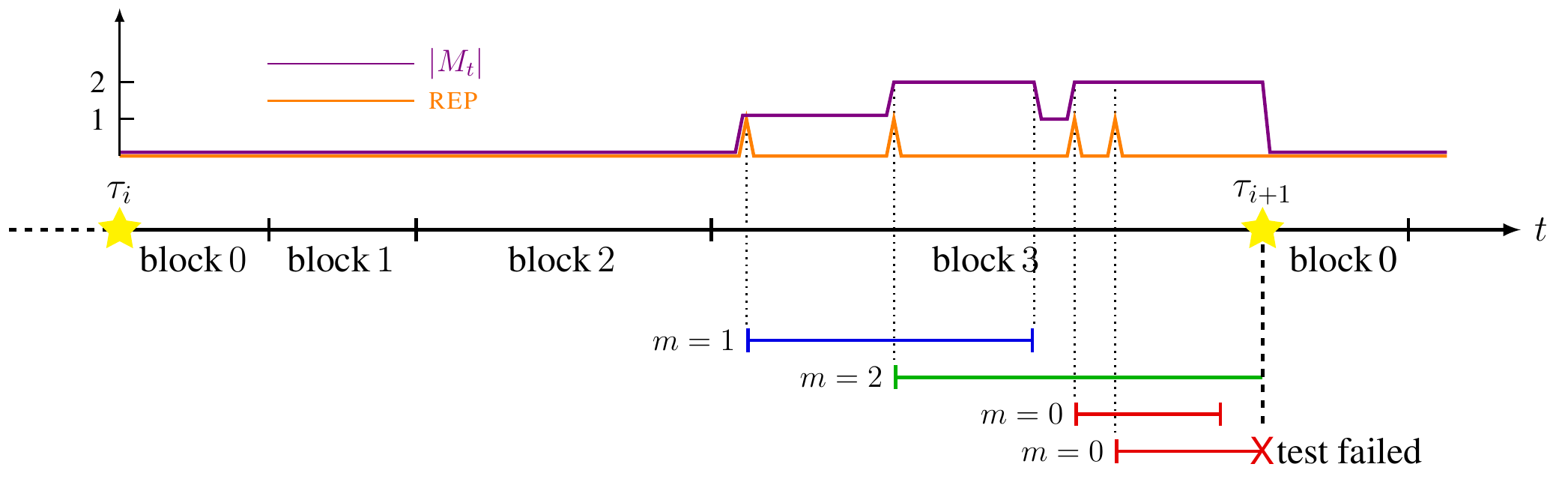}\caption{An Illustration of \algo (best viewed in color). 
The two stars represent two restarts and the interval between them is epoch $i$.
The purple curve represents the value of $|M_t|$, that is, 
how many replay phases (with distinct indices) the algorithm is currently in.
The orange curve represents the value of the Bernoulli variable \textsc{rep},
so that each spike represents the start of a new replay interval.
The four segments below the $t$-axis indicates four replay intervals started within block 3,
with their index value $m$ on the left.
Segments with the same index share the same color,
and segment with larger index is longer.
At time $\tau_{i+1}-1$, the bottom replay interval finishes
and the algorithm performs a \testreplay.
The test fails and the algorithm restarts a new epoch.
Note that the green replay interval (with $m=2$) is also discontinued due to the restart.
}
    \label{fig:illustration}
\end{figure}

Finally, at the end of every replay interval, the algorithm calls the subroutine \testreplay to check whether the data collected in the replay interval and that collected prior to the current block (that is, $\calB_{(i,j-1)}$) are consistent (Line~\ref{line: check restart1}). 
Also, at the end of every block $j$, the algorithm calls another subroutine \testblock to check the consistency between data up to block $j$ and data up to block $k$ for all $k\in \{0, 1, \ldots, j-1\}$ (Line~\ref{line: check restart2}). 
Both tests are in similar spirit to those of~\citep{LuoWA018},
and check the difference of empirical regret or empirical variance of each policy over different sets of data (see Figure~\ref{fig:tests_and_OP}). The difference between the two tests is that they capture non-stationarity at different time steps. If either of the tests indicates that there is a significant distribution change, the algorithm restarts from scratch and enters into the next epoch. 
Also note that if \testblock passes and the algorithm enters into a new block, all unfinished replay intervals will discontinue ($\calS$ is reset to be empty in Line~\ref{line: S reset}).

We provide an illustration of our algorithm in Figure~\ref{fig:illustration}.

\paragraph{Oracle-efficiency.} 
Our algorithm can be implemented efficiently with an ERM oracle.
\citet{AgarwalHsKaLaLiSc14} show that the Optimization Problem (OP) with input $\nu$ can be solved using $\otil(1/\nu)$ oracle calls with a solution that is $\otil(1/\nu)$-sparse. In our case, $\otil(1/\nu)$ is at most $ \otil(\sqrt{KT})$. 
The two tests can also be implemented efficiently by the exact same arguments of~\citep{LuoWA018}.
For example, in \testreplay, to check if there exists a $\pi \in \Pi$ satisfying Eq.~\eqref{eqn:test_cond1}, we can first use two oracle calls to precompute  $\max_{\pi'\in \Pi} \avgR_\calA(\pi')$  and $\max_{\pi'\in \Pi} \avgR_{\calB_{(i,j-1)}}(\pi')$, and collect $\mathcal{T} =\left\{\left(x_t,\frac{-\ips_t}{|\calA|}\right)\right\}_{t\in\calA}\cup\left\{\left(x_t,\frac{4\ips_t}{|\calB_{(i,j-1)}|}\right)\right\}_{t\in\calB_{(i,j-1)}}$. Then we again use an oracle call to find $\max_{\pi\in\Pi}\sum_{(x,r)\in \mathcal{T}}r(\pi(x))$
and add this value to $\max_{\pi'\in \Pi} \avgR_\calA(\pi')-4\max_{\pi'\in \Pi}\avgR_{\calB_{(i,j-1)}}(\pi')$, which is equal to taking the max over $\pi \in \Pi$ of the left hand side of Eq.~\eqref{eqn:test_cond1}. 
It remains to compare this value with the right hand side of Eq.~\eqref{eqn:test_cond1}.
\\




\section{Main Theorem and Proof Outline}

The dynamic regret guarantee of \algo is summarized below:
\begin{theorem}[Main Theorem]
\label{theorem: main theorem}
\algo guarantees with high probability,
\begin{align*}
    \sum_{t=1}^T r_t(\pi_t^*(x_t)) - r_t(a_t) = \otil\left(\min\left\{\sqrt{K(\ln |\Pi|)ST}, \sqrt{K(\ln |\Pi|)T} + (K\ln |\Pi|)^{\frac{1}{3}}\Delta^{\frac{1}{3}}T^{\frac{2}{3}}\right\}\right).
\end{align*}
\end{theorem}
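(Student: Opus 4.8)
The plan is to begin with a high-probability \emph{clean event} under which every empirical quantity the algorithm uses is uniformly accurate. Concretely, I would show that with probability at least $1-\delta$, simultaneously for every policy $\pi\in\Pi$ and every interval $\calI$ the algorithm can ever form (an epoch prefix, a block $\calB_{(i,j)}$, or a replay interval), the empirical interval regret $\whReg_\calI(\pi)$, the empirical variance $\whV_\calI(Q,\nu,\pi)$ for the relevant $(Q,\nu)$, and the empirical reward $\avgR_\calI(\pi)$ are within $\otil\big(K\nu+\sqrt{\whV_\calI/|\calI|}\big)$ of their expectations. Since on every round $p_t$ is lower bounded by the exploration probability in force ($\nu_j$ in the main play, $\nu_m$ during a replay of index $m$), the importance-weighted estimators $\ips_t$ have conditional variance controlled by $\whV$, and a Freedman-type martingale inequality together with a union bound over the $\mathrm{poly}(T)$ possible intervals and over $\Pi$ (absorbed into $C_0=\ln(8T^3|\Pi|^2/\delta)$ and into $\otil(\cdot)$) yields the claim. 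This is the analogue of the concentration lemmas of~\citep{AgarwalHsKaLaLiSc14,LuoWA018}, and everything afterwards is deterministic given this event.

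Conditioned on the clean event I would establish a \emph{master interval-regret lemma}: for any epoch $i$ and any block $j$ inside it during which no test has failed, the dynamic regret of the non-replay rounds of $\calB_{(i,j)}$ is $\otil\big(\sqrt{K(\ln|\Pi|)\,|\calB_{(i,j)}|}\big)$ plus ``bias-transport'' terms of the type $\otil\big(|\calB_{(i,j)}|\cdot\Delta_{\calB_{(i,j)}}/\nu_j\big)$ that collapse once the tests are known to pass. The argument follows the stationary \minimonster analysis: constraint~\eqref{eqn:op1} of (OP) on $\calB_{(i,j-1)}$ and the clean event force $\sum_\pi Q_{(i,j)}(\pi)\Reg_{\calB_{(i,j-1)}}(\pi)=\otil(K\nu_j)$, while~\eqref{eqn:op2} guarantees enough exploration that per-round regret on $\calB_{(i,j)}$ is $\otil(K\nu_j)$ up to the error of converting ``low regret against $\pi_{\calB_{(i,j-1)}}^*$'' into ``low regret against $\pi_t^*$''; this error is exactly the accumulated variation, which \testblock\ (comparing $\whReg$ and $\whV$ across $\calB_{(i,j)}$ and each earlier $\calB_{(i,k)}$) bounds as long as it passes. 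The analogous bound for replay rounds follows by applying the same reasoning to the replayed distribution $Q_{(i,m)}^{\nu_m}$, with \testreplay\ playing the role of the consistency check: it fails, ending the epoch, before the replay's deviation from its baseline $\calB_{(i,j-1)}$ exceeds the thresholds $D_1\Kprime\nu_m$ in regret or $D_2K$ in variance of~\eqref{eqn:test_cond1}--\eqref{eqn:test_cond3}.

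Third comes the \emph{detection lemma} and the aggregation. I would show that if, since the start of the current epoch, the environment has either switched once, or accumulated total variation exceeding order $\Kprime\nu_m$ over some sub-interval of length $\Theta(2^mL)$, then with high probability a test fails within $\otil(2^mL)$ rounds and the epoch terminates. The crucial point is that the replay-start probability $\tfrac1L 2^{-j/2}2^{-m/2}$ is calibrated so that the expected number of rounds spent in replays within a block of length $\Theta(2^jL)$ is only $\otil(\sqrt{2^jL})$ --- keeping replay regret subdominant --- while still, at the \emph{right} scale $m^\star$ (the smallest $m$ with $2^mL\gtrsim \Kprime^2/\mathrm{gap}^2$), a replay of index $m^\star$ collects enough post-change data, with constant probability over $\otil(1)$ independent attempts, to trigger~\eqref{eqn:test_cond1}--\eqref{eqn:test_cond3}; if the change straddles a block boundary instead, \testblock\ catches it. Combining this with the master lemma bounds every epoch's length by its local non-stationarity. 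For the $\sqrt{ST}$ bound, every epoch but the last consumes at least one switch, so there are $\otil(S)$ epochs, each of length $T_i$ contributing $\otil(\sqrt{K(\ln|\Pi|)T_i})$ (summing block contributions geometrically), and Cauchy--Schwarz gives $\sum_i\sqrt{T_i}\le\otil(\sqrt{ST})$. For the $\Delta^{1/3}T^{2/3}$ bound, one balances a per-round regret rate $\otil(\sqrt{K/\ell})$ over a window of length $\ell$ against the fact that such a window ends once variation of order $\Kprime/\sqrt{K\ell}$ has been spent, and sums the budget $\le\Delta$; this yields $\otil((K\ln|\Pi|)^{1/3}\Delta^{1/3}T^{2/3})$ plus the inevitable $\otil(\sqrt{K(\ln|\Pi|)T})$ floor from the opening block. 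Taking the smaller of the two bounds proves Theorem~\ref{theorem: main theorem}.

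I expect the main obstacle to be the \emph{joint} control of replay cost and detection power. The replay schedule must be sparse enough that the extra rounds spent replaying old distributions do not inflate the regret beyond $\otil(\sqrt{KT})$, yet dense enough at \emph{every} scale $m$ that a change of \emph{any} magnitude is detected before it does $\omega(\sqrt{KT})$ damage; the $2^{-j/2}2^{-m/2}$ weighting is precisely what threads this needle, but proving it requires carefully untangling the dependence among the random replay starts, the random replay-index choices, and the (random) rounds on which several replays are simultaneously active, and then showing that comparing straddling or post-change replay data against the pre-block baseline $\calB_{(i,j-1)}$ retains the claimed statistical power even though $\calB_{(i,j-1)}$ may itself carry residual variation. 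A secondary but pervasive difficulty is the accounting of bias-transport terms, so that the ``variation used up'' in the regret bound is never double-counted across the overlapping intervals $\calB_{(i,j)}$, $\calB_{(i,j-1)}$ and the replay intervals.
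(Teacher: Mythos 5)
The central gap is your detection lemma. You claim that a single switch, or total variation of order $\Kprime\nu_m$ on a window of length $\Theta(2^mL)$, forces a test to fail within $\otil(2^mL)$ rounds. That statement cannot be proved: \testreplay and \testblock only compare empirical regrets $\whReg$ and empirical variances $\whV$ across intervals, so a distribution change that leaves every policy's regret and variance (nearly) unchanged --- e.g., reshuffling probability mass among contexts on which all policies agree --- is invisible to the tests no matter how large its total variation or how many switches it comprises, and rightly so, since the algorithm need not react to it. The detection statement the argument actually supports (Lemma~\ref{lem:detection}) is conditioned on a large \emph{excess regret} $\varepsilon_\calI > D_3\alpha_\calI$, i.e., on the non-stationarity manifesting in the very quantities the tests measure; detection is simply not available in the switch/variation form you assume. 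Relatedly, "the bias terms collapse once the tests pass" glosses over the delay problem: tests fire only at the ends of replay phases and blocks, so one must separately bound the regret accumulated between the onset of large excess regret and the (random) time a replay of the right index happens to start --- this is exactly the missed-opportunity computation (the $\exp(-f(\tau)/C^*)$ bound behind \term{3}) that your outline does not supply.

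This mis-specification then breaks the aggregation for the $\sqrt{ST}$ branch. Because undetectable switches exist, an epoch may contain arbitrarily many switches, each still costing regret up to the threshold level, so the per-epoch regret is not $\otil\left(\sqrt{K(\ln|\Pi|)T_i}\right)$ as your "one switch per epoch" accounting assumes, but rather $\otil\left(\sqrt{KC_0 S_{\calE_i}T_i}\right)$. The paper handles this by partitioning every block into at most $\order\left(\min\{S_\calJ,\,(KC_0)^{-1/3}\Delta_\calJ^{2/3}|\calJ|^{1/3}+1\}\right)$ intervals whose variation stays below their excess-regret threshold (Lemma~\ref{lemma: division}), paying threshold-level regret on each, and invoking the replay-detection argument only on intervals with $\varepsilon_\calI > D_3\alpha_\calI$; Cauchy--Schwarz across epochs with $\sum_i S_{\calE_i}\le S+E$ then yields $\sqrt{KC_0 ST}$. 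Finally, your outline silently uses, but never establishes, the converse "no false restart" statement --- that under the clean event a restart can only occur when the variation since the epoch start exceeds $\sqrt{KC_0/(t-\tau_i+1)}$ (Lemma~\ref{lem:no rerun until}) --- which is what bounds the number of epochs by $\min\{S,\,(KC_0)^{-1/3}\Delta^{2/3}T^{1/3}+1\}$; without it neither branch of the final bound is controlled.
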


\paragraph{Proof roadmap.}

The rest of the paper proves our main theorem, following these steps:
in Section~\ref{sec: main lemma}, we provide a key lemma that bounds the dynamic regret for any interval within a block (in terms of some algorithm-dependent quantities). In Section~\ref{sec: block reg}, with the help of the key lemma we bound the dynamic regret for a block. In Section~\ref{sec: overall reg} we bound the number of epochs/restarts, and sum up the regret over all blocks in all epochs to get the final bound. 
Since the analysis in Sections~\ref{sec: main lemma} and~\ref{sec: block reg} is all about a fixed epoch $i$, for notation simplicity, we simply write $\calB_{(i,j)}$ and $Q_{(i,j)}$ as $\calB_j$ and $Q_j$ in these two sections.

\subsection{A main Lemma and regret decomposition}\label{sec:interval regret}
\label{sec: main lemma}

To bound the dynamic regret over any interval, we define the concept of {\it excess regret}:

\begin{definition}
\label{lem:excess regret}
For an interval $\calI$ that lies in $[\tau_i + 2^{j-1}L, \tau_i + 2^{j}L-1]$ for some $j>0$, we define its excess regret as
\begin{align*}
    \varepsilon_{\calI} \triangleq \max_{\pi\in\Pi} \Reg_{\calI}(\pi) - 8\whReg_{\calB_{(i,j-1)}}(\pi),
\end{align*}
and its excess regret threshold as $\alpha_\calI = \sqrt{\frac{2KC_0}{|\calI|}} \log_2 T$.
\end{definition}



In words, excess regret of $\calI$ is the maximum discrepancy between a policy's expected static regret on $\calI$ and (8 times) its empirical static regret on the first $j$ blocks.
Large excess regret thus indicates non-stationarity.
We now use the following main lemma to decompose the dynamic regret on $\calI$ based on whether the excess regret reaches the excess regret threshold. 

\begin{lemma}[Main Lemma]\label{lem:interval regret}
With probability $1-\delta$, \algo guarantees for all $j>0$ and any interval $\calI$ that lies in block $j$, 
\begin{align*}
    &\sum_{t\in \calI} \left(r_t(\pi_t^*(x_t)) - r_t(a_t)\right) = \order\left(\left(\sum_{t\in\calI}\sum_{m\in M_t\cup\{j\}}\Kprime\nu_m\right) +  |\calI|\alpha_\calI + |\calI|\Delta_\calI + |\calI|\varepsilon_{\calI}\one\{\varepsilon_\calI > D_3\alpha_\calI\}\right)  
\end{align*}
where $D_3=4.1\times 10^6$. 
\end{lemma}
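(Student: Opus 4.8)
The plan is to decompose the dynamic regret $\sum_{t\in\calI}(r_t(\pi_t^*(x_t))-r_t(a_t))$ along the same lines as the analysis of \minimonster and \AdaILTCB, but carefully tracking the contribution of the replay phases via the variance terms $\Kprime\nu_m$ for $m\in M_t\cup\{j\}$. First I would fix the epoch $i$ and the block $j$, and condition on a high-probability event (obtained from a Freedman-type concentration/variance bound, in the same style as \citep{AgarwalHsKaLaLiSc14,LuoWA018}) under which, for every interval $\calJ$ generated by the algorithm and every policy $\pi$, the empirical quantities $\avgR_\calJ(\pi)$, $\whReg_\calJ(\pi)$, $\whV_\calJ(Q,\nu,\pi)$ are all close to their expected counterparts up to deviation terms governed by the variance and $C_0$. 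This is the step that produces the $|\calI|\alpha_\calI$ term, since $\alpha_\calI=\sqrt{2KC_0/|\calI|}\log_2 T$ is exactly the per-round deviation scale on an interval of length $|\calI|$ once one pays for the minimum exploration probabilities $\nu_m\asymp 1/\sqrt{K2^mL}$ used on $\calI$.

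The core of the argument is a per-round regret decomposition. For each $t\in\calI$, write the instantaneous dynamic regret $r_t(\pi_t^*(x_t))-r_t(a_t)$ and take expectations over $a_t\sim p_t(\cdot|x_t)$, where $p_t$ is the explicit mixture $\one\{|M_t|=0\}Q_{(i,j)}^{\nu_j}(\cdot|x_t)+\one\{|M_t|\neq 0\}\frac{1}{|M_t|}\sum_{m\in M_t}Q_{(i,m)}^{\nu_m}(\cdot|x_t)$. Using the smoothing, the expected reward of $a_t$ differs from $\sum_{\pi}p_t(\pi)\calR_t(\pi)$-type quantities by $O(K\nu_m)$ per active index, which is where $\sum_{m\in M_t\cup\{j\}}\Kprime\nu_m$ comes from (the extra $\log_2 T=\Kprime/K$ factor absorbs a union bound over the at most $\log_2 T$ possible replay indices). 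Then I would bound the ``exploitation'' part: the expected static regret of $Q_{(i,j)}$ (and of each replayed $Q_{(i,m)}$) on interval $\calI$ against $\pi_\calI^*$. Here I invoke the OP constraint~\eqref{eqn:op1}, which controls $\sum_\pi Q_{(i,j)}(\pi)\whReg_{\calB_{(i,j-1)}}(\pi)$, and then bridge from the empirical regret on $\calB_{(i,j-1)}$ to the expected regret on $\calI$. The bridge has two pieces: (a) moving from $\calB_{(i,j-1)}$-empirical to $\calB_{(i,j-1)}$-expected costs a concentration term (again $\alpha$-scale, using the variance control~\eqref{eqn:op2}), and (b) moving from the expected regret on $\calB_{(i,j-1)}$ to the expected regret on $\calI$ costs either the total variation $|\calI|\Delta_\calI$ plus $|\calB_{(i,j-1)}|$-scale variation, or, when the discrepancy is large, is exactly captured by the excess regret $\varepsilon_\calI$. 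Finally, converting from regret against $\pi_\calI^*$ (the best on-interval policy) to regret against the true per-round optima $\pi_t^*$ again costs $O(|\calI|\Delta_\calI)$ by a standard telescoping/total-variation argument ($|\calR_t(\pi)-\calR_\calI(\pi)|$ summed over $t\in\calI$ is bounded by the variation inside $\calI$, hence by $|\calI|\Delta_\calI$). Collecting all pieces, whenever $\varepsilon_\calI\le D_3\alpha_\calI$ the excess-regret contribution is absorbed into the $|\calI|\alpha_\calI$ term (with the constant $D_3$ chosen large enough to swallow all the accumulated constants from the OP bounds $C$, the concentration constants, and the $8$ in the definition of $\varepsilon_\calI$), and otherwise it appears explicitly as $|\calI|\varepsilon_\calI\one\{\varepsilon_\calI>D_3\alpha_\calI\}$.

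The step I expect to be the main obstacle is the bridge (b) above combined with correctly accounting for the replayed distributions $Q_{(i,m)}$: the OP guarantee~\eqref{eqn:op1} only directly bounds the empirical regret of $Q_{(i,j)}$ on the data block $\calB_{(i,j-1)}$ it was trained on, but during a replay phase of index $m<j$ the algorithm plays $Q_{(i,m)}$, whose guarantee is relative to $\calB_{(i,m-1)}$, a \emph{shorter, earlier} block. One must argue that, on the high-probability event and when no non-stationarity test has fired yet, the empirical regret of each $Q_{(i,m)}$ stays controlled on the later interval $\calI\subseteq$ block $j$ — i.e. that the \testblock and \testreplay conditions~\eqref{eqn:test_cond1}--\eqref{eqn:test_cond6} not having triggered yet implies the relevant empirical regrets and variances on $\calI$ are comparable (up to the constants $D_1,D_2,D_4,D_5$ and a factor $4$) to those on the blocks where the distributions were computed. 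This is precisely why the tests are stated with cross-block comparisons of $\whReg$ and $\whV$, and chaining these inequalities with the $8\whReg_{\calB_{(i,j-1)}}(\pi)$ offset in the definition of $\varepsilon_\calI$ — so that anything \emph{not} caught by a test is, by definition, within the excess-regret budget — is the delicate bookkeeping that makes the lemma work. The rest is routine concentration and constant-chasing.
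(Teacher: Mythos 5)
Your plan follows essentially the same route as the paper's proof of Lemma~\ref{lem:interval regret}: a martingale (Azuma/Freedman) step to pass to conditional expectations, the per-round mixture decomposition paying $K\nu_m$ for every index in $M_t\cup\{j\}$, the within-$\calI$ variation argument, chaining the passed \testblock condition~\eqref{eqn:test_cond4} with the OP constraint~\eqref{eqn:op1} to handle the replayed $Q_{(i,m)}$, and the split of $\varepsilon_\calI$ against the threshold $D_3\alpha_\calI$. The only caveat is that your bridge (a)/(b) through the \emph{expected} regret on $\calB_{(i,j-1)}$ would incur the uncontrolled cross-block variation; the paper instead applies the definition $\Reg_\calI(\pi)\le 8\whReg_{\calB_{(i,j-1)}}(\pi)+\varepsilon_\calI$ unconditionally — which is exactly the fallback you describe — so your argument goes through once that redundant concentration detour is dropped.
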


\begin{proof}
By Azuma's inequality and a union bound over all $T^2$ possible intervals, we have that with probability $1-\delta$, for any interval $\calI$, 
\begin{align}
\sum_{t\in \calI} (r_t(\pi_t^*(x_t)) - r_t(a_t)) 
\leq \sum_{t\in \calI} \E_t[r_t(\pi_t^*(x_t)) - r_t(a_t)]  + \order\left(\sqrt{|\calI| \log(T^2/\delta)}\right), \label{bounding Et Reg step2}
\end{align}
where $\E_t$ is the conditional expectation given everything up to Step 1 of the algorithm of round $t$.
It remains to bound each $\E_t[r_t(\pi_t^*(x_t)) - r_t(a_t)]$.
Depending on the case of replay or non-replay, this term can be written as
\begin{align*}
    \E_t\left[r_t(\pi^*_t(x_t)) - r_t(a_t) \right] 
    &= \begin{cases}
              \displaystyle \sum_{a\in[K]} \sum_{m\in M_t} \frac{Q_{m}^{\nu_m}(a|x_t)}{|M_t|}\E_t\left[r_t(\pi^*_t(x_t)) - r_t(a) \right],  &\text{if $M_t\neq \emptyset$,} \\
               \displaystyle \sum_{a\in[K]} Q_{j}^{\nu_j}(a|x_t) \E_t\left[ r_t(\pi^*_t(x_t)) - r_t(a)\right], &\text{if $M_t= \emptyset$.}
    \end{cases}
\end{align*}    
Now observe that for any $Q$ and $\nu$, by definition of $Q^{\nu}$ we have
\[
\sum_{a\in[K]} Q^{\nu}(a | x_t)\E_t\left[ r_t(\pi^*_t(x_t)) - r_t(a)\right] \leq K\nu + \sum_{\pi\in\Pi} Q(\pi) \Reg_t(\pi).
\]
So we continue to bound $\E_t\left[r_t(\pi^*_t(x_t)) - r_t(a_t) \right]$ by           
\begin{align}
\sum_{m\in M_t\cup \{j\}} K\nu_m + 
         \begin{cases}
         \frac{1}{|M_t|}\sum_{m\in M_t} \sum_{\pi\in\Pi}  Q_{m}(\pi) \Reg_t(\pi), &\text{if $M_t\neq \emptyset$,} \\
         \sum_{\pi\in\Pi}  Q_{j}(\pi) \Reg_t(\pi), &\text{if $M_t= \emptyset$.}
         \end{cases}
\label{bounding Et Reg step1}
\end{align}                  
Next note that for any $t \in \calI$ and $m \in \{1, \ldots, j\}$, we have
\begin{align}
    \sum_{\pi \in \Pi} Q_{m}(\pi) \Reg_t(\pi)  \nonumber 
    &\leq \sum_{\pi \in \Pi} Q_{m}(\pi) \Reg_\calI(\pi) + \order(\Delta_\calI)   \tag{Lemma~\ref{lem:variation and regret}}   \nonumber \\
    &=  \sum_{\pi \in \Pi} 8Q_{m}(\pi)\whReg_{\calB_{j-1}}(\pi) + \order(\Delta_\calI) +  \varepsilon_{\calI} \tag{definition of $\varepsilon_\calI$}   \nonumber \\
    &\leq \sum_{\pi \in \Pi} 8Q_{m}(\pi)\left(4\whReg_{\calB_{m-1}}(\pi)+ D_4\Kprime\nu_{m}\right) + \order(\Delta_\calI) +  \varepsilon_{\calI} \tag{Eq.~\eqref{eqn:test_cond4} does not hold}    \nonumber \\
    &\leq  \order(\Kprime\nu_m + \Delta_\calI) +  \varepsilon_{\calI} \tag{Eq.~\eqref{eqn:op1}}  \nonumber \\
    &\leq  \order(\Kprime\nu_m + \alpha_\calI + \Delta_\calI) +  \varepsilon_{\calI}\mathbf{1}\left\{\varepsilon_{\calI} > D_3\alpha_\calI \right\}. \nonumber  
\end{align}
In fact, the above holds for $m=0$ too since the left hand side is at most $1\leq 4\Kprime\nu_0$.
Combining this inequality with Eq.~\eqref{bounding Et Reg step1} and~\eqref{bounding Et Reg step2}, and noting that the term $\sqrt{|\calI|\log(T^2/\delta)}$ is of order $\mathcal{O}\left(|\calI|\alpha_\calI\right)$ finish the proof.
%
%
\end{proof}


\subsection{Dynamic regret for a block}\label{sec: block reg}
In this section, we bound the dynamic regret of some block $j>0$ within epoch $i$.
This block can be formally written as 
\begin{align}
      \calJ \triangleq  \left[\tau_i, \tau_{i+1}-1\right] \cap \left[\tau_i+2^{j-1}L, \tau_i+2^j L-1\right].    \label{eqn: Js definition}
\end{align}

The idea is to divide $\calJ$ into several intervals, 
apply Lemma~\ref{lem:interval regret} to each of them,
and finally sum up the regret.
Importantly, we need to divide $\calJ$ in a careful way according to the following lemma,
so that the variation on each interval is bounded by its excess regret threshold,
while at the same time the number of intervals is not too large. 
Note that this division only happens in the analysis.

\begin{lemma}
\label{lemma: division}
There is a way to partition any interval $\calJ$ into $\calI_1 \cup \calI_2 \cup \cdots \cup \calI_\Gamma$, such that $\Delta_{\calI_k} \leq \alpha_{\calI_k}, \forall k\in [\Gamma]$, and $\Gamma=\mathcal{O}(\min\{S_\calJ,  (KC_0)^{-\frac{1}{3}}\Delta_\calJ^{\frac{2}{3}}|\calJ|^{\frac{1}{3}} +1 \})$. 
\end{lemma}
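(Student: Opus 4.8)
The plan is to build the partition by a single left-to-right greedy sweep. Write $\beta \defeq 2KC_0(\log_2T)^2$, so that the desired property $\Delta_{\calI_k}\le\alpha_{\calI_k}$ is exactly $\Delta_{\calI_k}^2|\calI_k|\le\beta$. I would start a piece at the left endpoint of $\calJ$ and keep appending the next time step as long as the resulting interval $\calI$ still satisfies $\Delta_\calI^2|\calI|\le\beta$; the first time appending would violate this, I close the current piece and open a new one at the next step. Since $\Delta_\calI^2|\calI|$ is non-decreasing as $\calI$ grows to the right, and since any singleton has $\Delta=0\le\alpha$ (so the sweep can always place at least one step and never gets stuck), this yields a well-defined partition $\calI_1\cup\cdots\cup\calI_\Gamma$ with $\calI_k=[s_k,e_k]$, $s_1$ the start of $\calJ$, $s_{k+1}=e_k+1$, $e_\Gamma$ the end of $\calJ$, and every piece satisfying the required inequality. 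By construction every non-final piece ``overshoots'': $\Delta_{[s_k,e_k+1]}^2(|\calI_k|+1)>\beta$ for all $k<\Gamma$. It then remains to bound $\Gamma$ two ways.

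First, $\Gamma\le S_\calJ$. For each $k<\Gamma$ the overshoot forces $\Delta_{[s_k,e_k+1]}>0$, hence a distribution switch $\calD_\tau\neq\calD_{\tau-1}$ occurs for some $\tau$ in the index range $[s_k+1,\,e_k+1]$ (recall $\Delta_{[a,b]}=\sum_{\tau=a+1}^{b}\TVD{\calD_\tau-\calD_{\tau-1}}$). Because $s_{k+1}=e_k+1$, these index ranges are pairwise disjoint and all lie in $[s_1+1,\,e_\Gamma]$, so the $\Gamma-1$ non-final pieces charge $\Gamma-1$ distinct switches of $\calJ$; since $\calJ$ contains only $S_\calJ-1$ such switches, $\Gamma-1\le S_\calJ-1$.

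Second, $\Gamma=O\big((KC_0)^{-1/3}\Delta_\calJ^{2/3}|\calJ|^{1/3}+1\big)$. If $\Gamma=1$ this is trivial, so assume $\Gamma\ge2$. Using $|\calI_k|+1\le 2|\calI_k|$, the overshoot gives $\Delta_{[s_k,e_k+1]}>\sqrt{\beta/(2|\calI_k|)}$ for every $k<\Gamma$. As in the previous paragraph, the index sets $[s_k+1,e_k+1]$ are disjoint across $k$ and contained in $[s_1+1,e_\Gamma]$, so $\sum_{k<\Gamma}\Delta_{[s_k,e_k+1]}\le\Delta_\calJ$. Hence $\sqrt{\beta/2}\,\sum_{k<\Gamma}|\calI_k|^{-1/2}<\Delta_\calJ$. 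Applying Jensen's inequality to the convex decreasing map $x\mapsto x^{-1/2}$ together with $\sum_{k<\Gamma}|\calI_k|\le|\calJ|$ gives $\sum_{k<\Gamma}|\calI_k|^{-1/2}\ge (\Gamma-1)^{3/2}|\calJ|^{-1/2}$, so $(\Gamma-1)^{3/2}<\sqrt2\,\Delta_\calJ|\calJ|^{1/2}\beta^{-1/2}$, i.e.\ $\Gamma-1=O(\Delta_\calJ^{2/3}|\calJ|^{1/3}\beta^{-1/3})=O((KC_0)^{-1/3}\Delta_\calJ^{2/3}|\calJ|^{1/3})$ because $\log_2T\ge1$. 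Combining the two bounds gives the claimed $\Gamma=O(\min\{S_\calJ,(KC_0)^{-1/3}\Delta_\calJ^{2/3}|\calJ|^{1/3}+1\})$.

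The one step I expect to require care is the disjointness observation underpinning both counting arguments: each of them sums a per-piece quantity (the variation of the overshoot interval) against a global budget ($S_\calJ$ or $\Delta_\calJ$), and this is legitimate only because the next piece begins exactly one step after the previous one ends, so the overshoot intervals overlap in at most a single endpoint while their variation terms—indexed by the later element of each consecutive pair—are genuinely non-overlapping. Everything else is the routine greedy-plus-Jensen computation familiar from variation-budget arguments, plus the trivial check (a singleton has zero variation) that keeps the greedy sweep well-posed.
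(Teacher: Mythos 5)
Your proof is correct and follows essentially the same route as the paper's: a left-to-right greedy partition whose non-final pieces overshoot the threshold, a charge of each overshoot against the variation budget $\Delta_\calJ$ (via Jensen rather than the paper's H\"older inequality, which is equivalent here), and a switch-counting argument for the $S_\calJ$ bound. The only cosmetic difference is that you run the greedy sweep against the full threshold $\alpha_\calI$ while the paper's procedure uses the slightly smaller $\sqrt{KC_0/|\calI|}$; both yield the stated guarantee.
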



For the first $\Gamma-1$ intervals of this partition, we apply Lemma~\ref{lem:interval regret} to each of them. 
Note that the term $|\calI_k|\Delta_{\calI_k}$ in Lemma~\ref{lem:interval regret} can be absorbed by the term $|\calI_k|\alpha_{\calI_k}$ by our partition property. 
%
%
Summing up the bounds from Lemma~\ref{lem:interval regret}, we get the following dynamic regret bound for these $\Gamma-1$ intervals:
\begin{align}
    &\sum_{k=1}^{\Gamma-1} \order\left( \left(\sum_{t\in\calI_k}\sum_{m\in M_t\cup\{j\}}\Kprime \nu_m \right)+  |\calI_k|\alpha_{\calI_k} + |\calI_k|\Delta_{\calI_k} + |\calI_k|\varepsilon_{\calI_k}\one\{\varepsilon_{\calI_k} > D_3\alpha_{\calI_k}\}\right) \nonumber \\
    & \leq\underbrace{ \sum_{k=1}^{\Gamma-1}\sum_{t\in \calI_k} \sum_{m\in M_t\cup\{j\}}\order( \Kprime\nu_m)}_{\term{1}} + \underbrace{\sum_{k=1}^{\Gamma-1} \order(|\calI_k|\alpha_{\calI_k})}_{\term{2}}
    +\underbrace{\sum_{k=1}^{\Gamma-1} \order(|\calI_k|\varepsilon_{\calI_k}\one\{\varepsilon_{\calI_k} > D_3\alpha_{\calI_k}\})}_{\term{3}}. \label{eqn: term decomposition}
\end{align}
For the last interval in the block, it is possible that it was interrupted by a restart, 
which makes the analysis trickier and we defer the details to Appendix~\ref{appendix: first two terms}.  Further bounding \term{1} and \term{2} is relatively straightforward by the definition of $\nu_m$ and $\alpha_{\calI_k}$ and also the construction of $M_t$ (see Appendix~\ref{appendix: bound individual terms}). 
For \term{3}, the idea is that this term is nonzero only when $\varepsilon_{\calI_k}$ is large, which implies that the distribution in $\calI_k$ is quite different from that in $\calB_{j-1}$. In this case we will show that as long as the algorithm starts a replay phase with some ``correct'' index within $\calI_k$, it will detect the non-stationarity with high probability and restart the algorithm. Thus we only need to bound the regret accumulated before this ``correct'' replay phase appears. We provide the complete proof in Appendix~\ref{appendix: detections}, which is the most important part of the analysis. Combining the bounds for these three terms,
we eventually arrive at the following lemma: 
\begin{lemma}\label{lem:block_regret}
With probability $1-\delta$, the following holds for any block $\calJ$ with block index $j>0$: 
\begin{align*}
     \sum_{t\in \calJ}(r_t(\pi_t^*)-r_t(a_t)) =  \otil\left(\min \left\{\sqrt{KC_0S_{\calJ}2^j L},  \sqrt{KC_0 2^j L} + (KC_0)^{\frac{1}{3}}\Delta_{\calJ}^{\frac{1}{3}}(2^j L)^{\frac{2}{3}}  \right\}\right). 
\end{align*}
\end{lemma}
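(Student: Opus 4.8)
The plan is to bound the block regret by summing the contributions of \term{1}, \term{2}, and \term{3} from the decomposition in Eq.~\eqref{eqn: term decomposition}, together with the regret of the (possibly interrupted) last interval of the partition produced by Lemma~\ref{lemma: division}. First I would recall that $\nu_m = \sqrt{C_0/(K 2^m L)}$, so that a replay phase of index $m$ lasts $2^m L$ rounds and contributes a per-round penalty of order $\Kprime \nu_m = \otil(\sqrt{K C_0 2^{-m} L^{-1}})$; hence the total contribution of any single replay interval of index $m$ is $\otil(\sqrt{K C_0 2^m L})$, which is at most $\otil(\sqrt{K C_0 2^j L})$ since $m < j$. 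More carefully, I would bound \term{1} by noting that $\sum_{t \in \calJ} \Kprime \nu_j = \otil(\sqrt{K C_0 2^j L})$ for the $m = j$ term, and for the replay terms, sum over all replay intervals started within the block; using the Bernoulli probability $\frac{1}{L} 2^{-j/2} 2^{-m/2}$ and a high-probability bound on how many replay phases of each index are started, the total cost of replays is also $\otil(\sqrt{K C_0 2^j L})$. For \term{2}, $\sum_{k} |\calI_k| \alpha_{\calI_k} = \sum_k |\calI_k| \sqrt{2 K C_0 / |\calI_k|} \log_2 T = \log_2 T \sqrt{2 K C_0} \sum_k \sqrt{|\calI_k|}$, and by Cauchy--Schwarz this is at most $\otil(\sqrt{K C_0 \Gamma |\calJ|})$.

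Next I would plug in the two bounds on $\Gamma$ from Lemma~\ref{lemma: division}: using $\Gamma = \order(S_\calJ)$ gives $\otil(\sqrt{K C_0 S_\calJ 2^j L})$ (since $|\calJ| \leq 2^j L$), while using $\Gamma = \order((K C_0)^{-1/3} \Delta_\calJ^{2/3} |\calJ|^{1/3} + 1)$ gives, after distributing the square root, a term of order $\sqrt{K C_0 |\calJ|}\cdot (K C_0)^{-1/6} \Delta_\calJ^{1/3} |\calJ|^{1/6} = (K C_0)^{1/3} \Delta_\calJ^{1/3} |\calJ|^{2/3}$ plus the stationary term $\sqrt{K C_0 |\calJ|} = \otil(\sqrt{K C_0 2^j L})$. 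Taking the minimum over the two choices of $\Gamma$ reproduces exactly the claimed $\min\{\sqrt{K C_0 S_\calJ 2^j L}, \sqrt{K C_0 2^j L} + (K C_0)^{1/3} \Delta_\calJ^{1/3} (2^j L)^{2/3}\}$ form. The contribution of the last (possibly interrupted) interval is handled separately (Appendix~\ref{appendix: first two terms}): its length is at most $2^j L$, so even a trivial bound on its regret, or applying Lemma~\ref{lem:interval regret} directly, is absorbed into the same order.

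The main obstacle — and the part I expect to require real work — is bounding \term{3}, the sum $\sum_k |\calI_k| \varepsilon_{\calI_k} \one\{\varepsilon_{\calI_k} > D_3 \alpha_{\calI_k}\}$. The intuition, sketched in the excerpt, is that whenever $\varepsilon_{\calI_k}$ exceeds the threshold, the distribution on $\calI_k$ differs substantially from the distribution summarized by $\calB_{j-1}$, so a replay phase with an appropriately-chosen index $m$ (roughly the index whose length matches the scale at which the non-stationarity becomes detectable, i.e. $2^m L \approx |\calI_k|$ or the scale dictated by $\varepsilon_{\calI_k}$) will, once started, cause \testreplay to fail and trigger a restart. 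One must argue: (i) with the Bernoulli probability $\frac{1}{L} 2^{-j/2} 2^{-m/2}$ of starting a replay of index $m$ at each step, a ``correct'' replay phase is started within $\calI_k$ with good probability — this is where the multi-scale randomization and the balance between the replay probabilities and the excess regret magnitude must be matched up, using the concentration bounds that relate empirical regret/variance on $\calA$ to expected regret; (ii) conditioned on such a replay starting, \testreplay indeed returns \textit{Fail} with high probability, via the concentration inequalities controlling $\whReg$ and $\whV$ on the replay interval $\calA$ versus on $\calB_{(i,j-1)}$; and (iii) therefore the regret charged to \term{3} before the detection is only what accumulates up to the correct replay phase, which can be bounded by $\otil(\sqrt{K C_0 2^j L})$ per ``bad'' interval while simultaneously the number of such bad intervals before the (inevitable) restart is controlled. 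Carefully making (i)–(iii) quantitative — in particular verifying that the price paid in regret while waiting for the correct replay to appear does not overwhelm the budget, and dealing with the fact that $\varepsilon_{\calI_k}$ can take a range of magnitudes requiring different replay scales — is the crux, and it is relegated to Appendix~\ref{appendix: detections} in the paper. Everything else is routine algebra with $\nu_m$, $\alpha_{\calI_k}$, and the two bounds on $\Gamma$.
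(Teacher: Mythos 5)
Your outline of \term{1} and \term{2} matches the paper's, but the proof as written has a genuine gap exactly where you flag it: the bound on \term{3} is never actually established, and the accounting you sketch for it would not give the right order. You propose charging ``$\otil(\sqrt{KC_0 2^jL})$ per bad interval'' and then controlling the number of bad intervals; since the number of bad intervals can be of order $\Gamma$, this yields $\Gamma\sqrt{KC_0 2^jL}$, which overshoots the target $\otil(\sqrt{KC_0\,\Gamma\, 2^jL})$. The paper's argument is different and is the real content of the lemma: for each bad interval it picks the index $m_k$ by the size of the excess regret ($D_3\Kprime\nu_{m_k+1}<\varepsilon_{\calI_k'}\le D_3\Kprime\nu_{m_k}$, Lemma~\ref{lem:detection}, which also gives $2^{m_k}L<|\calI_k'|$ and that any replay of index $m_k$ started early enough in the interval forces a restart, deterministically under \event{1} via Lemma~\ref{lem:2 and 3 hold}); it then splits the charge $|\calI_k|\varepsilon_{\calI_k'}\le 2^{m_k}L\cdot D_3\Kprime\nu_{m_k}+(|\calI_k|-2^{m_k}L)D_3\Kprime\nu_{m_k}$. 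The first (``matched'') parts sum by Cauchy--Schwarz to $\otil(\sqrt{KC_0\Gamma 2^jL})$, while the second (``waiting'') parts are bounded \emph{globally over the whole block} by a single $\otil(\log(1/\delta)\sqrt{KC_0 2^jL})$: every step in a chargeable region costs $D_3\Kprime\nu_{m_k}$ but is simultaneously an independent chance (probability $q_{m_k}\propto\nu_{m_k}$) to launch the detecting replay, so the probability the block survives until the accumulated waiting charge reaches $z$ decays like $\exp(-z/C^*)$ with $C^*=(\log_2T)D_3\sqrt{KC_0 2^jL}$. Your items (i)--(iii) gesture at this but do not supply the decomposition, the matching of $\nu_{m_k}$ against $q_{m_k}$, or the aggregation across intervals, and these are precisely what makes the bound come out right.

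A second, smaller but real issue is the last interval of the partition. Its trivial regret bound is up to $|\calI_\Gamma|\le 2^{j-1}L$, which is \emph{not} absorbed into $\otil(\sqrt{KC_0 2^jL})$, and applying Lemma~\ref{lem:interval regret} ``directly'' is delicate because the realized endpoint of $\calI_\Gamma$ (hence $\alpha_{\calI_\Gamma}$ and $\varepsilon_{\calI_\Gamma}$) depends on when the random restart fires, which would contaminate the conditioning needed in the survival-probability argument above. The paper resolves this by defining the fictitious (planned) block $\calJ'$ and its deterministic partition, replacing $\alpha_{\calI_\Gamma},\varepsilon_{\calI_\Gamma}$ by $\alpha_{\calI_\Gamma'},\varepsilon_{\calI_\Gamma'}$, which are fixed given the history before block $j$; some device of this kind is needed and is missing from your proposal.
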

Note that $2^jL$ is the length of block $\calJ$ unless there is a restart triggered within this block, in which case the length is smaller.

\subsection{Combining regret over blocks and epochs}\label{sec: overall reg}
We finally sum up the dynamic regret over blocks and epochs. To this end, we reintroduce the subscripts $i,j$ in our notations, and write epoch $i$ as
$
  \calE_i = [\tau_i, \tau_{i+1}-1]
$
and block $j$ (for $j>0$) in epoch $i$ as 
$
   \calJ_{ij} = [\tau_i + 2^{j-1}L, \tau_i + 2^jL-1] \cap \calE_i. 
$

\paragraph{Dynamic regret for an epoch.}
The last block index in epoch $i$ is $\max\{0,\lceil \log_2 (|\calE_i|/L) \rceil\}$, which we denote by $j^*$. Using Lemma~\ref{lem:block_regret}, we combine the regret over all blocks in epoch $i$ and upper bound the regret of epoch $i$ simultaneously by (using the bound in terms of number of switches)
\begin{align*}
   & \otil\left(L +  \sum_{j=1}^{j^*} \sqrt{KC_0 S_{\calJ_{ij}}2^j L}  \right) 
= \otil\left(KC_0  + \sqrt{KC_0 \sum_{j=1}^{j^*}S_{\calJ_{ij}}\sum_{j=1}^{j^*} 2^jL} \right)  \tag{Cauchy-Schwarz} \\ 
&= \otil\left(KC_0 + \sqrt{KC_0 (S_{\calE_i}+j^*)|\calE_i|} \right) 
 = \otil\left( \sqrt{KC_0 S_{\calE_i}|\calE_i|} \right)
\end{align*}
and similarly by (using the bound in terms of variation and H\"{o}lder inequality)
\begin{align*}
\otil \left(L +  \sum_{j=1}^{j^*} \sqrt{KC_0 2^j L} +  \sum_{j=1}^{j^*} (KC_0)^{\frac{1}{3}}\Delta_{\calJ}^{\frac{1}{3}}(2^j L)^{\frac{2}{3}} \right) 
= \otil \left(\sqrt{KC_0 |\calE_i|}+ (KC_0)^{\frac{1}{3}}\Delta_{\calE_i}^{\frac{1}{3}}|\calE_i|^{\frac{2}{3}} \right).
\end{align*}

\paragraph{Combining regret over epochs. } 
For the last step of combining all epochs, we make use the following lemma which bounds the number of epochs (see Appendix~\ref{appendix:restart} for the proof). 
\begin{lemma}
\label{lemma: number of restarts}
Denote the total number of epochs by $E$. 
With probability at least $1-\delta/2$, 
we have $E\leq \min\{S, (KC_0)^{-\frac{1}{3}}\Delta^{\frac{2}{3}}T^{\frac{1}{3}}+1\}$.
\end{lemma}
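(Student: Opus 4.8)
The plan is to show that each restart (other than the very first epoch) is \emph{charged} to either a genuine distribution switch or a genuine chunk of total variation, so that the number of restarts cannot exceed $S$ in the first case and $\otil((KC_0)^{-1/3}\Delta^{2/3}T^{1/3})$ in the second. The key observation is that a restart is triggered only when \testreplay\ or \testblock\ fails, and each test failure involves a policy $\pi$ whose empirical regret or empirical variance differs substantially across two data segments within the \emph{same} epoch. On the high-probability event where all the relevant concentration inequalities hold (the same event underlying Lemma~\ref{lem:interval regret} and the analysis in Appendix~\ref{appendix: detections}), such a discrepancy in \emph{empirical} quantities forces a corresponding discrepancy in \emph{expected} quantities, i.e.\ $\max_\pi |\calR_{\calA}(\pi) - \calR_{\calA'}(\pi)|$ is bounded below by something of order $\Kprime \nu_m$ (or the variance analogue) minus the concentration slack $\alpha$. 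But the gap between the expected regret profiles on two intervals is controlled by the total variation between the intervals (via Lemma~\ref{lem:variation and regret}, which relates $\Reg_t$ and $\Reg_\calI$ through $\Delta_\calI$). Hence a test failure within epoch $i$ implies that the variation $\Delta_{\calE_i}$ accumulated during that epoch is at least some threshold $\Omega(\Kprime \nu_{m}- \alpha)$; choosing the scale carefully (the test uses index $m$ with $\nu_m \asymp 1/\sqrt{K 2^m L}$ and the segment length is $\asymp 2^m L$) this threshold is $\Omega(\sqrt{KC_0 \cdot 2^m L}/(2^m L)) $ type, i.e.\ it matches $\alpha_{\calE_i}$ up to constants.

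Concretely, I would proceed in these steps. First, condition on the high-probability event $\mathcal{E}$ (probability $\ge 1-\delta/2$) on which all empirical regrets/variances over all $\otil(1)$ relevant sub-intervals and all policies are within the Freedman/Azuma slack of their expectations; this is a union bound over $T^2$ intervals and $|\Pi|$ policies, exactly as in the proof of Lemma~\ref{lem:interval regret}. Second, consider any epoch $i$ that ends by a restart and let $\calA, \calB_{(i,j-1)}$ (resp.\ $\calB_{(i,j)}, \calB_{(i,k)}$) be the two segments witnessing the failed test. Using the definition of the tests (Eqs.~\eqref{eqn:test_cond1}--\eqref{eqn:test_cond6}) together with the concentration on $\mathcal{E}$, deduce that there is a policy whose expected regret profiles (or expected variances) on the two segments differ by $\Omega(\Kprime \nu_m)$ after subtracting $O(\alpha)$ slack; since both segments lie inside epoch $i$ and the shorter has length $\gtrsim 2^m L$ while $\nu_m \asymp 1/\sqrt{K2^m L}$, this difference is $\Omega$ of the threshold quantity for that length. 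Third, invoke Lemma~\ref{lem:variation and regret} to convert the difference of expected regret profiles into a lower bound on $\Delta_{\calE_i}$: roughly $\Delta_{\calE_i} \gtrsim \sqrt{KC_0 / (2^m L)}$ whenever the failing segment has length $\asymp 2^m L$. Fourth, sum over epochs: in the switch-based bound, a failure requires $S_{\calE_i} \ge 2$ (a non-stationarity actually occurred inside $\calE_i$ — otherwise on $\mathcal{E}$ no test can fail, which is precisely the ``no false restart'' guarantee), and since the epochs partition $[T]$ we get $E - 1 \le \sum_i (S_{\calE_i} - 1) \le S - 1$. In the variation-based bound, each of the $E-1$ restarting epochs contributes at least some $\delta_i$ to total variation with $\sum_i \delta_i \le \Delta$, where $\delta_i \gtrsim \sqrt{KC_0/|\calE_i|}$; combined with $\sum_i |\calE_i| = T$, a convexity/Cauchy–Schwarz argument (the same shape as in Lemma~\ref{lemma: division}) yields $E \lesssim (KC_0)^{-1/3}\Delta^{2/3}T^{1/3} + 1$.

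The main obstacle I anticipate is Step 2–3: carefully establishing that a failed test \emph{provably} implies a lower bound on the \emph{expected} non-stationarity of the correct order, and not a weaker one. The tests are stated in terms of empirical regret and empirical variance of $\whReg$ relative to $\whpi$, which themselves fluctuate; one has to be careful that the ``$4\times$'' multiplicative slack and the additive $D_1 \Kprime \nu_m$ in Eqs.~\eqref{eqn:test_cond1}--\eqref{eqn:test_cond3} are not swamped by concentration error, i.e.\ that the constants $D_1, D_2, D_4, D_5$ are large enough relative to the Azuma constants so that a failure cannot happen ``by noise'' on $\mathcal{E}$. This is where the specific numeric constants in the tests are used. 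A secondary subtlety is that an epoch may be interrupted mid-block, so the witnessing segment $\calA$ may be shorter than its nominal length $2^m L$; but since \testreplay\ is only invoked at the natural end $s' = t$ of a replay interval (Line~\ref{line: check restart1}), the segment does have its full intended length $2^m L$ at the moment of the test, so this is not actually an issue for the replay test, and for \testblock\ the block $\calB_{(i,j)}$ likewise has its full length when the test runs. Modulo these bookkeeping points, the argument is a fairly standard ``each restart costs real non-stationarity'' accounting, and the two bounds come out by the switch-count and the $2/3$-power variation calculations respectively.
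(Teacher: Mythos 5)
Your overall plan is, in essence, the paper's own proof run in the contrapositive direction: the paper establishes Lemma~\ref{lem:no rerun until} (if $\Delta_{[\tau_i,t]}\le\sqrt{KC_0/(t-\tau_i+1)}$ then neither \testreplay nor \testblock can fail at time $t$ under \event{1}), which is exactly your ``each restart is charged to real non-stationarity'' claim ($S_{\calE_i}\ge 2$ and $\Delta_{\calE_i}>\sqrt{KC_0/|\calE_i|}$ for every epoch that ends in a restart), and your Step 4 ($E\le S$ by counting switches, and the H\"older computation giving $E-1\le (KC_0)^{-1/3}\Delta^{2/3}T^{1/3}$) coincides with the paper's finishing argument in Appendix~\ref{appendix:restart}.

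The genuine gap is in your Steps 2--3, and it is more than a matter of constants. The event you condition on --- all empirical regrets/variances within a Freedman/Azuma slack of order $\alpha$ of their expectations, uniformly over intervals and policies --- is not available for the importance-weighted quantities. By Lemma~\ref{lem:reward_deviation2}, the deviation of $\avgR_{\calA}(\pi)$ from $\calR_{\calA}(\pi)$ on a replay segment of index $m$ is $\frac{\nu_m}{|\calA|\log_2 T}\sum_{t\in\calA}U_t(\pi)+\frac{C_0\log_2 T}{\nu_m|\calA|}$: the second term is indeed $\Theta(\Kprime\nu_m)\approx\alpha_{\calA}$, but the first term is only of that order if the per-round variance $U_t(\pi)$ is bounded by something like $O(K+\whReg(\pi)/\nu_m)$ for \emph{every} policy, whereas a priori it can be as large as $1/\nu_m$, which would make the slack constant order and swamp $\Kprime\nu_m$ entirely. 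Establishing that variance bound on the tested segment is the real content of the argument: it needs the OP constraint \eqref{eqn:op2} (which certifies the variance only on \emph{past empirical} data), the variance conditions \eqref{eqn:test_cond3} and \eqref{eqn:test_cond6} of the tests (whose own failure must be handled as a separate case, via Lemma~\ref{lem:variation and TVD} and the variance half of \event{1}), the transfer of expected variance between intervals through Lemma~\ref{lem:variation and TVD} at a price $\Delta/\nu$, and, for blocks, an induction over $j$ --- this is precisely the paper's Lemma~\ref{lem:Reg gap for B_j}, with Lemmas~\ref{lem:2 and 3 hold} and~\ref{lem: regret relation with [1,e]} playing the analogous role for replay segments; these lemmas also show that empirical and expected regrets agree only up to a multiplicative factor $2$ plus slack, which is what the factor $4$ in the test conditions is designed to absorb. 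Without this variance-transfer machinery, the inference ``empirical discrepancy of order $\Kprime\nu_m$ $\Rightarrow$ expected discrepancy $\Rightarrow$ lower bound on $\Delta_{\calE_i}$'' does not go through as sketched; with it supplied, your charging argument is exactly the paper's and yields the stated bound.
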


Therefore, summing up the previous bounds over all epochs,
we arrive at the final dynamic regret bound, which is the minimum of the following two:
\begin{align*}
    \otil\left(\sum_{i=1}^{E}\sqrt{KC_0 S_{\calE_i}|\calE_i|} \right) 
    &\leq \otil\left(\sqrt{KC_0 \left(\sum_{i=1}^{E}S_{\calE_i}\right)\left(\sum_{i=1}^{E}|\calE_i|\right)} \right) \\
    &=\otil\left(\sqrt{KC_0(S+E)T} \right)
    = \otil\left(\sqrt{KC_0ST} \right), 
\end{align*}
and by 
\begin{align*}
    \otil\left(\sum_{i=1}^{E} \left(\sqrt{KC_0|\calE_i|} + (KC_0)^{\frac{1}{3}}\Delta_{\calE_i}^{\frac{1}{3}}|\calE_i|^{\frac{2}{3}} \right) \right)
    &\leq \otil\left(\sqrt{KC_0 ET} + (KC_0)^{\frac{1}{3}} \left(\sum_{i=1}^{E}\Delta_{\calE_i}\right)^{\frac{1}{3}}T^{\frac{2}{3}}\right) \\
    & = \otil\left(\sqrt{KC_0T} + (KC_0)^{\frac{1}{3}}\Delta^{\frac{1}{3}}T^{\frac{2}{3}}\right). 
\end{align*}
This proves the bound stated in the main theorem.

\acks{The authors would like to thank Peter Auer for the discussion about the possibility of getting optimal bounds for our problem, and to thank Peter Auer, Pratik Gajane, Ronald Ortner for kindly sharing their manuscript of \citep{AuerGO018} before it was public. HL and CYW are supported by NSF Grant \#1755781.}

\bibliography{ref}

\begin{thebibliography}{30}
\providecommand{\natexlab}[1]{#1}
\providecommand{\url}[1]{\texttt{#1}}
\expandafter\ifx\csname urlstyle\endcsname\relax
  \providecommand{\doi}[1]{doi: #1}\else
  \providecommand{\doi}{doi: \begingroup \urlstyle{rm}\Url}\fi

\bibitem[Adamskiy et~al.(2012)Adamskiy, Warmuth, and
  Koolen]{adamskiy2012putting}
Dmitry Adamskiy, Manfred~K Warmuth, and Wouter~M Koolen.
\newblock Putting bayes to sleep.
\newblock In \emph{Advances in neural information processing systems 25}, 2012.

\bibitem[Agarwal et~al.(2014)Agarwal, Hsu, Kale, Langford, Li, and
  Schapire]{AgarwalHsKaLaLiSc14}
Alekh Agarwal, Daniel Hsu, Satyen Kale, John Langford, Lihong Li, and Robert~E
  Schapire.
\newblock Taming the monster: A fast and simple algorithm for contextual
  bandits.
\newblock In \emph{Proceedings of the 31st International Conference on Machine
  Learning}, 2014.

\bibitem[Auer et~al.(2002)Auer, Cesa-Bianchi, Freund, and
  Schapire]{AuerCeFrSc02}
Peter Auer, Nicolo Cesa-Bianchi, Yoav Freund, and Robert~E Schapire.
\newblock The nonstochastic multiarmed bandit problem.
\newblock \emph{SIAM Journal on Computing}, 32\penalty0 (1):\penalty0 48--77,
  2002.

\bibitem[Auer et~al.(2018)Auer, Gajane, and Ortner]{AuerGO018}
Peter Auer, Pratik Gajane, and Ronald Ortner.
\newblock Adaptively tracking the best arm with an unknown number of
  distribution changes.
\newblock In \emph{14th European Workshop on Reinforcement Learning}, 2018.

\bibitem[Besbes et~al.(2014)Besbes, Gur, and Zeevi]{BesbesGuZe14}
Omar Besbes, Yonatan Gur, and Assaf Zeevi.
\newblock Stochastic multi-armed-bandit problem with non-stationary rewards.
\newblock In \emph{Advances in Neural Information Processing Systems 27}, 2014.

\bibitem[Besbes et~al.(2015)Besbes, Gur, and Zeevi]{BesbesGuZe15}
Omar Besbes, Yonatan Gur, and Assaf Zeevi.
\newblock Non-stationary stochastic optimization.
\newblock \emph{Operations Research}, 63\penalty0 (5):\penalty0 1227--1244,
  2015.

\bibitem[Beygelzimer et~al.(2011)Beygelzimer, Langford, Li, Reyzin, and
  Schapire]{BeygelzimerLaLiReSc11}
Alina Beygelzimer, John Langford, Lihong Li, Lev Reyzin, and Robert~E Schapire.
\newblock Contextual bandit algorithms with supervised learning guarantees.
\newblock In \emph{Proceedings of the 14th International Conference on
  Artificial Intelligence and Statistics}, 2011.

\bibitem[Bousquet and Warmuth(2002)]{bousquet2002tracking}
Olivier Bousquet and Manfred~K Warmuth.
\newblock Tracking a small set of experts by mixing past posteriors.
\newblock \emph{Journal of Machine Learning Research}, 3\penalty0
  (Nov):\penalty0 363--396, 2002.

\bibitem[Cheung et~al.(2019)Cheung, Simchi-Levi, and Zhu]{cheung2019learning}
Wang~Chi Cheung, David Simchi-Levi, and Ruihao Zhu.
\newblock Learning to optimize under non-stationarity.
\newblock In \emph{Proceedings of the 22nd International Conference on
  Artificial Intelligence and Statistics}, 2019.

\bibitem[Dud\'{i}k et~al.(2011)Dud\'{i}k, Hsu, Kale, Karampatziakis, Langford,
  Reyzin, and Zhang]{DudikHsKaKaLaReZh11}
M.~Dud\'{i}k, D.~Hsu, S.~Kale, N.~Karampatziakis, J.~Langford, L.~Reyzin, and
  T.~Zhang.
\newblock Efficient optimal learning for contextual bandits.
\newblock In \emph{Proceedings of the Conference on Uncertainty in Artificial
  Intelligence}, 2011.

\bibitem[Garivier and Moulines(2011)]{garivier2011upper}
Aur{\'e}lien Garivier and Eric Moulines.
\newblock On upper-confidence bound policies for switching bandit problems.
\newblock In \emph{International Conference on Algorithmic Learning Theory},
  2011.

\bibitem[Hazan and Koren(2016)]{Hazan2016}
Elad Hazan and Tomer Koren.
\newblock The computational power of optimization in online learning.
\newblock In \emph{Proceedings of the 48th Annual ACM Symposium on the Theory
  of Computing}, 2016.

\bibitem[Hazan and Seshadhri(2009)]{hazan2009efficient}
Elad Hazan and Comandur Seshadhri.
\newblock Efficient learning algorithms for changing environments.
\newblock In \emph{Proceedings of the 26th International Conference on Machine
  Learning}, pages 393--400, 2009.

\bibitem[Herbster and Warmuth(1998)]{herbster1998tracking}
Mark Herbster and Manfred~K Warmuth.
\newblock Tracking the best expert.
\newblock \emph{Machine learning}, 32\penalty0 (2):\penalty0 151--178, 1998.

\bibitem[Jadbabaie et~al.(2015)Jadbabaie, Rakhlin, Shahrampour, and
  Sridharan]{jadbabaie2015online}
Ali Jadbabaie, Alexander Rakhlin, Shahin Shahrampour, and Karthik Sridharan.
\newblock Online optimization: Competing with dynamic comparators.
\newblock In \emph{Proceedings of the 18th International Conference on
  Artificial Intelligence and Statistics}, 2015.

\bibitem[Jun et~al.(2017)Jun, Orabona, Wright, Willett, et~al.]{jun2017online}
Kwang-Sung Jun, Francesco Orabona, Stephen Wright, Rebecca Willett, et~al.
\newblock Online learning for changing environments using coin betting.
\newblock \emph{Electronic Journal of Statistics}, 11\penalty0 (2):\penalty0
  5282--5310, 2017.

\bibitem[Karnin and Anava(2016)]{KarninAn16}
Zohar~S Karnin and Oren Anava.
\newblock Multi-armed bandits: Competing with optimal sequences.
\newblock In \emph{Advances in Neural Information Processing Systems 29}, 2016.

\bibitem[Langford and Zhang(2008)]{LangfordZh08}
John Langford and Tong Zhang.
\newblock The epoch-greedy algorithm for multi-armed bandits with side
  information.
\newblock In \emph{Advances in Neural Information Processing Systems 21}, 2008.

\bibitem[Liu et~al.(2018)Liu, Lee, and Shroff]{liu2018change}
Fang Liu, Joohyun Lee, and Ness Shroff.
\newblock A change-detection based framework for piecewise-stationary
  multi-armed bandit problem.
\newblock In \emph{Thirty-Second AAAI Conference on Artificial Intelligence},
  2018.

\bibitem[Luo and Schapire(2015)]{LuoSc15}
Haipeng Luo and Robert~E. Schapire.
\newblock {Achieving All with No Parameters: AdaNormalHedge}.
\newblock In \emph{28th Annual Conference on Learning Theory (COLT)}, 2015.

\bibitem[Luo et~al.(2018)Luo, Wei, Agarwal, and Langford]{LuoWA018}
Haipeng Luo, Chen{-}Yu Wei, Alekh Agarwal, and John Langford.
\newblock Efficient contextual bandits in non-stationary worlds.
\newblock In \emph{31st Annual Conference on Learning Theory (COLT)}, 2018.

\bibitem[Rakhlin and Sridharan(2016)]{rakhlin2016bistro}
Alexander Rakhlin and Karthik Sridharan.
\newblock Bistro: An efficient relaxation-based method for contextual bandits.
\newblock In \emph{Proceedings of the 33rd International Conference on Machine
  Learning}, 2016.

\bibitem[Slivkins and Upfal(2008)]{slivkins2008adapting}
Aleksandrs Slivkins and Eli Upfal.
\newblock Adapting to a changing environment: the brownian restless bandits.
\newblock In \emph{21st Annual Conference on Learning Theory (COLT)}, pages
  343--354, 2008.

\bibitem[Syrgkanis et~al.(2016{\natexlab{a}})Syrgkanis, Krishnamurthy, and
  Schapire]{syrgkanis2016efficient}
Vasilis Syrgkanis, Akshay Krishnamurthy, and Robert~E Schapire.
\newblock Efficient algorithms for adversarial contextual learning.
\newblock In \emph{Proceedings of the 33rd International Conference on Machine
  Learning}, 2016{\natexlab{a}}.

\bibitem[Syrgkanis et~al.(2016{\natexlab{b}})Syrgkanis, Luo, Krishnamurthy, and
  Schapire]{SyrgkanisLuKrSc16}
Vasilis Syrgkanis, Haipeng Luo, Akshay Krishnamurthy, and Robert~E Schapire.
\newblock Improved regret bounds for oracle-based adversarial contextual
  bandits.
\newblock In \emph{Advances in Neural Information Processing Systems 29},
  2016{\natexlab{b}}.

\bibitem[Wei et~al.(2016)Wei, Hong, and Lu]{WeiHoLu16}
Chen-Yu Wei, Yi-Te Hong, and Chi-Jen Lu.
\newblock Tracking the best expert in non-stationary stochastic environments.
\newblock In \emph{Advances in Neural Information Processing Systems 29}, 2016.

\bibitem[Yang et~al.(2016)Yang, Zhang, Jin, and Yi]{yang2016tracking}
Tianbao Yang, Lijun Zhang, Rong Jin, and Jinfeng Yi.
\newblock Tracking slowly moving clairvoyant: optimal dynamic regret of online
  learning with true and noisy gradient.
\newblock In \emph{Proceedings of the 33rd International Conference on Machine
  Learning}, pages 449--457, 2016.

\bibitem[Zhang et~al.(2017)Zhang, Yang, Yi, Rong, and Zhou]{zhang2017improved}
Lijun Zhang, Tianbao Yang, Jinfeng Yi, Jing Rong, and Zhi-Hua Zhou.
\newblock Improved dynamic regret for non-degenerate functions.
\newblock In \emph{Advances in Neural Information Processing Systems 30}, 2017.

\bibitem[Zhang et~al.(2018)Zhang, Yang, Jin, and Zhou]{ZhangYaJiZh18}
Lijun Zhang, Tianbao Yang, Rong Jin, and Zhi-Hua Zhou.
\newblock Dynamic regret of strongly adaptive methods.
\newblock In \emph{Proceedings of the 35th International Conference on Machine
  Learning}, 2018.

\bibitem[Zinkevich(2003)]{Zinkevich03}
Martin Zinkevich.
\newblock Online convex programming and generalized infinitesimal gradient
  ascent.
\newblock In \emph{Proceedings of the 20th International Conference on Machine
  Learning}, 2003.

\end{thebibliography}

\appendix

\section{Useful Lemmas}\label{app:small_lemmas}
In this section we prove two small lemmas that are useful for our analysis.

\subsection{Discrepancy between intervals}
\label{appendix: variations inequality}

The following results allow us to relate regret and variance measured on one interval to those measured on another, with the price in terms of the distribution variation.
\begin{lemma}\label{lem:variation and regret}
For any interval $\calI$, its sub-intervals $\calI_1, \calI_2\subseteq \calI$, and any $\pi\in\Pi$, we have
\begin{align*}
\bigabs{\Reg_{\calI_1}(\pi)-\Reg_{\calI_2}(\pi)} \leq 2\Delta_\calI.
\end{align*}
\end{lemma}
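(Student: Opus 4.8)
The plan is to reduce the whole statement to one elementary estimate: for any two rounds $t, t' \in \calI$ and any policy $\pi$, the expected rewards differ by at most the variation accumulated inside $\calI$, i.e. $\bigabs{\calR_t(\pi) - \calR_{t'}(\pi)} \leq \Delta_\calI$. First I would write $\calR_t(\pi) - \calR_{t'}(\pi) = \int\int r(\pi(x))\bigl(\calD_t(x,r) - \calD_{t'}(x,r)\bigr)\,dx\,dr$; since $r(\pi(x)) \in [0,1]$, this is bounded in absolute value by $\int\int \bigabs{\calD_t(x,r) - \calD_{t'}(x,r)}\,dx\,dr = \TVD{\calD_t - \calD_{t'}}$. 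Then a telescoping/triangle-inequality argument over the consecutive distributions between $t$ and $t'$ gives $\TVD{\calD_t - \calD_{t'}} \leq \sum_\tau \TVD{\calD_\tau - \calD_{\tau-1}} \leq \Delta_\calI$, where the last inequality uses that both $t$ and $t'$ lie in $\calI$ so only increments indexed inside $\calI$ appear.

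Granting this, the next step is purely arithmetic. Each of $\calR_{\calI_1}(\pi)$ and $\calR_{\calI_2}(\pi)$ is a simple average of the per-round quantities $\{\calR_t(\pi)\}_{t\in\calI_1}$ and $\{\calR_t(\pi)\}_{t\in\calI_2}$, and since $\calI_1,\calI_2 \subseteq \calI$, we get $\bigabs{\calR_{\calI_1}(\pi) - \calR_{\calI_2}(\pi)} \leq \max_{t\in\calI_1,\,t'\in\calI_2}\bigabs{\calR_t(\pi) - \calR_{t'}(\pi)} \leq \Delta_\calI$, valid for every fixed policy, in particular for $\pi$ itself and for each $\pi_{\calI_k}^*$. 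Using optimality of $\pi_{\calI_1}^*$ on $\calI_1$ and then transferring to $\calI_2$ gives $\calR_{\calI_1}(\pi_{\calI_1}^*) \geq \calR_{\calI_1}(\pi_{\calI_2}^*) \geq \calR_{\calI_2}(\pi_{\calI_2}^*) - \Delta_\calI$, and symmetrically $\calR_{\calI_2}(\pi_{\calI_2}^*) \geq \calR_{\calI_1}(\pi_{\calI_1}^*) - \Delta_\calI$, hence $\bigabs{\calR_{\calI_1}(\pi_{\calI_1}^*) - \calR_{\calI_2}(\pi_{\calI_2}^*)} \leq \Delta_\calI$.

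Finally I would combine these two bounds through the triangle inequality applied to $\Reg_{\calI_k}(\pi) = \calR_{\calI_k}(\pi_{\calI_k}^*) - \calR_{\calI_k}(\pi)$, obtaining $\bigabs{\Reg_{\calI_1}(\pi) - \Reg_{\calI_2}(\pi)} \leq \bigabs{\calR_{\calI_1}(\pi_{\calI_1}^*) - \calR_{\calI_2}(\pi_{\calI_2}^*)} + \bigabs{\calR_{\calI_1}(\pi) - \calR_{\calI_2}(\pi)} \leq 2\Delta_\calI$. There is no genuine obstacle here; the only point requiring a little care is the bookkeeping in the telescoping bound on $\TVD{\calD_t - \calD_{t'}}$ — making sure the summed increments stay within $\calI$ so the bound is $\Delta_\calI$ and not the global $\Delta$ — together with keeping in mind the paper's convention that $\TVD{\cdot}$ is the $L^1$ distance rather than half of it.
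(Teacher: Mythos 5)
Your proposal is correct and follows essentially the same route as the paper's proof: the key estimate $\bigabs{\calR_{\calI_1}(\pi)-\calR_{\calI_2}(\pi)}\leq \Delta_\calI$ for a fixed policy (which the paper invokes as immediate ``by definition'' and you expand via the per-round total-variation bound and telescoping), followed by the optimality-swap chain $\calR_{\calI_1}(\pi_{\calI_1}^*)\geq \calR_{\calI_1}(\pi_{\calI_2}^*)\geq \calR_{\calI_2}(\pi_{\calI_2}^*)-\Delta_\calI$ and its symmetric counterpart, and a final triangle inequality yielding the factor $2$. There are no gaps; the extra care you take with the $L^1$ convention for $\TVD{\cdot}$ and with keeping the telescoped increments inside $\calI$ matches the paper's intent exactly.
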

\begin{proof}
Let $\pi_{\calI_1}^\star=\argmax_{\pi\in\Pi}\calR_{\calI_1}(\pi)$ and $\pi_{\calI_2}^\star=\argmax_{\pi\in\Pi}\calR_{\calI_2}(\pi)$. Then 
\begin{align*}
\Reg_{\calI_1}(\pi)-\Reg_{\calI_2}(\pi)=\calR_{\calI_1}(\pi_{\calI_1}^\star)-\calR_{\calI_1}(\pi)-\calR_{\calI_2}(\pi_{\calI_2}^\star)+\calR_{\calI_2}(\pi).
\end{align*}
By definition we have
\begin{align*}
-\var_\calI\leq \calR_{\calI_2}(\pi)-\calR_{\calI_1}(\pi)\leq \var_\calI , 
\end{align*}
and 
\begin{align*}
-\var_\calI\leq \calR_{\calI_1}(\pi_{\calI_2}^\star)-\calR_{\calI_2}(\pi_{\calI_2}^*) \leq \calR_{\calI_1}(\pi_{\calI_1}^\star)-\calR_{\calI_2}(\pi_{\calI_2}^\star)\leq \calR_{\calI_1}(\pi_{\calI_1}^\star)-\calR_{\calI_2}(\pi_{\calI_1}^\star)\leq \var_\calI. 
\end{align*}
Combining them we get the desired bound.
\end{proof}

\begin{lemma}\label{lem:variation and TVD}
For any interval $\calI$, its sub-intervals $\calI_1, \calI_2\subseteq \calI$, any $Q \in \Delta^\Pi$, $\mu \in (0,1/K]$, and $\pi\in\Pi$, we have
\begin{align*}
\bigabs{V_{\calI_1}(Q,\mu, \pi)-V_{\calI_2}(Q,\mu, \pi)} \leq \frac{\Delta_\calI}{\mu}.
\end{align*}
\end{lemma}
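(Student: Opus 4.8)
The plan is to mimic the proof of Lemma~\ref{lem:variation and regret}, but now tracking how the variance functional $V_\calI(Q,\mu,\pi)$ changes when we replace the interval $\calI_1$ by $\calI_2$. Recall that $V_\calI(Q,\mu,\pi) = \frac{1}{|\calI|}\sum_{t\in\calI}\E_{x\sim\calD_t^\calX}\left[\frac{1}{Q^\mu(\pi(x)|x)}\right]$, so this is an average over $t$ of a quantity $g_t \triangleq \E_{x\sim\calD_t^\calX}\left[\frac{1}{Q^\mu(\pi(x)|x)}\right]$ that depends on the distribution $\calD_t$ only through its context marginal $\calD_t^\calX$. The key structural observation is that $g_t$ is a linear functional of $\calD_t^\calX$, and the integrand $\frac{1}{Q^\mu(\pi(x)|x)}$ is bounded in $[1, 1/\mu]$ since $Q^\mu(a|x)\in[\mu,1]$ for all $a,x$.

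First I would bound $|g_t - g_{t'}|$ for any two rounds $t,t'$ in terms of the total variation distance between the context marginals. Since $g_t - g_{t'} = \int_{\calX}\frac{1}{Q^\mu(\pi(x)|x)}\left(\calD_t^\calX(x) - \calD_{t'}^\calX(x)\right)dx$ and the integrand lies in $[0,1/\mu]$ (or we can recenter to $[-\frac{1}{2\mu},\frac{1}{2\mu}]$), we get $|g_t - g_{t'}| \leq \frac{1}{\mu}\cdot\frac{1}{2}\int_\calX |\calD_t^\calX(x) - \calD_{t'}^\calX(x)|dx \cdot 2 = \frac{1}{\mu}\,\TVD{\calD_t^\calX - \calD_{t'}^\calX}$, and moreover $\TVD{\calD_t^\calX - \calD_{t'}^\calX} \leq \TVD{\calD_t - \calD_{t'}}$ since marginalization only decreases total variation distance. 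Then a telescoping/triangle-inequality argument shows $\TVD{\calD_t - \calD_{t'}} \leq \sum_{\tau} \TVD{\calD_\tau - \calD_{\tau-1}}$ where the sum runs over the consecutive pairs between $t$ and $t'$, all of which lie in $\calI$ since $t,t'\in\calI_1\cup\calI_2\subseteq\calI$; hence $|g_t - g_{t'}|\leq \frac{\Delta_\calI}{\mu}$ for all $t,t'\in\calI$.

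Finally I would conclude by noting that $V_{\calI_1}(Q,\mu,\pi)$ and $V_{\calI_2}(Q,\mu,\pi)$ are each convex combinations (averages) of the values $\{g_t : t\in\calI_1\}$ and $\{g_t : t\in\calI_2\}$ respectively, and both index sets are contained in $\calI$. An average of numbers each within $\frac{\Delta_\calI}{\mu}$ of one another differs from another such average by at most $\frac{\Delta_\calI}{\mu}$ (formally: pick any reference $t_0\in\calI$; then $|V_{\calI_1} - g_{t_0}|\leq \frac{\Delta_\calI}{\mu}$ and $|V_{\calI_2} - g_{t_0}|\leq\frac{\Delta_\calI}{\mu}$ gives a bound of $\frac{2\Delta_\calI}{\mu}$, but a tighter direct argument writing $V_{\calI_1} - V_{\calI_2}$ as a difference of averages and bounding term-by-term gives the stated $\frac{\Delta_\calI}{\mu}$). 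I do not expect any real obstacle here — the only mild subtlety is making sure the total variation between context marginals is correctly upper bounded by the total variation between the full distributions $\calD_t$, which is the standard data-processing fact for TV distance, and checking that the range of the integrand $\frac{1}{Q^\mu(\pi(x)|x)}$ is exactly what gives the clean $1/\mu$ factor rather than something larger.
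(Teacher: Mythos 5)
Your proposal is correct and follows essentially the same route as the paper: bound the single-round quantities $\bigabs{V_s(Q,\mu,\pi)-V_t(Q,\mu,\pi)}$ by $\frac{1}{\mu}\int_\calX\bigabs{\calD_s^\calX(x)-\calD_t^\calX(x)}dx$ using the fact that the integrand lies in $[1,1/\mu]$, telescope via the triangle inequality (with the marginalization step absorbed into the same bound), and then control the difference of the two interval averages by the maximum pairwise difference through the double-sum pairing, which avoids the factor of $2$ exactly as in the paper. Note that the paper's $\TVD{\cdot}$ is defined as the full $L_1$ distance between densities, so your identification and the resulting $1/\mu$ factor are consistent with its conventions.
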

\begin{proof}
For any $s,t\in\calI$ (assuming $s<t$), any $P$, and $\pi\in\Pi$, 
\begin{align*}
\bigabs{V_s(Q,\mu,\pi)-V_t(Q,\mu,\pi)} 
& = \Bigg\vert\E_{\calD_s^\calX} \left[ \frac{1}{Q^\mu(\pi(x)|x)}\right] - \E_{\calD_t^\calX}\left[ \frac{1}{Q^\mu(\pi(x)|x)}\right]\Bigg\vert \\
& = \Bigg\vert \int_{\calX} (\calD_s^\calX(x)-\calD_t^\calX(x)) \frac{1}{Q^\mu(\pi(x)|x)}dx \Bigg\vert \\
& \leq \frac{1}{\mu} \int_\calX \bigabs{\calD_s^\calX(x)-\calD_t^\calX(x)}dx \\
& \leq \frac{1}{\mu} \sum_{\tau=s+1}^t \TVD{\calD_\tau - \calD_{\tau-1}} \leq \frac{\Delta_\calI}{\mu}. 
\end{align*}
Therefore, 
\begin{align*}
\bigabs{V_{\calI_1}(Q,\mu,\pi) - V_{\calI_2}(Q,\mu,\pi)} \leq \frac{1}{\abs{\calI_1}}\frac{1}{\abs{\calI_2}} \sum_{s\in\calI_1}\sum_{t\in\calI_2}\bigabs{V_s(Q,\mu,\pi)-V_t(Q,\mu,\pi)}\leq \frac{\Delta_\calI}{\mu}. 
\end{align*}
\end{proof}

\subsection{Partitioning an interval}
\label{sec:partition}

We prove Lemma~\ref{lemma: division} in this section,
which states that for any interval $\calJ$,
there exists a way to partition $\calJ$ into $\calI_1 \cup \calI_2 \cup \cdots \cup \calI_\Gamma$, such that 
\[
\Delta_{\calI_k} \leq \alpha_{\calI_k} = \sqrt{\frac{2KC_0}{|\calI_k|}} \log_2 T, \;\;\forall k\in [\Gamma],
\] 
and 
\[
\Gamma=\mathcal{O}\left(\min\left\{S_\calJ,  (KC_0)^{-\frac{1}{3}}\Delta_\calJ^{\frac{2}{3}}|\calJ|^{\frac{1}{3}} +1 \right\}\right). 
\]
We prove this by giving an explicit greedy ``algorithm'',
but we emphasize that this only happens in the analysis and is never really needed to be executed. \\

\begin{proof}[of Lemma~\ref{lemma: division}]
Consider the following partitioning procedure.

\begin{algorithm}[H]
\DontPrintSemicolon
\caption{Partitioning an interval}
\label{alg:partition}
{\bf Input}: an interval $\calJ=[s, e]$. \\
{\bf Initialize}: Let $k=1$, $s_1=s,t=s$.  \\
\While{$t\leq e$}{
   \If{$\Delta_{[s_k,t]} \leq \sqrt{\frac{KC_0}{t-s_k+1}}$ and $\Delta_{[s_k,t+1]} > \sqrt{\frac{KC_0}{t-s_k+2}}$}{
      Let $e_k\leftarrow t$, $\calI_k\leftarrow [s_k,e_k]$, \\
      $ k\leftarrow k+1$, $s_k\leftarrow t+1$.  
   }
   $t\leftarrow t+1$.
}
\If{$s_k\leq e$}{
$e_k\leftarrow e, \calI_{k}\leftarrow [s_k,e_k]$.
}
\end{algorithm}

It is clear that this procedure ensures $\Delta_{\calI_k} \leq \alpha_{\calI_k}$ for all $k$.
It remains to bound $\Gamma$.
If $\Gamma>1$, by the procedure and the decomposability of variation we have
\begin{align*}
\Delta_{[s, e]} 
&\geq \Delta_{[s_1, e_1+1]}+\Delta_{[s_2, e_2+1]}+\ldots+\Delta_{[s_{\Gamma-1}, e_{\Gamma-1}+1]} \geq \sum_{k=1}^{\Gamma-1} \sqrt{\frac{KC_0}{e_k-s_k+2}}=\sum_{k=1}^{\Gamma-1} \sqrt{\frac{KC_0}{\abs{\calI_k}+1}}.  
\end{align*}
On the other hand H\"{o}lder's inequality implies
\begin{align*}
\left( \sum_{k=1}^{\Gamma-1} \sqrt{\frac{KC_0}{\abs{\calI_k}+1}} \right)^{\frac{2}{3}} \left(\sum_{k=1}^{\Gamma-1} (|\calI_k|+1) \right)^{\frac{1}{3}}\geq (\Gamma-1)(KC_0)^{\frac{1}{3}}. 
\end{align*}
Combining the two inequalities above, we get 
\begin{align*}
\Gamma -1 \leq (KC_0)^{-\frac{1}{3}}  \left(\sum_{k=1}^{\Gamma-1} (|\calI_k|+1) \right)^{\frac{1}{3}} \Delta_{[s,e]}^{\frac{2}{3}}\leq \order\left((KC_0)^{-\frac{1}{3}}|\calJ|^{\frac{1}{3}} \Delta_{[s,e]}^{\frac{2}{3}}\right).  
\end{align*}
It is also clear from the condition $\Delta_{[s_k, t+1]}>\sqrt{\frac{KC_0}{t-s_k+2}}$  that the procedure creates one interval only when the distribution switches. Therefore, 
$
\Gamma-1\leq S_{[s,e]}-1,
$
which completes the proof.
\end{proof}

\section{Concentration Results}
\label{appendix:concentration}

This section is dedicated to all concentration results we need for our analysis.
First we introduce some notations and technical lemmas.

\begin{definition}
Define $U_t(\pi)$ as the conditional variance of the reward estimation for policy $\pi$ at time $t$ (given everything before time $t$), that is, 
\begin{align*}
    U_t(\pi) = 
    \E_{t}\left[(\ips_t(\pi(x_t)) - \calR_t(\pi))^2\right].
\end{align*}
\end{definition}
%
%
Also recall the variance notation defined in Section~\ref{sec:setup}:
\[
\whV_\calI (Q, \nu, \pi) = \frac{1}{|\calI|} \sum_{t \in \calI} \left[ \frac{1}{Q^{\nu}(\pi(x_t) | x_t)} \right], 
\quad V_\calI(Q, \nu, \pi) = \frac{1}{|\calI|} \sum_{t \in \calI} \E_{x \sim \calD_t^\calX} \left[ \frac{1}{Q^{\nu}(\pi(x) | x)} \right]
\]
($\whV_t$ and $V_t$ are shorthands for $\whV_{[t, t]}$ and $V_{[t, t]}$ respectively).
The following lemma connects these notions of variance for our algorithm.
\begin{lemma}
\label{lemma:variance_upperbound}
For any policy $\pi$ and any time $t$ in epoch $i$ and block $j$,
if $M_t\neq \emptyset$, then $U_t(\pi)\leq V_t(Q_{(i,m)}, \nu_m, \pi)\log_2 T$ for any $m\in M_t$; if $M_t=\emptyset$ , then $U_t(\pi)\leq V_t(Q_{(i,j)}, \nu_j, \pi)$.
\end{lemma}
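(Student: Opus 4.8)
The plan is to unwind the definition of $U_t(\pi)$ using the fact that the reward estimator $\ips_t$ is importance-weighted with the sampling distribution $p_t$, and then relate $p_t$ to the relevant smoothed projected distribution $Q^{\nu}(\cdot|x_t)$.

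\textbf{Step 1: Bound $U_t(\pi)$ by a second moment.}
First I would note that $U_t(\pi) = \E_t[(\ips_t(\pi(x_t)) - \calR_t(\pi))^2] \leq \E_t[\ips_t(\pi(x_t))^2]$, since for any random variable $Z$ we have $\text{Var}(Z) \leq \E[Z^2]$, and here the centering is by $\calR_t(\pi) = \E_t[\ips_t(\pi(x_t))]$ (unbiasedness, as noted in the preliminaries). Then, using $\ips_t(a) = \frac{r_t(a)}{p_t(a)}\one\{a_t = a\}$ and $r_t \in [0,1]$, we compute
\[
\E_t[\ips_t(\pi(x_t))^2] = \E_{x_t}\!\left[\frac{r_t(\pi(x_t))^2}{p_t(\pi(x_t))^2}\cdot p_t(\pi(x_t))\right] \leq \E_{x_t}\!\left[\frac{1}{p_t(\pi(x_t))}\right],
\]
where the outer expectation is over $x_t \sim \calD_t^\calX$ (and the inner over $a_t \sim p_t(\cdot|x_t)$, which just contributes the factor $p_t(\pi(x_t))$).

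\textbf{Step 2: Lower bound $p_t(\pi(x_t))$ in each case.}
Recall from the algorithm description that $p_t(\cdot|x_t) = \one\{|M_t|=0\}Q_{(i,j)}^{\nu_j}(\cdot|x_t) + \one\{|M_t|\neq 0\}\frac{1}{|M_t|}\sum_{m\in M_t}Q_{(i,m)}^{\nu_m}(\cdot|x_t)$. In the case $M_t = \emptyset$, we have $p_t(\pi(x_t)) = Q_{(i,j)}^{\nu_j}(\pi(x_t)|x_t)$ exactly, so $U_t(\pi) \leq \E_{x\sim\calD_t^\calX}[1/Q_{(i,j)}^{\nu_j}(\pi(x)|x)] = V_t(Q_{(i,j)},\nu_j,\pi)$, which is the claimed bound. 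In the case $M_t \neq \emptyset$, for any fixed $m \in M_t$ we have $p_t(\pi(x_t)) \geq \frac{1}{|M_t|}Q_{(i,m)}^{\nu_m}(\pi(x_t)|x_t)$ since all the dropped terms are nonnegative. Hence $U_t(\pi) \leq |M_t|\cdot\E_{x\sim\calD_t^\calX}[1/Q_{(i,m)}^{\nu_m}(\pi(x)|x)] = |M_t|\cdot V_t(Q_{(i,m)},\nu_m,\pi)$. Finally, since $M_t \subseteq \{0,1,\ldots,j-1\}$ and $j \leq \log_2 T$ (the block index within an epoch of length at most $T$ is at most $\log_2 T$, as used in the definition $\Kprime = (\log_2 T)K$), we have $|M_t| \leq \log_2 T$, giving $U_t(\pi) \leq V_t(Q_{(i,m)},\nu_m,\pi)\log_2 T$, as desired.

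\textbf{Main obstacle.}
The argument is essentially routine once the pieces are lined up; the only genuinely delicate point is the bookkeeping that justifies $|M_t| \leq \log_2 T$ — i.e., that the number of simultaneously active replay phases with distinct indices is at most the number of block indices seen so far, which is $\order(\log T)$. One must be careful that $M_t$ collects \emph{distinct} indices $m$ (it is defined as a set of indices, not of intervals), and that all such $m$ lie in $\{0,\ldots,j-1\}$ with $j = \order(\log T)$; this is immediate from Line~\ref{line: start rep}--\ref{line: end rep} and the fact that a new epoch of length $\ell$ contains at most $\lceil\log_2(\ell/L)\rceil + 1 = \order(\log T)$ blocks. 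Beyond that, care is only needed to track that the inner expectation over $a_t$ cancels exactly one power of $p_t$, leaving the harmonic-type quantity that matches $V_t$.
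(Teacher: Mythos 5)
Your proposal is correct and follows essentially the same route as the paper's proof: bound $U_t(\pi)$ by the second moment $\E_t[\ips_t(\pi(x_t))^2]$, compute it as $\E_{x\sim\calD_t^\calX}[1/p_t(\pi(x))]$, lower bound $p_t(\pi(x))$ by $\frac{1}{|M_t|}Q_{(i,m)}^{\nu_m}(\pi(x)|x)$ (or use it exactly when $M_t=\emptyset$), and conclude with $|M_t|\leq j\leq \log_2 T$. The paper's argument is identical, writing the same bound via $\frac{|M_t|}{\sum_{m'\in M_t}Q_{(i,m')}^{\nu_{m'}}(\pi(x)|x)}\leq \frac{\log_2 T}{Q_{(i,m)}^{\nu_m}(\pi(x)|x)}$.
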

\begin{proof}
When $M_t\neq \emptyset$, the distribution over actions played by the algorithm is
\begin{equation*}
    p_{t}(\cdot)=\frac{1}{|M_t|}\sum_{m\in M_t}Q_{(i,m)}^{\nu_m}(\cdot|x_t).
\end{equation*}
Thus the variance is bounded as
\begin{align*}
    U_t(\pi)
    &\leq \E\left[\ips_t(\pi(x_t))^2\right]
    = \mathbb{E}_{(x,r)\sim \mathcal{D}_t} \left[ p_t(\pi(x))\cdot \frac{r_t(\pi(x))^2}{p_t(\pi(x))^2} \right]\\
    &\leq \E_{x\sim \calD_t^\calX} \left[ \frac{|M_t|}{\sum_{m'\in M_t}Q_{(i,m')}^{\nu_{m'}}(\pi(x)|x)} \right] \le \mathbb{E}_{x\sim \mathcal{D}_t^\calX}\left[\frac{\log_2T}{Q_{(i,m)}^{\nu_m}(\pi(x)|x)}\right]=V_t(Q_{(i,m)}, \nu_m, \pi)\log_2T,
\end{align*}
where we use the fact that $|M_t|\leq j\leq \log_2\frac{T}{L}+1 \leq \log_2 T$. 
Similarly, when $M_t=\emptyset$, we have
\begin{align*}
    U_t(\pi)\leq \E_{x\sim \calD_t^\calX}\left[\frac{1}{Q_{(i,j)}^{\nu_j}(\pi(x)|x)}\right]=V_t(Q_{(i,j)}, \nu_j, \pi). 
\end{align*}
\end{proof}


We repeatedly make use of the following standard concentration bounds for martingales (which is a version of the Freedman's inequality; see for example~\citep{BeygelzimerLaLiReSc11}).
\begin{lemma}[Freedman's inequality]
\label{lem:freedman}
Let $X_1, \ldots, X_n \in \fR$ be a martingale difference sequence 
with respect to some filtration $\calF_0, \calF_1, \ldots$.
Assume $X_i \leq R$ a.s. for all $i$.
Then for any $\delta \in (0,1)$ and $\lambda \in [0, 1/R]$, with probability at least $1-\delta$, we have
\[
\sum_{i=1}^n X_i \leq \lambda V + \frac{\ln(1/\delta)}{\lambda}
\]
where $V = \sum_{i=1}^n \E[X_i^2 | \calF_{i-1}]$.
\end{lemma}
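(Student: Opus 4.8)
\textbf{Proof plan for Freedman's inequality (Lemma~\ref{lem:freedman}).}

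The plan is to combine the standard exponential supermartingale argument with a union-type tail bound. First I would fix $\lambda \in [0, 1/R]$ and consider the process $Z_k = \exp\!\left(\lambda \sum_{i=1}^k X_i - \lambda^2 \sum_{i=1}^k \E[X_i^2 \mid \calF_{i-1}]\right)$, and show that $(Z_k)$ is a supermartingale with $Z_0 = 1$. The key elementary inequality here is that for $x \le 1$ one has $e^x \le 1 + x + x^2$, so that conditionally $\E[e^{\lambda X_i} \mid \calF_{i-1}] \le 1 + \lambda^2 \E[X_i^2 \mid \calF_{i-1}] \le \exp\!\left(\lambda^2 \E[X_i^2 \mid \calF_{i-1}]\right)$, using $\lambda X_i \le \lambda R \le 1$ and $\E[X_i \mid \calF_{i-1}] = 0$. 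This gives $\E[Z_k \mid \calF_{k-1}] \le Z_{k-1}$.

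Next I would apply the optional stopping / maximal inequality: since $Z_k$ is a nonnegative supermartingale with $\E[Z_n] \le 1$ (or using Markov's inequality on $Z_n$ directly), for any threshold we get $\Pr[Z_n \ge 1/\delta] \le \delta$. Unpacking the event $Z_n \ge 1/\delta$, this says that with probability at least $1-\delta$,
\[
\lambda \sum_{i=1}^n X_i - \lambda^2 V \le \ln(1/\delta),
\]
where $V = \sum_{i=1}^n \E[X_i^2 \mid \calF_{i-1}]$. Dividing through by $\lambda > 0$ rearranges to $\sum_{i=1}^n X_i \le \lambda V + \ln(1/\delta)/\lambda$, which is exactly the claim. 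The case $\lambda = 0$ is trivial (the bound reads $\sum X_i \le \infty$, or one simply excludes it), and one should note $\lambda V + \ln(1/\delta)/\lambda$ is the quantity appearing in the statement so no further optimization over $\lambda$ is needed here.

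The only mild subtlety — and the one place I would be careful — is justifying the exponential moment step when $R$ could be nonpositive or when $\lambda R$ is not strictly below $1$; here the hypotheses $X_i \le R$ a.s. and $\lambda \in [0,1/R]$ (implicitly $R > 0$) ensure $\lambda X_i \le 1$ a.s., which is precisely what the inequality $e^x \le 1 + x + x^2$ needs. Everything else is routine, so I would simply cite the standard reference \citep{BeygelzimerLaLiReSc11} and sketch this argument in a line or two rather than reproducing it in full.
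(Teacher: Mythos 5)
Your proposal is correct: the exponential supermartingale $Z_k$ with the elementary bound $e^x \le 1+x+x^2$ for $x\le 1$ (valid here since $\lambda X_i \le \lambda R \le 1$ a.s.), followed by Markov's inequality on $Z_n$ and division by $\lambda$, is exactly the standard argument, and you handle the $\lambda=0$ and $R>0$ edge cases appropriately. The paper does not prove this lemma at all — it simply cites \citep{BeygelzimerLaLiReSc11} — so your sketch is precisely the argument behind that citation and nothing more needs to be said.
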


\subsection{Concentration of reward estimator and variance}

The following concentration results on the reward estimator and variance are based on applications of Lemma~\ref{lem:freedman}.
Recall $C_0$ is defined in the algorithm.

\begin{lemma}\label{lem:variance_deviation2}
With probability at least $1 - \delta/4$, for all $Q\in \Delta^{\Pi}$,  all $\nu\in \{\nu_0, \nu_1, \ldots, \nu_{j_{\max}}\}$, where $j_{\max}\triangleq \lceil\log_2 T\rceil$, all $\pi\in\Pi$, and all intervals $\calI$, it holds that
\begin{equation}\label{eq:variance_deviation2}
V_{\calI}(Q, \nu, \pi) \leq 6.4 \whV_\calI(Q, \nu, \pi) + \frac{80C_0}{\nu^2|\calI|}, \qquad  \whV_{\calI}(Q, \nu, \pi) \leq 6.4 V_\calI(Q, \nu, \pi) + \frac{80C_0}{\nu^2|\calI|}.
\end{equation}
\end{lemma}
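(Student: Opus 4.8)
The plan is to apply Freedman's inequality (Lemma~\ref{lem:freedman}) to suitable martingale difference sequences, then take a union bound over all the relevant choices of $Q$, $\nu$, $\pi$ and $\calI$. The main subtlety is that $Q$ ranges over the infinite set $\Delta^\Pi$, so a naive union bound fails; we handle this exactly as in~\citep{AgarwalHsKaLaLiSc14} by observing that the quantity $1/Q^\nu(\pi(x_t)\mid x_t)$ only depends on $Q$ through the $K$ numbers $\{Q(a\mid x_t)\}_{a\in[K]}$, so it suffices to discretize these values on a grid of resolution $\sim 1/T$ and union bound over the resulting $\mathrm{poly}(T)$-sized net (and over the $\le T^2$ intervals, the $\le \log_2 T + 1$ values of $\nu$, and the $|\Pi|$ policies), absorbing the discretization error and the $\ln$ of the net size into $C_0 = \ln(8T^3|\Pi|^2/\delta)$.

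For the core concentration step, fix $Q,\nu,\pi,\calI$ and let $X_t = \frac{1}{|\calI|}\paren{\E_{x\sim\calD_t^\calX}\brackets{\tfrac{1}{Q^\nu(\pi(x)\mid x)}} - \tfrac{1}{Q^\nu(\pi(x_t)\mid x_t)}}$ for $t\in\calI$, which is a martingale difference sequence (the context $x_t$ is drawn independently from $\calD_t^\calX$). Each term is bounded in absolute value by $R = \frac{1}{\nu|\calI|}$ since $Q^\nu(a\mid x)\ge\nu$, and the conditional second moment satisfies $\E[X_t^2\mid \calF_{t-1}] \le \frac{1}{|\calI|^2}\E_{x}\brackets{\tfrac{1}{Q^\nu(\pi(x)\mid x)^2}} \le \frac{1}{\nu|\calI|^2}\E_x\brackets{\tfrac{1}{Q^\nu(\pi(x)\mid x)}}$, so the total conditional variance is at most $\frac{1}{\nu|\calI|}V_\calI(Q,\nu,\pi)$. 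Applying Freedman with the choice $\lambda = \nu|\calI|/c$ for an appropriate constant $c$ gives $V_\calI(Q,\nu,\pi) - \whV_\calI(Q,\nu,\pi) \le \frac{1}{c}V_\calI(Q,\nu,\pi) + \frac{c C_0}{\nu^2|\calI|}$ with probability $1-\delta'$ where $\delta'$ is $\delta/4$ divided by the union bound size. Rearranging to isolate $V_\calI$ on the left and choosing the constant so that $1/(1-1/c) \le 6.4$ yields the first inequality; running the same argument with the signs of $X_t$ flipped (so that $\whV_\calI - V_\calI$ is controlled) and rearranging gives the second. Absorbing all the $\ln$ factors from the union bound (including the net size, which is polynomial in $T$) into $C_0$ and adjusting the numerical constant $80$ accordingly completes the proof.

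The step I expect to be the main obstacle is the covering/discretization argument needed to convert the union bound over the infinite family $\Delta^\Pi$ into a finite one — getting the net resolution right so that the discretization error is dominated by the $\frac{80C_0}{\nu^2|\calI|}$ slack term, and confirming that the net size contributes only logarithmically (and is already subsumed in the definition of $C_0$). This is precisely the argument carried out in~\citep{AgarwalHsKaLaLiSc14} for their analogous variance concentration lemma, so I would cite that construction and only verify that the exponents on $T$ and $|\Pi|$ in $C_0$ suffice here; the remaining Freedman computations and the final algebraic rearrangement into the stated form with constants $6.4$ and $80$ are routine.
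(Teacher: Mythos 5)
Your overall skeleton (Freedman's inequality for each fixed $(Q,\nu,\pi,\calI)$, then a union bound over $\pi$, $\nu$, and the $\le T^2$ intervals, with the uniformity over $Q$ handled as in \citep{AgarwalHsKaLaLiSc14}) matches the paper, which in fact gives no self-contained argument at all: it simply invokes \citep[Theorem 6]{DudikHsKaKaLaReZh11}, \citep[Lemma 10]{AgarwalHsKaLaLiSc14}, or \citep[Lemma 15]{LuoWA018} for the statement uniform over all $Q\in\Delta^\Pi$, $\pi$, and $\calI$, and then adds one more union bound over the $O(\log T)$ values of $\nu$. Your Freedman computation for a fixed $Q$ (conditional variance at most $V_\calI(Q,\nu,\pi)/(\nu|\calI|^2)$ per term, $\lambda \propto \nu|\calI|$, rearrange to get the multiplicative $6.4$) is fine and is essentially how the cited bound is obtained.

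The genuine gap is in the mechanism you sketch for the union bound over the infinite set $\Delta^\Pi$. Discretizing the projected probabilities $\{Q(a\mid x_t)\}$ on a $1/T$-grid does not give a $\mathrm{poly}(T)$-sized net: these values vary with $t$, so the relevant object is a vector with $K|\calI|$ coordinates and the grid has size $T^{K|\calI|}$. Its logarithm, $K|\calI|\ln T$, enters Freedman's bound as $\ln(\text{net size})/\lambda \approx K\ln T/\nu$, which does not decay with $|\calI|$ and vastly exceeds the allowed slack $80C_0/(\nu^2|\calI|)$ (for a block of length $2^jL$ with $\nu=\nu_j$ this slack is only $\Theta(KC_0)$). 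Moreover such a net is defined through the \emph{realized} contexts, i.e.\ it is data-dependent, which breaks the union bound over a fixed finite family of martingales. The argument actually carried out in \citep{DudikHsKaKaLaReZh11} and \citep{AgarwalHsKaLaLiSc14} is different: it covers $\Delta^\Pi$ by \emph{sparse} distributions (uniform over roughly $1/\nu$ policies sampled from $Q$, via the probabilistic method), so the log-cover size is $O(\nu^{-1}\ln|\Pi|)$; after dividing by $\lambda\approx\nu|\calI|$ this is exactly the $75\ln(|\Pi|)/(\nu^2|\calI|)$ term in the intermediate bound the paper quotes, which together with the $6.3\ln(8T^3|\Pi|^2/\delta)/(\nu|\calI|)$ term is then absorbed into $80C_0/(\nu^2|\calI|)$. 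So if you defer to the cited construction (as the paper does) the proof goes through, but the covering argument as you described it would fail both quantitatively and structurally.
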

\begin{proof}
This is a consequence of the contexts being drawn independently, and is not related to the algorithm. Therefore, we can apply the same argument of~\citep[Theorem 6]{DudikHsKaKaLaReZh11}, \citep[Lemma 10]{AgarwalHsKaLaLiSc14}, or~\citep[Lemma 15]{LuoWA018}. For example, as shown by \citep[Lemma 10]{AgarwalHsKaLaLiSc14}, with probability $1-\delta/(4T)$, for all $Q$, all $\pi$, and all $\calI$, $V_{\calI}(Q, \nu, \pi)-6.4 \whV_\calI(Q, \nu, \pi)$ and $\whV_{\calI}(Q, \nu, \pi)-6.4 V_\calI(Q, \nu, \pi)$ are both upper bounded by 
\begin{align*}
\frac{75\ln (|\Pi|)}{\nu^2|\calI|}+\frac{6.3\ln (8T^3|\Pi|^2/\delta)}{\nu |\calI|} \leq \frac{80 \ln (8T^3|\Pi|^2/\delta)}{\nu^2|\calI|}. 
\end{align*}
Another union bound over $\nu$ finishes the proof. 
\end{proof}

\begin{lemma}\label{lem:reward_deviation2}
With probability at least $1-\delta/4$, for all policy $\pi \in \Pi$, 
we have for all interval $\calB_j$ that corresponds to the first $j+1$ blocks of some epoch,
\begin{equation*}
\left|\avgR_{\calB_j}(\pi) - \calR_{\calB_j}(\pi)\right| \leq \frac{\nu_j}{|\calB_j|\log_2 T} \sum_{t\in\calB_j} U_t(\pi) + \frac{C_0 \log_2 T}{\nu_j |\calB_j|},
\end{equation*}
and for all interval $\calA$ that is covered by some replay phase of index $m$,
\begin{equation*}
\left|\avgR_{\calA}(\pi) - \calR_{\calA}(\pi)\right| \leq \frac{\nu_m}{|\calA|\log_2 T} \sum_{t\in\calA} U_t(\pi) + \frac{C_0 \log_2 T}{\nu_m |\calA|}.
\end{equation*}
\end{lemma}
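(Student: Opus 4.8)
The plan is to recognize $\avgR_\calI(\pi) - \calR_\calI(\pi)$ as a normalized martingale sum and to apply Freedman's inequality (Lemma~\ref{lem:freedman}) with a tilt $\lambda$ chosen so that the resulting bound has exactly the claimed variance-adapted form. Fix a policy $\pi$. Conditioned on the history up to Step~1 of round $t$ (so that $M_t$ and hence $p_t(\cdot\,|\,x_t)$ are already determined), we have $\E_t[\ips_t(\pi(x_t))] = \E_{(x,r)\sim\calD_t}[r(\pi(x))] = \calR_t(\pi)$, so the sequence $X_t \triangleq \ips_t(\pi(x_t)) - \calR_t(\pi)$ is a martingale difference sequence with respect to the filtration that reveals the round-$t$ replay decision before the action $a_t$; under that filtration its conditional second moment is, by definition, exactly $U_t(\pi)$. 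Since $\avgR_\calI(\pi) - \calR_\calI(\pi) = \frac{1}{|\calI|}\sum_{t\in\calI}X_t$, it suffices to give a two-sided tail bound on $\sum_{t\in\calI}X_t$ for $\calI = \calB_j$ and for $\calI = \calA$.

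\textbf{Range control.} The key point is that throughout the run the action distribution has a minimum-probability floor that is predictable, i.e.\ does not depend on $x_t$: outside any replay phase $p_t(\pi(x_t)\,|\,x_t) = Q_{(i,j')}^{\nu_{j'}}(\pi(x_t)\,|\,x_t) \ge \nu_{j'}$ for the current block index $j'$, and inside replay phases $p_t(\pi(x_t)\,|\,x_t) = \frac{1}{|M_t|}\sum_{m\in M_t}Q_{(i,m)}^{\nu_m}(\pi(x_t)\,|\,x_t) \ge \frac{1}{|M_t|}\max_{m\in M_t}\nu_m$, with $|M_t|\le\log_2 T$ (as in the proof of Lemma~\ref{lemma:variance_upperbound}). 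For $t$ inside the first $j+1$ blocks of an epoch, every active block or replay index is at most $j$, and since $\nu_{j'}$ is nonincreasing in $j'$ this gives $p_t(\pi(x_t)\,|\,x_t) \ge \nu_j/\log_2 T$; using $\ips_t(\pi(x_t)) \in [0, 1/p_t(\pi(x_t)\,|\,x_t)]$ and $\calR_t(\pi)\in[0,1]$ we conclude $X_t \in [-1,\, \log_2 T/\nu_j]$. Likewise, for $t$ inside a replay interval $\calA$ of index $m$ we have $m\in M_t$, so $p_t(\pi(x_t)\,|\,x_t) \ge \nu_m/|M_t| \ge \nu_m/\log_2 T$ and $X_t \in [-1,\, \log_2 T/\nu_m]$.

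\textbf{Applying Freedman and taking union bounds.} Apply Lemma~\ref{lem:freedman} to $\{X_t\}$ and separately to $\{-X_t\}$ (both have conditional second moment $U_t(\pi)$), with $R = \log_2 T/\nu_j$ and $\lambda = \nu_j/\log_2 T$ (which satisfies $\lambda\le 1/R$ and also $\lambda\le 1$, so it is admissible for the $\{-X_t\}$ side whose range is $1$): with probability at least $1-2\delta'$,
\[
\Bigl|\sum_{t\in\calB_j}X_t\Bigr| \;\le\; \frac{\nu_j}{\log_2 T}\sum_{t\in\calB_j}U_t(\pi) \;+\; \frac{\log_2 T\cdot\ln(1/\delta')}{\nu_j},
\]
and dividing by $|\calB_j|$ yields the claimed inequality with $\ln(1/\delta')$ in place of $C_0$; the same argument with $\lambda = \nu_m/\log_2 T$ handles the replay intervals $\calA$. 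To make the statement uniform, union bound over $\pi\in\Pi$, over the $\mathcal O(T)$ possible epoch-start times and the $\mathcal O(\log_2 T)$ block indices $j\le j_{\max}$ (respectively over the $\mathcal O(T)$ replay-start times and $\mathcal O(\log_2 T)$ replay indices $m$), and over the two signs; this is $\mathrm{poly}(T)\,|\Pi|$ events, so picking $\delta'$ to be a suitable $1/(\mathrm{poly}(T)|\Pi|)$ fraction of $\delta$ makes $\ln(1/\delta')\le C_0 = \ln(8T^3|\Pi|^2/\delta)$ and keeps the total failure probability within the $\delta/4$ budget.

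\textbf{Main obstacle.} The delicate point is that $\calB_j$ and $\calA$ are themselves \emph{random} index sets — determined by when restarts fire and when replays are sampled and discontinued — whereas Freedman's inequality needs the almost-sure range bound to hold over a \emph{fixed} index set. The remedy, following \citep{LuoWA018}, is to run Freedman on the family of fixed candidate intervals $[\tau,\tau+2^jL-1]$, with the martingale masked by the predictable indicator ``$\tau$ is an epoch start'' and stopped at the first restart triggered at or after $\tau$ (and, for a candidate replay interval of index $m$ started at $\tau'$, masked additionally by the predictable indicator $\{m\in M_t\}$ and restricted to $\tau'\le t\le \tau'+2^mL-1$). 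On the event that a candidate interval genuinely coincides with a true $\calB_j$ (or $\calA$), the masked-and-stopped sum equals the sum we care about, and the range bound $\log_2 T/\nu_j$ (or $\log_2 T/\nu_m$) is valid precisely because on that event no block or replay of index exceeding $j$ is active before the interval's end (resp.\ because $m\in M_t$ throughout). Verifying that all these masking indicators are indeed measurable with respect to the chosen filtration is the only nontrivial bookkeeping; everything else is a direct application of Freedman plus the union bound described above.
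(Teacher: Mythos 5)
Your proposal is correct and follows essentially the same route as the paper: apply Freedman's inequality (Lemma~\ref{lem:freedman}) with $\lambda=\nu_j/\log_2 T$ (resp.\ $\nu_m/\log_2 T$) to the martingale differences $(\ips_t(\pi(x_t))-\calR_t(\pi))$ masked by the predictable indicators $\one\{t\in\calB_j\}$ (resp.\ $\one\{m\in M_t\}$), then union bound over policies, candidate intervals, and signs to absorb $\ln(1/\delta')$ into $C_0$. The only cosmetic difference is your range bound $\log_2 T/\nu_j$ on $\calB_j$: since the mixture floor is the average of the $\nu_m$'s, one in fact has $p_t(\cdot\,|\,x_t)\ge\nu_j$ there (the paper uses $1/\nu_j$ as the range), but either bound is compatible with the chosen $\lambda$, so your argument goes through unchanged.
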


\begin{proof}
We simply apply Lemma~\ref{lem:freedman} to the sequence $(\ips_t(\pi(x_t)) - \calR_t(\pi))\one\{t \in \calB_j\}$ for every interval and every $\pi$ (with a union bound).
Note that these random variables are bounded by $1/\nu_j$ so we can pick $\lambda=\nu_j/\log_2 T$.
Similarly for the second statement we apply Lemma~\ref{lem:freedman} to the sequence$(\ips_t(\pi(x_t)) - \calR_t(\pi))\one\{m \in M_t\}$ with $\lambda = \nu_m/\log_2 T$.
%
\end{proof}

Since most analysis conditions on these concentration results, we denote the event formally below, which clearly happens with probability at least $1-\delta/2$.
\begin{definition}[\event{1}]
Define \event{1} as the event that all bounds described in Lemma~\ref{lem:reward_deviation2} and Lemma~\ref{lem:variance_deviation2} hold. 
\end{definition}

\subsection{Concentration of regret}

In this section we prove three main concentration results on regret,
which play a crucial role later in our analysis.
We focus on a specific epoch $i$ and for simplicity use $\calB_j$ and $Q_j$ as shorthands for $\calB_{(i,j)}$ and $Q_{(i,j)}$ respectively
(we remind the reader that these notations are defined in the algorithm).

\begin{lemma} \label{lem:Reg gap for B_j}
Assume \event{1} holds, and assume that there is no restart triggered in $\calB_j$, then the following hold for all $\pi\in\Pi$: 
\begin{align*}
    \Reg_{\calB_{j}}(\pi) \leq 2\whReg_{\calB_{j}}(\pi) + C_1\Kprime\nu_j + C_2\Delta_{\calB_j},\\
    \whReg_{\calB_{j}}(\pi) \leq 2\Reg_{\calB_{j}}(\pi) + C_1\Kprime\nu_j + C_2\Delta_{\calB_j},  
\end{align*}
where $C_1=2000, C_2=24$. 
\end{lemma}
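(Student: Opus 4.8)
\textbf{Proof plan for Lemma~\ref{lem:Reg gap for B_j}.}
The plan is to translate the concentration bound on the average reward estimator (Lemma~\ref{lem:reward_deviation2}) into a bound on the empirical static regret $\whReg_{\calB_j}$, using the variance control that the algorithm explicitly enforces via (OP). First I would recall that $\Reg_{\calB_j}(\pi) = \calR_{\calB_j}(\pi_{\calB_j}^*) - \calR_{\calB_j}(\pi)$ and $\whReg_{\calB_j}(\pi) = \avgR_{\calB_j}(\whpi_{\calB_j}) - \avgR_{\calB_j}(\pi)$, so the difference between the two is controlled by $|\avgR_{\calB_j}(\pi') - \calR_{\calB_j}(\pi')|$ for the relevant policies $\pi'$, plus a term arising from the fact that $\pi_{\calB_j}^*$ and $\whpi_{\calB_j}$ may differ (this last gap is always favorable in one direction and bounded by the deviation at $\pi_{\calB_j}^*$ and $\whpi_{\calB_j}$ in the other). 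So the crux is to bound $|\avgR_{\calB_j}(\pi) - \calR_{\calB_j}(\pi)|$ uniformly.

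The second step is to handle the variance term $\frac{\nu_j}{|\calB_j|\log_2 T}\sum_{t\in\calB_j} U_t(\pi)$ appearing in Lemma~\ref{lem:reward_deviation2}. Using Lemma~\ref{lemma:variance_upperbound}, each $U_t(\pi)$ is bounded by $V_t(Q_{(i,m)},\nu_m,\pi)\log_2 T$ (or $V_t(Q_{(i,j)},\nu_j,\pi)$ when $M_t = \emptyset$), so the sum becomes a weighted sum of expected per-round variances. The point of averaging over $\calB_j = [\tau_i, \tau_i + 2^{j-1}L - 1]$ is that these per-round variances, summed over a block $k$, relate to $V_{\calB_{k-1}}(Q_k, \nu_k, \cdot)$ — the quantity that (OP) constrains via Eq.~\eqref{eqn:op2}: $\whV_{\calB_{k-1}}(Q_k,\nu_k,\pi) \le 2K + \whReg_{\calB_{k-1}}(\pi)/(C\nu_k)$. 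One then passes from $\whV$ to $V$ using Lemma~\ref{lem:variance_deviation2} (under \event{1}), picking up the additive $\frac{80C_0}{\nu_k^2|\calB_{k-1}|}$ term which, multiplied by the prefactor $\nu_j$, contributes something of order $\Kprime\nu_j$ (using $\nu_k^2 |\calB_{k-1}| \asymp \nu_j^2|\calB_j| \asymp C_0/K$ up to the doubling structure). The variation terms $\Delta_{\calB_j}$ enter when we swap $V$ measured on one sub-interval for $V$ on another, via Lemma~\ref{lem:variation and TVD}, giving the $C_2\Delta_{\calB_j}$ term. Because the replay intervals are all contained in $\calB_j$ within the current epoch, the indices $m$ appearing satisfy $\nu_m \ge \nu_j$ is false — rather $m \le j$ so $\nu_m \ge \nu_j$ actually — but their total weighted contribution telescopes or is dominated since $\sum_m \nu_m \cdot(\text{fraction of rounds in replay }m)$ stays controlled by the replay-probability design; I'd fold this into the $\Kprime$ factor.

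The third step assembles the pieces: after bounding $|\avgR_{\calB_j}(\pi) - \calR_{\calB_j}(\pi)| \lesssim \nu_j\bigl(\text{avg variance}\bigr) + \frac{C_0\log_2 T}{\nu_j|\calB_j|}$ and substituting the variance bound which itself contains $\whReg_{\calB_{k-1}}(\pi)/(C\nu_k)$, we get an inequality of the schematic form $|\avgR - \calR| \le (\text{small constant})\cdot\whReg_{\calB_j}(\pi) + O(\Kprime\nu_j + \Delta_{\calB_j})$, where the small constant comes from the $1/C$ with the huge constant $C = 1.2\times 10^7$ together with cross-block comparisons of empirical regrets (each costing at most the \testblock slack, which did \emph{not} fire since there is no restart in $\calB_j$). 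Feeding this back into $\Reg_{\calB_j}(\pi) \le \whReg_{\calB_j}(\pi) + 2\max_{\pi'}|\avgR_{\calB_j}(\pi') - \calR_{\calB_j}(\pi')|$ and solving the resulting self-referential inequality (the coefficient of $\whReg$ on the RHS is safely below $1/2$ thanks to $C$ being astronomically large) yields $\Reg_{\calB_j}(\pi) \le 2\whReg_{\calB_j}(\pi) + C_1\Kprime\nu_j + C_2\Delta_{\calB_j}$ with $C_1 = 2000$, $C_2 = 24$, and the symmetric statement follows identically by swapping the roles of $\avgR$ and $\calR$.

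\textbf{Main obstacle.} The delicate part is the bookkeeping in Step 2: expressing $\sum_{t\in\calB_j} U_t(\pi)$ — which mixes contributions from non-replay rounds (using $Q_j$) and from rounds inside various replay intervals of indices $m < j$ (using $Q_{(i,m)}$) — in terms of the (OP)-controlled quantities $\whV_{\calB_{k-1}}(Q_k,\nu_k,\pi)$ for each block $k \le j$, while correctly tracking that the replay-interval lengths and start-probabilities are tuned so the total contribution of replays does not blow up the $\Kprime\nu_j$ budget, and simultaneously keeping every constant explicit enough to land at $C_1 = 2000$ and $C_2 = 24$. The chain of inequalities is long but each link is one of Lemma~\ref{lem:reward_deviation2}, Lemma~\ref{lemma:variance_upperbound}, Lemma~\ref{lem:variance_deviation2}, Lemma~\ref{lem:variation and TVD}, Eq.~\eqref{eqn:op2}, and "\testblock passed"; the real work is arranging them so the self-referential $\whReg$ term closes with a coefficient below $1/2$.
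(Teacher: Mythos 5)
Your overall architecture (reduce to a uniform bound on $|\avgR_{\calB_j}(\pi)-\calR_{\calB_j}(\pi)|$ via Lemma~\ref{lem:reward_deviation2}, bound each $U_t(\pi)$ through Lemma~\ref{lemma:variance_upperbound}, feed in the (OP) constraint Eq.~\eqref{eqn:op2}, pass between $\whV$ and $V$ with Lemma~\ref{lem:variance_deviation2} and Lemma~\ref{lem:variation and TVD}, and close a self-referential inequality with coefficient below $1/2$, which is exactly what Lemma~\ref{lem:useful} does) matches the paper. The genuine gap is the cross-block step. The (OP) constraint for $Q_m$ controls $\whV_{\calB_{m-1}}(Q_m,\nu_m,\pi)$ by $2K+\whReg_{\calB_{m-1}}(\pi)/(C\nu_m)$, so you must relate $\whReg_{\calB_{m-1}}(\pi)$ (for every $m\le j$) to a level-$j$ quantity before the self-referential inequality can close. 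You propose to do this with ``\testblock slack, which did not fire since there is no restart in $\calB_j$,'' but that slack is not available: the only \testblock conditions that compare anything against $\calB_j$ are those of \testblock$(i,j)$, which is run at the very last step of block $j$, and this lemma cannot presuppose that test's outcome --- the paper later invokes precisely this lemma (in Lemma~\ref{lem:no rerun until}) to \emph{prove} that \testblock$(i,j)$ passes, so assuming it here would make the analysis circular. The tests that have demonstrably passed under the hypothesis, namely \testblock$(i,j')$ for $j'\le j-1$, only compare $\whReg_{\calB_{j'}}$ with $\whReg_{\calB_k}$ for $k<j'\le j-1$; none of them involves $\calB_j$, and there is no concentration-free way to pass from $\whReg_{\calB_{j-1}}$ to $\whReg_{\calB_j}$. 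So as written, your chain cannot reach the right-hand side $\whReg_{\calB_j}(\pi)$ (or $\Reg_{\calB_j}(\pi)$) that Lemma~\ref{lem:useful} needs.

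The missing idea is induction on the block index $j$. The paper's proof assumes the two inequalities for blocks $0,\dots,j-1$, uses the induction hypothesis to convert the $\whReg_{\calB_{m-1}}(\pi)$ appearing in Eq.~\eqref{eqn:op2} into $2\Reg_{\calB_{m-1}}(\pi)+C_1\Kprime\nu_{m-1}+C_2\Delta_{\calB_{m-1}}$, then applies Lemma~\ref{lem:variation and regret} to replace $\Reg_{\calB_{m-1}}$ by $\Reg_{\calB_j}$ at the price of $O(\Delta_{\calB_j})$, and finally uses $\nu_m\ge\nu_j$ to get the uniform bound $U_t(\pi)\le\bigl(\Reg_{\calB_j}(\pi)/(3\nu_j)+C_7\Kprime+3\Delta_{\calB_j}/\nu_j\bigr)\log_2 T$, after which Lemma~\ref{lem:useful} closes the loop; the base case $j=0$ is handled directly using the forced exploration $\nu_0\ge 1/(4K)$, which bounds both $\Reg_{\calB_0}$ and $\whReg_{\calB_0}$ by $O(\Kprime\nu_0)$. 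Incidentally, your worry about tracking the fraction of rounds spent in replay phases is unnecessary: since every round of $\calB_j$ uses some $Q_m$ with $m\le j$ and the bound above is uniform over such $m$, no accounting of the replay-probability design enters this lemma at all (it only matters for \term{1} and \term{3} later). Note also that no test condition is used anywhere in the paper's proof of this lemma --- the ``no restart'' hypothesis is only needed so that blocks $0,\dots,j$ are complete and each $Q_m$ was indeed computed by (OP) on the full $\calB_{m-1}$.
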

\begin{lemma}
\label{lem:2 and 3 hold}
Assume \event{1} holds. Let $\calA$ be a complete replay phase of index $m$ (that is, $|\calA|=2^mL$).
If for all $\pi$, Eq.~\eqref{eqn:test_cond2} and Eq.~\eqref{eqn:test_cond3} in \testreplay do not hold, then the following hold for all $\pi$:
\begin{align*}
    \Reg_\calA(\pi) \leq 2\whReg_\calA(\pi) + C_3\Kprime\nu_m + C_4\Delta_\calA,\\
    \whReg_\calA(\pi) \leq 2\Reg_\calA(\pi) + C_3\Kprime\nu_m + C_4\Delta_\calA,  
\end{align*}
where $C_3=2\times 10^6, C_4=24$. 
\end{lemma}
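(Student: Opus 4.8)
\textbf{Proof proposal for Lemma~\ref{lem:2 and 3 hold}.}

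The plan is to mirror the proof of Lemma~\ref{lem:Reg gap for B_j}, but relate the empirical and expected interval regret on the replay phase $\calA$ (rather than on a full block $\calB_j$), using the reward-concentration bound of Lemma~\ref{lem:reward_deviation2} tailored to replay intervals together with the variance-control guaranteed by the failure of Eq.~\eqref{eqn:test_cond3}. First I would write, for any $\pi$,
\[
\whReg_\calA(\pi) - \Reg_\calA(\pi) = \left(\avgR_\calA(\whpi_\calA) - \calR_\calA(\pi_\calA^*)\right) - \left(\avgR_\calA(\pi) - \calR_\calA(\pi)\right),
\]
and control each of the two reward deviations $\abs{\avgR_\calA(\pi') - \calR_\calA(\pi')}$ (for $\pi' \in \{\pi, \whpi_\calA, \pi_\calA^*\}$) by the second bound in Lemma~\ref{lem:reward_deviation2}, namely $\tfrac{\nu_m}{|\calA|\log_2 T}\sum_{t\in\calA}U_t(\pi') + \tfrac{C_0\log_2 T}{\nu_m|\calA|}$. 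The additive $\tfrac{C_0\log_2T}{\nu_m|\calA|}$ term contributes at the scale of $\Kprime\nu_m$ once we recall $\nu_m \asymp 1/\sqrt{K2^mL}$ and $|\calA| = 2^mL$, so it is absorbed into the $C_3\Kprime\nu_m$ slack.

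The crux is bounding the variance term $\tfrac{1}{|\calA|}\sum_{t\in\calA}U_t(\pi')$. Since $\calA$ is a replay phase of index $m$, at every $t\in\calA$ we have $m \in M_t$, so by Lemma~\ref{lemma:variance_upperbound} we get $U_t(\pi') \le V_t(Q_m,\nu_m,\pi')\log_2 T$, hence $\tfrac1{|\calA|}\sum_{t\in\calA}U_t(\pi') \le V_\calA(Q_m,\nu_m,\pi')\log_2T$. Now I convert $V_\calA$ to $\whV_\calA$ via Lemma~\ref{lem:variance_deviation2}, then use that Eq.~\eqref{eqn:test_cond3} does \emph{not} hold to bound $\whV_\calA(Q_m,\nu_m,\pi')$ by $41\whV_{\calB_{j-1}}(Q_m,\nu_m,\pi') + D_2 K$. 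The key structural fact I need is that $Q_m$ is, by construction (solution of (OP) on $\calB_{m-1}$ with probability $\nu_m$, and then a consistency argument through \testblock), a distribution whose empirical variance $\whV_{\calB_{j-1}}(Q_m,\nu_m,\pi')$ is controlled in terms of $2K + \whReg_{\calB_{j-1}}(\pi')/(C\nu_m)$ — this is exactly the content of constraint~\eqref{eqn:op2}, propagated from block $m$ to block $j-1$ using that the relevant cases of \testblock (specifically Eq.~\eqref{eqn:test_cond6}) did not fire. So after chaining these inequalities I obtain
\[
\frac1{|\calA|}\sum_{t\in\calA}U_t(\pi') = \otil\!\left(K + \frac{\whReg_{\calB_{j-1}}(\pi')}{\nu_m}\right),
\]
and multiplying by $\tfrac{\nu_m}{\log_2T}$ yields a contribution of order $\Kprime\nu_m + \whReg_{\calB_{j-1}}(\pi')$.

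It remains to replace the $\whReg_{\calB_{j-1}}(\pi')$ terms appearing here by $\whReg_\calA(\pi')$ up to the allowed slack: this is where the failure of Eq.~\eqref{eqn:test_cond2} enters, giving $\whReg_{\calB_{j-1}}(\pi') \le 4\whReg_\calA(\pi') + D_1\Kprime\nu_m$, and analogously the failure of Eq.~\eqref{eqn:test_cond1} (needed if one wants the reverse direction) gives $\whReg_\calA(\pi') \le 4\whReg_{\calB_{j-1}}(\pi') + D_1\Kprime\nu_m$. After bookkeeping — and using Lemma~\ref{lem:variation and regret} only if a $\Delta_\calA$ correction is needed when relating $\Reg$ on $\calA$ to $\Reg$ on sub-intervals, which is where the $C_4\Delta_\calA$ slack comes from — one collects all error terms into the claimed form $\Reg_\calA(\pi) \le 2\whReg_\calA(\pi) + C_3\Kprime\nu_m + C_4\Delta_\calA$ (and its symmetric counterpart) by choosing the constants $C_3, C_4$ large enough to dominate the accumulated products of $41, 6.4, D_1, D_2, C$, etc. The main obstacle I anticipate is the careful constant-tracking in this chaining — in particular making sure the factor-of-$2$ on $\whReg_\calA(\pi)$ (as opposed to a factor $4$ or worse) survives, which requires being slightly clever about when to apply the crude $\whReg_{\calB_{j-1}} \le 4\whReg_\calA + \cdots$ bound versus absorbing terms directly into the additive $\Kprime\nu_m$ slack; this is analogous to, but messier than, the corresponding step in Lemma~\ref{lem:Reg gap for B_j}.
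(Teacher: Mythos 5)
Your chain of inequalities is essentially the one in the paper's proof: bound the variance of the reward estimates on $\calA$ through $Q_{(i,m)}$ (Lemma~\ref{lemma:variance_upperbound}, valid since $m\in M_t$ throughout $\calA$), pass to empirical variance via \event{1}, use the failure of Eq.~\eqref{eqn:test_cond3} to move to $\whV_{\calB_{j-1}}$, then use the \testblock that passed at the end of block $j-1$ \emph{together with} the (OP) constraint~\eqref{eqn:op2} for $Q_{(i,m)}$ on $\calB_{m-1}$ — note this propagation needs both Eq.~\eqref{eqn:test_cond6} (for the variance) and Eq.~\eqref{eqn:test_cond5} (to turn $\whReg_{\calB_{m-1}}$ into $\whReg_{\calB_{j-1}}$); you only name the former — and finally the failure of Eq.~\eqref{eqn:test_cond2} to land on $\whReg_\calA(\pi)$, concluding with the reward concentration of Lemma~\ref{lem:reward_deviation2}. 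The paper packages the last step as Lemma~\ref{lem:useful}, whose hypothesis is pointwise in $t$, which is why it first writes $V_t\le V_\calA+\Delta_\calA/\nu_m$ via Lemma~\ref{lem:variation and TVD}; that is where the $C_4\Delta_\calA$ term actually originates, not from comparing $\Reg$ across sub-intervals as you guessed (your average-over-$\calA$ variant is also fine and simply does not need that term).

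The genuine problem is your parenthetical that the failure of Eq.~\eqref{eqn:test_cond1} is ``needed if one wants the reverse direction.'' The lemma asserts \emph{both} inequalities assuming only that Eqs.~\eqref{eqn:test_cond2} and~\eqref{eqn:test_cond3} fail, and this is essential downstream: Lemma~\ref{lem:detection} invokes the present lemma precisely in order to conclude that Eq.~\eqref{eqn:test_cond1} must fire, so a proof that additionally assumes Eq.~\eqref{eqn:test_cond1} fails would not establish the stated lemma and would void that application. Fortunately no such assumption is needed: once you have $U_t(\pi)\le \whReg_\calA(\pi)/(3\mu)+Z$ for all $t\in\calA$ and all $\pi$ (with $\mu=\nu_m/\log_2 T$), both directions follow from this single condition by the self-bootstrapping argument of Lemma~\ref{lem:useful} — bound $\whReg_\calA(\pi)-\Reg_\calA(\pi)$ by the deviations of $\whpi_\calA$ and $\pi$ (using $\whReg_\calA(\whpi_\calA)=0$), which gives $\whReg_\calA(\pi)\le\tfrac32\Reg_\calA(\pi)+\cdots$; then apply this to $\pi=\pi^*_\calA$ (where $\Reg_\calA(\pi^*_\calA)=0$) to control the $\whReg_\calA(\pi^*_\calA)$ term arising in the other direction. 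With that repair (and the trivial case $m=0$, where $\whV_\calA(Q_{(i,0)},\nu_0,\pi)\le 1/\nu_0\le 4K$ makes the (OP)/\testblock step unnecessary), your argument matches the paper's up to constant bookkeeping.
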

\begin{lemma}
\label{lem: regret relation with [1,e]}
Assume \event{1} holds. Let $\calA=[s,e]$ be a complete replay phase of index $m$ (thus $|\calA|=2^mL$). 
Then the following hold for all $\pi$:
\begin{align*}
    \Reg_\calA(\pi) \leq 2\whReg_\calA(\pi) + C_5\Kprime\nu_m + C_6\Delta_{[\tau_i,e]},\\
    \whReg_\calA(\pi) \leq 2\Reg_\calA(\pi) + C_5\Kprime\nu_m + C_6\Delta_{[\tau_i,e]},
\end{align*}
where $C_5=2000, C_6=24$.
\end{lemma}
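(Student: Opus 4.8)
The plan is to prove Lemma~\ref{lem: regret relation with [1,e]} as a corollary of Lemma~\ref{lem:2 and 3 hold}, by replacing the ``local'' variation term $\Delta_\calA$ with the ``cumulative'' variation term $\Delta_{[\tau_i,e]}$. First I would observe that $\calA=[s,e]$ is a sub-interval of $[\tau_i,e]$, so by the decomposability of total variation we have $\Delta_\calA \le \Delta_{[\tau_i,e]}$. Hence, \emph{if} the hypotheses of Lemma~\ref{lem:2 and 3 hold} were known to hold (i.e. Eq.~\eqref{eqn:test_cond2} and Eq.~\eqref{eqn:test_cond3} fail for all $\pi$), the claim would be immediate with $C_5=C_3$ — but that would give the wrong constant, and more importantly the present lemma is stated \emph{unconditionally} (only assuming \event{1}), so I cannot invoke that hypothesis. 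The real content is therefore to re-derive the two inequalities directly, without assuming anything about whether the replay test passed, paying the price of a worse (but still only $\Kprime\nu_m$-order) additive term.

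The core estimate to establish directly is a comparison between $\avgR_\calA(\pi)$ and $\calR_\calA(\pi)$. From Lemma~\ref{lem:reward_deviation2} (which holds under \event{1}, unconditionally), for the replay phase $\calA$ of index $m$ we get
\begin{align*}
\bigabs{\avgR_\calA(\pi) - \calR_\calA(\pi)} \le \frac{\nu_m}{|\calA|\log_2 T}\sum_{t\in\calA} U_t(\pi) + \frac{C_0\log_2 T}{\nu_m|\calA|}.
\end{align*}
The first term needs to be controlled using Lemma~\ref{lemma:variance_upperbound}, which bounds $U_t(\pi)$ by $V_t(Q_{(i,m)},\nu_m,\pi)\log_2 T$ when $m\in M_t$ (which holds throughout $\calA$), so $\frac{1}{|\calA|}\sum_{t\in\calA}U_t(\pi) \le V_\calA(Q_{(i,m)},\nu_m,\pi)\log_2 T$. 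Then Lemma~\ref{lem:variance_deviation2} transfers $V_\calA$ to $\whV_\calA$ up to a $\frac{C_0}{\nu_m^2|\calA|}$ additive term, and the structural bound on $\whV_\calA(Q_{(i,m)},\nu_m,\pi)$ — which should come from the fact that $Q_{(i,m)}$ was the solution to (OP) on interval $\calB_{(i,m-1)}$ satisfying Eq.~\eqref{eqn:op2}, combined with the concentration relating $\whV$ on $\calB_{(i,m-1)}$ to $\whV$ on $\calA$ — gives $\whV_\calA(Q_{(i,m)},\nu_m,\pi) \lesssim K + \frac{\whReg_{\calB_{(i,m-1)}}(\pi)}{\nu_m}$ plus lower-order terms. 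Feeding everything back, we get $\bigabs{\avgR_\calA(\pi)-\calR_\calA(\pi)} \lesssim \Kprime\nu_m + \nu_m\,\whReg_{\calB_{(i,m-1)}}(\pi)$ plus a variation correction of order $\Delta_{[\tau_i,e]}/\ldots$ from using Lemma~\ref{lem:variation and TVD} to move $V$-quantities between intervals. Adding and subtracting the analogous bound for the maximizing policy and for $\whpi_\calA$ versus $\pi_\calA^*$, and using $\whReg_{\calB_{(i,m-1)}}(\pi)$ is itself small-ish, converts this into the claimed pair of inequalities between $\Reg_\calA(\pi)$ and $\whReg_\calA(\pi)$; the $\nu_m\whReg_{\calB}$ cross-term is absorbed into the $C_5\Kprime\nu_m$ slack after noting $\whReg_{\calB_{(i,m-1)}}(\pi)\le 2$ trivially, or more carefully bounded by an earlier regret-concentration step.

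Concretely, the steps in order would be: (1) note $\calA\subseteq[\tau_i,e]$ hence $\Delta_\calA\le\Delta_{[\tau_i,e]}$; (2) apply Lemma~\ref{lem:reward_deviation2} to $\calA$; (3) apply Lemma~\ref{lemma:variance_upperbound} to replace $U_t(\pi)$ by $V_t(Q_{(i,m)},\nu_m,\pi)\log_2T$ on $\calA$; (4) apply Lemma~\ref{lem:variance_deviation2} to pass from $V_\calA$ to $\whV_\calA$; (5) apply Eq.~\eqref{eqn:op2} for $Q_{(i,m)}$ on $\calB_{(i,m-1)}$ together with Lemma~\ref{lem:variance_deviation2} (or an analog) relating $\whV_{\calB_{(i,m-1)}}$ to $\whV_\calA$ via $V$-concentration and Lemma~\ref{lem:variation and TVD}, picking up a $\Delta_{[\tau_i,e]}/\nu_m$ type term that, when multiplied back by $\nu_m$, yields $O(\Delta_{[\tau_i,e]})$; (6) likewise bound $\whReg_{\calB_{(i,m-1)}}(\pi)$ using Eq.~\eqref{eqn:op1} and the earlier regret concentration on $\calB_{(i,m-1)}$ (Lemma~\ref{lem:Reg gap for B_j}), or simply bound it by $2$; (7) collect terms, verify the numeric constants $C_5=2000$, $C_6=24$, and symmetrize to get both directions. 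The main obstacle I expect is step (5)–(6): carefully tracking that the variance and regret of the \emph{replayed} distribution $Q_{(i,m)}$ on the replay interval $\calA$ remain controlled \emph{even without} assuming the replay test passed, since $\calA$ may overlap a distribution change — this is exactly where the $\Delta_{[\tau_i,e]}$ (rather than merely $\Delta_\calA$) term is needed to bound the drift of $V(Q_{(i,m)},\cdot)$ between $\calB_{(i,m-1)}$ and $\calA$, and getting the bookkeeping of which interval bounds which quantity (and the associated logarithmic $\Kprime=(\log_2 T)K$ factors from Lemma~\ref{lemma:variance_upperbound}) exactly right is the fiddly part.
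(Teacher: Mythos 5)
Your skeleton is the same as the paper's: bound the variance of the replayed distribution $Q_{(i,m)}$ on $\calA$ by drifting back to $\calB_{(i,m-1)}$ (the interval on which $Q_{(i,m)}$ solved (OP)), paying a $\Delta_{[\tau_i,e]}$ price via Lemma~\ref{lem:variation and TVD}, then invoke Eq.~\eqref{eqn:op2} and reward concentration and symmetrize. The gap is in how you dispose of the resulting $\whReg_{\calB_{(i,m-1)}}(\pi)$ term. It is \emph{not} trivially $\le 2$: empirical regrets are built from importance-weighted estimates, so $\whReg_{\calB_{(i,m-1)}}(\pi)$ can be as large as the inverse of the smallest sampling probability (order $\sqrt{KT/C_0}$). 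Nor can the cross-term be ``absorbed into the $C_5\Kprime\nu_m$ slack'' as a policy-independent constant: even after Lemma~\ref{lem:Reg gap for B_j} you only get $\whReg_{\calB_{(i,m-1)}}(\pi)\le 2\Reg_{\calB_{(i,m-1)}}(\pi)+C_1\Kprime\nu_{m-1}+C_2\Delta_{\calB_{(i,m-1)}}$, and replacing $\Reg_{\calB_{(i,m-1)}}(\pi)$ by $1$ leaves a constant of order $1/C$, which is not $O(\Kprime\nu_m)$ uniformly in $T$ (since $\Kprime\nu_m\to 0$) and in any case wrecks the stated constants. What the paper does instead is keep this term proportional to the regret of the \emph{same} policy: Lemma~\ref{lem:Reg gap for B_j} converts $\whReg_{\calB_{(i,m-1)}}(\pi)$ to $\Reg_{\calB_{(i,m-1)}}(\pi)$, Lemma~\ref{lem:variation and regret} converts that to $\Reg_\calA(\pi)$ (another $\Delta_{[\tau_i,e]}$ price), giving $U_t(\pi)\le\bigl(\Reg_\calA(\pi)/(3\nu_m)+Z\bigr)\log_2 T$, and then Lemma~\ref{lem:useful} — whose proof is exactly the $\pi_\calA^*$/$\whpi_\calA$ symmetrization you sketch, but which crucially requires the coefficient $1/(3\mu)$ in front of the policy's own regret — delivers both directions with the factor $2$. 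Your citation of Eq.~\eqref{eqn:op1} is a red herring here: it controls only the $Q$-weighted average of empirical regrets, not the individual $\whReg_{\calB_{(i,m-1)}}(\pi)$ that appears.

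A second, lesser issue is your detour through $\whV_\calA$ (steps 4--5). Without the replay test you must relate $\whV_\calA$ to $\whV_{\calB_{(i,m-1)}}$ through expected variances, i.e.\ three applications of the \event{1} variance concentration plus Lemma~\ref{lem:variation and TVD}, inflating the multiplicative constant from $6.4$ to roughly $263$ (this is essentially the proof of Lemma~\ref{lem:2 and 3 hold} with test conditions replaced by concentration, which is why that lemma's $C_3$ is $2\times 10^6$). The paper's proof never touches $\whV_\calA$: for each $t\in\calA$ it bounds $V_t(Q_{(i,m)},\nu_m,\pi)\le V_{\calB_{(i,m-1)}}(Q_{(i,m)},\nu_m,\pi)+\Delta_{[\tau_i,e]}/\nu_m$ and applies \event{1} and Eq.~\eqref{eqn:op2} once. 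Your route is still valid under \event{1}, but it cannot recover $C_5=2000$, $C_6=24$, and these constants are used downstream (e.g.\ Lemma~\ref{lem:no rerun until} needs $3C_5+3C_6+4\le D_1$). So: right architecture, but you must make the regret-proportional/self-bounding mechanism explicit rather than absorbing the cross-term, and you should run the variance chain through expected variances directly to get the claimed constants.
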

To prove these results, we first prove the following auxiliary lemma. Basically it shows that in an interval $\calI$, if we can bound the instant variance of a policy $\pi$ by some quantity that is proportional to the regret of $\pi$, then $\Reg_\calI(\pi)$ and $\whReg_\calI(\pi)$ are close. 
\begin{lemma}
\label{lem:useful}
Assume \event{1} holds. 
Consider an interval $\calI$ that is either $\calB_j$ or $\calA$ as defined in Lemma~\ref{lem:reward_deviation2},
and let $\mu$ be $\nu_j/\log_2 T$ if $\calI$ is $\calB_j$ and $\nu_m/\log_2 T$ if $\calI$ is $\calA$.
If either of the following conditions holds: 
\begin{align*}
    U_t(\pi) \leq \frac{\Reg_\calI(\pi)}{3\mu} + Z, \ \ \ \ \ \ \ \forall t\in\calI, \forall \pi\in\Pi,\\
    U_t(\pi) \leq \frac{\whReg_\calI(\pi)}{3\mu} + Z, \ \ \ \ \ \ \ \forall t\in\calI, \forall \pi\in\Pi,
\end{align*} 
for some $Z > 0$, then we have
\begin{align*}
    \Reg_\calI(\pi) \leq 2\whReg_\calI(\pi) + 8\mu Z + \frac{8C_0}{\mu|\calI|}, \ \ \ \ \  \whReg_\calI(\pi) \leq 2\Reg_\calI(\pi) + 8\mu Z + \frac{8C_0}{\mu|\calI|}. 
\end{align*}
\end{lemma}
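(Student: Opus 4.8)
The plan is to derive the two claimed inequalities from the reward concentration bound (Lemma~\ref{lem:reward_deviation2}) by plugging in the variance hypothesis and then solving the resulting self-referential inequality for $\Reg_\calI(\pi)$ or $\whReg_\calI(\pi)$. The key observation is that both $\Reg_\calI$ and $\whReg_\calI$ are defined as a difference of rewards against the respective best policy, so the concentration error on a single policy propagates to the regret with at most a doubling of the error term (one copy for $\pi$ itself, one copy for the maximizing policy).

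\textbf{Step 1: control the deviation between $\avgR_\calI(\pi)$ and $\calR_\calI(\pi)$.} Recall that in the notation of Lemma~\ref{lem:reward_deviation2}, $\calI$ is either $\calB_j$ (with $\mu=\nu_j/\log_2 T$) or a replay interval $\calA$ (with $\mu=\nu_m/\log_2 T$), and in both cases the lemma gives
\[
\left|\avgR_\calI(\pi)-\calR_\calI(\pi)\right| \;\le\; \frac{\mu}{|\calI|}\sum_{t\in\calI}U_t(\pi) \;+\; \frac{C_0}{\mu|\calI|}.
\]
Now substitute whichever of the two hypotheses holds, say $U_t(\pi)\le \Reg_\calI(\pi)/(3\mu)+Z$ for all $t\in\calI$. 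Then $\frac{\mu}{|\calI|}\sum_{t\in\calI}U_t(\pi)\le \frac{1}{3}\Reg_\calI(\pi)+\mu Z$, so
\[
\left|\avgR_\calI(\pi)-\calR_\calI(\pi)\right| \;\le\; \tfrac13\Reg_\calI(\pi) + \mu Z + \frac{C_0}{\mu|\calI|}.
\]
Abbreviate the last two additive terms as $B \triangleq \mu Z + C_0/(\mu|\calI|)$; note $8B \le 8\mu Z + 8C_0/(\mu|\calI|)$, matching the target error term (with plenty of slack).

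\textbf{Step 2: pass from per-policy deviation to regret deviation.} Write $\pi^\star=\pi_\calI^\star$ and $\whpi=\whpi_\calI$ for the expected-best and empirically-best policies on $\calI$. Using optimality of $\whpi$ for $\avgR_\calI$ and of $\pi^\star$ for $\calR_\calI$, together with the Step~1 bound applied to $\pi$, $\pi^\star$, and $\whpi$, one gets
\[
\Reg_\calI(\pi) = \calR_\calI(\pi^\star)-\calR_\calI(\pi) \le \avgR_\calI(\whpi)-\avgR_\calI(\pi) + \big(|\calR_\calI(\pi^\star)-\avgR_\calI(\pi^\star)| + |\avgR_\calI(\whpi)-\calR_\calI(\whpi)| + |\avgR_\calI(\pi)-\calR_\calI(\pi)|\big),
\]
where I used $\calR_\calI(\pi^\star)\le \avgR_\calI(\pi^\star)+ \text{(dev)}$, $\avgR_\calI(\pi^\star)\le\avgR_\calI(\whpi)$, and $\avgR_\calI(\whpi)\le\calR_\calI(\whpi)+\text{(dev)}$ — wait, I need to be careful with signs; the clean way is $\calR_\calI(\pi^\star)-\calR_\calI(\pi)\le [\avgR_\calI(\pi^\star)+\text{dev}(\pi^\star)] - [\avgR_\calI(\pi)-\text{dev}(\pi)] \le \avgR_\calI(\whpi)-\avgR_\calI(\pi)+\text{dev}(\pi^\star)+\text{dev}(\pi) = \whReg_\calI(\pi)+\text{dev}(\pi^\star)+\text{dev}(\pi)$. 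Each deviation term is at most $\tfrac13\Reg_\calI(\cdot)+B$ from Step~1, and since $\Reg_\calI$ is nonnegative and $\Reg_\calI(\pi^\star)=0$, we obtain $\Reg_\calI(\pi)\le \whReg_\calI(\pi) + \tfrac13\Reg_\calI(\pi) + 2B$, hence $\tfrac23\Reg_\calI(\pi)\le\whReg_\calI(\pi)+2B$, i.e. $\Reg_\calI(\pi)\le \tfrac32\whReg_\calI(\pi)+3B\le 2\whReg_\calI(\pi)+8B$. The symmetric bound $\whReg_\calI(\pi)\le 2\Reg_\calI(\pi)+8B$ follows the same way, swapping the roles of $\avgR$ and $\calR$ and of $\whpi$ and $\pi^\star$; if instead the second hypothesis ($U_t\le\whReg_\calI(\pi)/(3\mu)+Z$) is the one assumed, the argument is identical after bounding $\text{dev}(\cdot)\le\tfrac13\whReg_\calI(\cdot)+B$ and using $\whReg_\calI(\whpi)=0$.

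\textbf{Main obstacle.} The only genuinely delicate point is bookkeeping the signs in Step~2 so that exactly two deviation terms (not three) survive and that the surviving $\Reg_\calI$ (or $\whReg_\calI$) terms are the ones that can be absorbed into the left-hand side — in particular using $\Reg_\calI(\pi^\star)=0$ (resp.\ $\whReg_\calI(\whpi)=0$) to kill the deviation contribution coming from the maximizer. Everything else is mechanical. I would present Step~2 carefully for the first inequality under the first hypothesis, then remark that the remaining three cases are symmetric. The constant $8$ in front of $\mu Z$ and $8C_0/(\mu|\calI|)$ is loose (we only need $3$), which gives comfortable slack and lets us avoid tracking $\log_2 T$ factors precisely.
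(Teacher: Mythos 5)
Your overall route is the same as the paper's: use the \event{1} deviation bound $|\avgR_\calI(\pi)-\calR_\calI(\pi)|\le \frac{\mu}{|\calI|}\sum_{t\in\calI}U_t(\pi)+\frac{C_0}{\mu|\calI|}$, plug in the variance hypothesis, and use optimality of $\whpi_\calI$ (resp.\ $\pi^*_\calI$) so that only two deviation terms survive; your derivation of the first inequality coincides with the paper's.

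The gap is in your claim that the second inequality ``follows the same way, swapping the roles of $\avgR$ and $\calR$ and of $\whpi$ and $\pi^\star$.'' Under the hypothesis you are actually working with (say $U_t(\pi)\le \Reg_\calI(\pi)/(3\mu)+Z$), the swap gives $\whReg_\calI(\pi)-\Reg_\calI(\pi)\le \mathrm{dev}(\whpi_\calI)+\mathrm{dev}(\pi)$ with $\mathrm{dev}(\whpi_\calI)\le \tfrac13\Reg_\calI(\whpi_\calI)+B$, and $\Reg_\calI(\whpi_\calI)$ is neither zero nor absorbable into the left-hand side $\whReg_\calI(\pi)$. The identity $\whReg_\calI(\whpi_\calI)=0$, which your ``main obstacle'' paragraph invokes to kill the maximizer's contribution in this case, does not apply directly, because this hypothesis controls deviations by \emph{expected} regret, not empirical regret. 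The missing step is a bootstrap: apply the already-proved first inequality at $\pi=\whpi_\calI$ to get $\Reg_\calI(\whpi_\calI)\le \tfrac32\whReg_\calI(\whpi_\calI)+3B=3B$, and only then conclude $\whReg_\calI(\pi)\le \tfrac43\Reg_\calI(\pi)+3B\le 2\Reg_\calI(\pi)+8B$; symmetrically, under the second hypothesis it is the \emph{first} conclusion that needs this bootstrap together with $\Reg_\calI(\pi^*_\calI)=0$. This is exactly the step the paper spells out (``we use the fact \ldots which we just obtained previously''); once you add it, your argument and constants go through.
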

\begin{proof}{\textbf{of Lemma~\ref{lem:useful}.}}
Suppose we have $ U_t(\pi) \leq \frac{\Reg_\calI(\pi)}{3\mu} + Z$ for all $t\in\calI$ and $\pi\in\Pi$, then
    \begin{align*}
       &\Reg_{\calI}(\pi) - \whReg_{\calI}(\pi)\\
       &= \calR_{\calI}(\pi^*_\calI)  - \calR_{\calI}(\pi) - \avgR_{\calI}(\whpi_\calI) + \avgR_\calI(\pi)\\
       &\leq \left( \calR_{\calI}(\pi_{\calI}^*)- \avgR_{\calI}(\pi^*_{\calI}) \right) + \left(\avgR_{\calI}(\pi) - \calR_{\calI}(\pi)\right) \tag{optimality of $\whpi_{\calI}$}\\
       &\leq \frac{\mu}{|\calI|} \sum_{t\in \calI}\left( U_t(\pi^*_{\calI}) +  U_t(\pi)\right) + \frac{2C_0}{\mu|\calI|} \tag{\event{1}}\\
       &\leq  \frac{1}{3} \Reg_\calI(\pi_\calI^*) + \frac{1}{3} \Reg_\calI(\pi) + 2\mu Z + \frac{2C_0}{\mu|\calI|}, \\
       &=  \frac{1}{3} \Reg_\calI(\pi) + 2\mu Z + \frac{2C_0}{\mu|\calI|},  \tag{$\Reg_\calI(\pi_\calI^*)=0$}
   \end{align*}
   which gives $\Reg_{\calI}(\pi)\leq \frac{3}{2}\whReg_{\calI}(\pi) + 3\mu Z + \frac{3C_0}{\mu|\calI|}\leq 2\whReg_{\calI}(\pi) + 8\mu Z + \frac{8C_0}{\mu|\calI|}$, proving the first part of the lemma. 
   On the other hand, 
   \begin{align*}
       &\whReg_{\calI}(\pi) - \Reg_{\calI}(\pi)\\
       &= \avgR_{\calI}(\whpi_\calI) - \avgR_\calI(\pi)-\calR_{\calI}(\pi^*_\calI)  + \calR_{\calI}(\pi) \\
       &\leq \left( \avgR_{\calI}(\whpi_{\calI}) -\calR_{\calI}(\whpi_{\calI}) \right) + \left(\calR_{\calI}(\pi)-\avgR_{\calI}(\pi) \right) \tag{optimality of $\pi^*_\calI$}\\
       &\leq \frac{\mu}{|\calI|} \sum_{t\in \calI}\left(U_t(\whpi_{\calI}) +  U_t(\pi)\right) + \frac{2C_0}{\mu|\calI|} \tag{\event{1}}\\
       &\leq  \frac{1}{3} \Reg_\calI(\pi) + \frac{1}{3} \Reg_\calI(\whpi_{\calI}) + 2\mu Z + \frac{2C_0}{\mu|\calI|}\\
       &\leq  \frac{1}{2} \whReg_\calI(\pi) + \frac{1}{2}\whReg_\calI(\whpi_\calI) + 4\mu Z + \frac{4C_0}{\mu|\calI|}  \\
       & = \frac{1}{2} \whReg_\calI(\pi)  + 4\mu Z + \frac{4C_0}{\mu|\calI|}.    \tag{$\whReg_\calI(\whpi_\calI)=0$}
   \end{align*}
    where in the second to last inequality we use the fact $\Reg_{\calI}(\pi)\leq \frac{3}{2}\whReg_{\calI}(\pi) + 3\mu Z + \frac{3C_0}{\mu|\calI|}$ for all $\pi$, which we just obtained previously. The last inequality gives $
        \whReg_{\calI}(\pi)\leq 2\Reg_{\calI}(\pi) + 8\mu Z + \frac{8C_0}{\mu |\calI|}$, proving the second part. 
        
        The proof under the second condition proceeds in the exact same way.
\end{proof}
Now we are ready to prove the three lemmas. 
We will frequently use the following facts: 
\begin{align*}
    \frac{C_0}{\nu_j^2 |\calB_j|} = \frac{C_0}{\nu_j^2 2^j L} = K,  \ \ \ \nu_0 = \sqrt{\frac{C_0}{K\lceil 4K C_0\rceil}} \in \left[\frac{1}{4K}, \frac{1}{2K}\right]. 
\end{align*}

\begin{proof}{\textbf{of Lemma~\ref{lem:Reg gap for B_j}.}}
    Assume \event{1} holds.  We prove by induction on $j$. When $j=0$, we have $ \Reg_{\calB_{0}}(\pi)\leq 1 \leq 4\Kprime\nu_0$, and
    \begin{align*}
        \whReg_{\calB_0}(\pi)-\Reg_{\calB_0}(\pi) 
        &= \avgR_{\calB_0}(\whpi_{\calB_0}) - \avgR_{\calB_0}(\pi) - \calR_{\calB_0}(\pi^*_{\calB_0}) + \calR_{\calB_0}(\pi)\\
        &\leq \avgR_{\calB_0}(\whpi_{\calB_0}) - \calR_{\calB_0}(\whpi_{\calB_0}) - \avgR_{\calB_0}(\pi)  + \calR_{\calB_0}(\pi) \tag{by the optimality of $\pi_{\calB_0}^*$}\\
        &\leq 2\left( \frac{\nu_0}{|\calB_0|}\sum_{t\in \calB_0}\frac{1}{\nu_0} + \frac{C_0}{\nu_0 L} \right)\leq 4, \tag{\event{1}}
    \end{align*}
    and thus $\whReg_{\calB_0}(\pi)\leq 5\leq 20\Kprime\nu_0$. 
    Below we prove the inequalities for a general $j$, assuming that they hold for $\{0,\ldots, j-1\}$.  
    For all $t\in \calB_{j}$, and all $m\in [1,j]$, 
    \begin{align}
        V_t(Q_m, \nu_m, \pi) &\leq V_{\calB_{m-1}}(Q_m, \nu_m, \pi) + \frac{\Delta_{\calB_j}}{\nu_m} \tag{Lemma~\ref{lem:variation and TVD}}\\
        &\leq 6.4\whV_{\calB_{m-1}}(Q_m, \nu_m, \pi) + \frac{80C_0}{\nu_m^2 |\calB_{m-1}|} + \frac{\Delta_{\calB_j}}{\nu_m} \tag{\event{1}} \\
        &\leq 6.4\left(2K + \frac{\whReg_{\calB_{m-1}}(\pi)}{C\nu_m}\right) + \frac{80C_0}{\nu_m^2 2^{m-1}L} + \frac{\Delta_{\calB_j}}{\nu_m} \tag{Eq.~\eqref{eqn:op2}}\\
        &\leq 6.4\left(2K + \frac{2\Reg_{\calB_{m-1}}(\pi) + C_1\Kprime \nu_{m-1}+C_2\Delta_{\calB_j}}{C\nu_m}\right) + 160K + \frac{\Delta_{\calB_j}}{\nu_m}\tag{By induction hypothesis} \\
        &\leq \frac{\Reg_{\calB_{m-1}}(\pi)}{3\nu_m} +C_7\Kprime + \frac{2\Delta_{\calB_j}}{\nu_m} \tag{let $C_7=\frac{\sqrt{2}C_1}{C}+12.8+160$} \\
        &\leq \frac{\Reg_{\calB_j}(\pi)}{3\nu_m} + C_7\Kprime + \frac{3\Delta_{\calB_j}}{\nu_m} \tag{Lemma~\ref{lem:variation and regret}} \\
        &\leq \frac{\Reg_{\calB_j}(\pi)}{3\nu_j} + C_7\Kprime + \frac{3\Delta_{\calB_j}}{\nu_j}.  \tag{$\nu_m\geq \nu_j$}
        & \\ \label{eqn:V bound by Reg}
    \end{align}
    Besides, when $j=0$, $V_t(Q_0, \nu_0, \pi)\leq \frac{1}{\nu_0}\leq 4\Kprime$. By Lemma~\ref{lemma:variance_upperbound}, we always have $U_t(\pi)\leq V_t(Q_m, \nu_m, \pi)\log_2 T$ for some $m\in[0,j]$. Therefore, $U_t(\pi)\leq \left(\frac{\Reg_{\calB_j}(\pi)}{3\nu_j} + C_7\Kprime + \frac{3\Delta_{\calB_j}}{\nu_j}\right)\log_2 T$.  
    Using Lemma~\ref{lem:useful} with $ Z=\left(C_7\Kprime + \frac{3\Delta_{B_j}}{\nu_j}\right)\log_2 T $, we get the two desired inequalities. 
\end{proof}

\begin{proof}{\textbf{of Lemma~\ref{lem:2 and 3 hold}.}}
    For all $t\in\calA$, 
    \begin{align*}
        V_t(Q_m, \nu_m, \pi)
        &\leq V_\calA(Q_m, \nu_m, \pi) + \frac{\Delta_\calA}{\nu_m} \tag{Lemma~\ref{lem:variation and TVD}}\\
        &\leq 6.4\whV_\calA(Q_m, \nu_m, \pi) + \frac{80C_0}{\nu_m^2 |\calA|} + \frac{\Delta_\calA}{\nu_m} \tag{\event{1}}\\
        &\leq 6.4\left( 41\hat{V}_{\calB_{j-1}}(Q_m, \nu_m, \pi) + D_2K \right) + 80K + \frac{\Delta_\calA}{\nu_m} \tag{Eq.~\eqref{eqn:test_cond3} does not hold}\\
        &\leq  263\whV_{\calB_{j-1}}(Q_m, \nu_m, \pi) + (6.4D_2+80)K+  \frac{\Delta_\calA}{\nu_m}  \\ 
        &\leq 263(41\whV_{\calB_{m-1}}(Q_m, \nu_m, \pi) + D_5K) + (6.4D_2+80)K+  \frac{\Delta_\calA}{\nu_m} \tag{Eq.~\eqref{eqn:test_cond6} does not hold}\\
        &\leq \frac{\whReg_{\calB_{m-1}}(\pi)}{1000\nu_m} + C_8K+  \frac{\Delta_\calA}{\nu_m}  \tag{by Eq.~\eqref{eqn:op2}}\\
        & \tag{let $C_8=  263\times 41\times 2 +263\times D_5 + 6.4D_2+80$} \\
        &\leq \frac{4\whReg_{\calB_{j-1}}(\pi) + D_4\Kprime \nu_m}{1000\nu_m} + C_8K+  \frac{\Delta_\calA}{\nu_m}  \tag{Eq.~\eqref{eqn:test_cond5}} \\
        &= \frac{\whReg_{\calB_{j-1}}(\pi)}{250\nu_m} + \left( C_8+0.001D_4 \right)\Kprime + \frac{\Delta_\calA}{\nu_m} \\
        &\leq \frac{4\whReg_{\calA}(\pi)+D_1\Kprime\nu_m}{250\nu_m} +(C_8+0.001D_4)\Kprime +  \frac{\Delta_\calA}{\nu_m} \tag{Eq.~\eqref{eqn:test_cond2} does not hold} \\
        &\leq \frac{\whReg_{\calA}(\pi)}{3\nu_m} + C_9\Kprime + \frac{\Delta_\calA}{\nu_m}.   \tag{let $C_9=\frac{D_1}{250}+C_8+0.001D_4$}
    \end{align*}
    Using $U_t(\pi)\leq V_t(Q_m, \nu_m, \pi)\log_2 T$ (Lemma~\ref{lemma:variance_upperbound}) and invoking Lemma~\ref{lem:useful} with \[Z=\left(C_9\Kprime + \frac{\Delta_{\calA}}{\nu_m}\right)\log_2 T\] finish the proof. 
   
\end{proof}

\begin{proof}{\textbf{of Lemma~\ref{lem: regret relation with [1,e]}.}}
For all $t\in \calA$, 
\begin{align*}
    V_t(Q_m, \nu_m, \pi) 
    &\leq V_{\calB_{m-1}}(Q_m, \nu_m, \pi) + \frac{\Delta_{[\tau_i,e]}}{\nu_m} \tag{Lemma~\ref{lem:variation and TVD}}\\
    &\leq 6.4\whV_{\calB_{m-1}}(Q_m, \nu_m, \pi) + \frac{80C_0}{\nu_m^2 |\calB_{m-1}|} + \frac{\Delta_{[\tau_i,e]}}{\nu_m}  \tag{\event{1}}\\
    &\leq 6.4\left(2K+\frac{\whReg_{\calB_{m-1}}(\pi)}{C\nu_m}\right) +160K +  \frac{\Delta_{[\tau_i,e]}}{\nu_m} \tag{Eq.~\eqref{eqn:op2}}\\
    &\leq \frac{6.4\left(2\Reg_{\calB_{m-1}}(\pi) + C_1\Kprime\nu_{m-1} + C_2\Delta_{\calB_{m-1}}\right)}{C\nu_m} + (12.8K+160K) +  \frac{\Delta_{[\tau_i,e]}}{\nu_m}  \tag{Lemma~\ref{lem:Reg gap for B_j}}\\
    &\leq \frac{\Reg_{\calA}(\pi)}{3\nu_m} + C_{10}\Kprime + \frac{2\Delta_{[\tau_i,e]}}{\nu_m}. \tag{Lemma~\ref{lem:variation and regret}}\\
    & \tag{let $C_{10}=\frac{6.4\sqrt{2}C_1}{C}+12.8+160$} 
\end{align*}
Using $U_t(\pi)\leq V_t(Q_m, \nu_m, \pi)\log_2 T$ by Lemma~\ref{lemma:variance_upperbound} and invoking Lemma~\ref{lem:useful} with \[Z=\left(C_{10}\Kprime + \frac{2\Delta_{[\tau_i,e]}}{\nu_m}\right)\log_2 T\]  finish the proof. 
\end{proof}

\section{Omitted Details in Section~\ref{sec: block reg} -- Bounding Individual Regret Terms}
\label{appendix: first two terms}
In Section~\ref{sec: block reg}, we have partitioned a block $\calJ$ into $\Gamma=\order\left(\min\{S_\calJ, 1+(KC_0)^{-1/3}\Delta_{\calJ}^{2/3}|\calJ|^{1/3}\}\right)$ intervals $\calI_1\cup \calI_2 \cup \cdots \cup \calI_\Gamma$, such that each one has $\Delta_{\calI_k}\leq \alpha_{\calI_k}$. In particular, we use the procedure described in Algorithm~\ref{alg:partition}, which only happens in the analysis, to do the partition. Then we obtain a regret bound of block $\calJ$ up to the first $\Gamma-1$ intervals as in Eq.~\eqref{eqn: term decomposition}. 

For the remaining interval $\calI_\Gamma$, because it might be interupted by restart, the terms $\alpha_{\calI_\Gamma}$ and $\varepsilon_{\calI_\Gamma}$ produced by Lemma~\ref{lem:interval regret} would depend on when we end the block, which is random and makes the analysis difficult. We resolve this issue by introducing the following \textit{fictitious block} and a new partition over it. 
\begin{definition}[fictitious block]
Define
\begin{align}
\calJ' \triangleq [\tau_i+2^{j-1}L, \tau_i+2^jL -1],     \label{eqn: Jprime definition}
\end{align}
and let $\calI'_1\cup \calI'_2, \ldots \cup \calI'_{\Upsilon}$ be a partition of $\calJ'$ using the procedure in Algorithm~\ref{alg:partition}.  
\end{definition}
Comparing the definition of $\calJ'$ to that of $\calJ$ in Eq.~\eqref{eqn: Js definition}, one can see that $\calJ$ and $\calJ'$ only differ when there is a restart triggered in block $j$. Put differently, $\calJ'$ is the \textit{planned} block $j$ while $\calJ$ is the realized block $j$. Conditioned on all history before block $j$, $\calJ'$, as well as the intervals $\calI'_1, \ldots, \calI'_{\Upsilon}$ and the excess regret and excess regret thresholds defined on them, are determined, while $\calJ$, $\calI_1, \ldots, \calI_{\Gamma}$ and similar quantities on them are random. The following facts are clear by the procedure in Algorithm~\ref{alg:partition}: 
\begin{fact}\label{fact: fictitious fact}
    Let $\{\calI_1, \calI_2, \ldots, \calI_\Gamma\}$ be the partition of $\calJ$ (defined in \eqref{eqn: Js definition}) using Algorithm~\ref{alg:partition}, and also $\{\calI_1', \calI_2', \ldots, \calI_\Upsilon'\}$ be the partition of $\calJ'$ (defined in \eqref{eqn: Jprime definition}) using the same algorithm. Then (a) $\Gamma\leq \Upsilon$,  (b) $\calI_k=\calI_k', \forall k\in [\Gamma-1]$, and (c) $s_{\calI_\Gamma}=s_{\calI_\Gamma'}, e_{\calI_\Gamma}\leq e_{\calI_\Gamma'}$, where $\calI_\Gamma\triangleq [s_{\calI_\Gamma}, e_{\calI_\Gamma}],  \calI_{\Gamma}'\triangleq [s_{\calI_\Gamma'}, e_{\calI_\Gamma'}]$. 
\end{fact}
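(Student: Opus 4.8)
The whole point is that Algorithm~\ref{alg:partition} is a deterministic left-to-right procedure whose \emph{only} dependence on the right endpoint of its input interval is that it stops carving off sub-intervals once the running pointer passes that endpoint. So I would first isolate this structure: for a fixed start point $s$ and a current left endpoint $s_k$, the procedure closes $\calI_k=[s_k,t]$ at the \emph{first} $t\ge s_k$ with $\Delta_{[s_k,t]}\le\sqrt{KC_0/(t-s_k+1)}$ and $\Delta_{[s_k,t+1]}>\sqrt{KC_0/(t-s_k+2)}$, and then sets $s_{k+1}=t+1$. Since $\Delta_{[s_k,\cdot]}$ is nondecreasing while $\sqrt{KC_0/(\cdot-s_k+1)}$ is strictly decreasing, the event $\Delta_{[s_k,t]}\le\sqrt{KC_0/(t-s_k+1)}$ holds on a prefix $t\in[s_k,\bar t_k]$; hence $s_{k+1}=\bar t_k+1$ whenever $\bar t_k$ is strictly inside the input interval, and no further breakpoint is created otherwise. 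Crucially, the map $s_k\mapsto\bar t_k\mapsto s_{k+1}$ depends on the distribution sequence only and never looks at the right endpoint.

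I would then apply this to $\calJ=[s,e]$ from \eqref{eqn: Js definition} and $\calJ'=[s,e']$ from \eqref{eqn: Jprime definition}, noting that $s=\tau_i+2^{j-1}L$ is common to both and $e\le e'$ (they coincide unless a restart truncates block $j$). Because the breakpoint recursion is identical, the breakpoint sequence produced for $\calJ'$ is the breakpoint sequence for $\calJ$ possibly extended by more breakpoints --- for $\calJ$ we merely stop sooner since $e\le e'$. This immediately gives $s_k=s_k'$ for all $k\le\Gamma$ and $\Gamma\le\Upsilon$, which is (a). For (b), each $\calI_k$ with $k\le\Gamma-1$ is a ``complete'' step $[s_k,s_{k+1}-1]$ with $s_{k+1}-1<s_\Gamma\le e\le e'$, and since $k\le\Gamma-1\le\Upsilon-1$ the interval $\calI_k'$ is likewise the complete step $[s_k',s_{k+1}'-1]=[s_k,s_{k+1}-1]$, so $\calI_k=\calI_k'$.

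For (c), $s_{\calI_\Gamma}=s_\Gamma=s_\Gamma'=s_{\calI_\Gamma'}$ is immediate from (a). The last interval produced for $\calJ$ ends at $e$ (either as a leftover $[s_\Gamma,e]$, or as a complete step that happens to close at $t=e$), so $e_{\calI_\Gamma}=e$ and it remains to show $e_{\calI_\Gamma'}\ge e$. Here a short argument is needed: since $\Gamma$ is the \emph{final} count for $\calJ$, the procedure after creating $s_\Gamma$ must fail to produce any further breakpoint at a time $\le e$, which by the prefix structure means $\Delta_{[s_\Gamma,t]}\le\sqrt{KC_0/(t-s_\Gamma+1)}$ for all $t\le e$, i.e.\ $\bar t_\Gamma\ge e$. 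Computing the ``next breakpoint after $s_\Gamma$'' identically for $\calJ'$ then gives $s_{\Gamma+1}'=\bar t_\Gamma+1>e$ (or no such breakpoint), so $\calI_\Gamma'=[s_\Gamma,\min(s_{\Gamma+1}'-1,e')]$ extends at least to $e$, yielding $e_{\calI_\Gamma'}\ge e=e_{\calI_\Gamma}$.

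\textbf{Main obstacle.} The only step that is not pure bookkeeping is deriving $\Delta_{[s_\Gamma,t]}\le\sqrt{KC_0/(t-s_\Gamma+1)}$ for all $t\le e$ from ``$\Gamma$ is the number of intervals Algorithm~\ref{alg:partition} produces on $\calJ$'': one must verify that if this failed, the greedy loop really would reach the would-be $(\Gamma+1)$-th breakpoint time and close a $(\Gamma+1)$-th interval inside $[s,e]$, and that no earlier breakpoint pre-empts it (again by the prefix argument). A cosmetic wrinkle is that the closing test inspects $\Delta_{[s_k,t+1]}$ even when $t$ is the right endpoint, which is formally undefined when $e'=T$; since these partitions are fictitious objects used only in the analysis, this is harmless and I would dispose of it via the convention that there is no switch at time $T+1$.
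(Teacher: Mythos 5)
Your proof is correct, and it takes the same route the paper has in mind: the paper offers no written proof (it declares the fact ``clear by the procedure''), and your argument simply formalizes exactly that observation --- Algorithm~\ref{alg:partition} is a deterministic left-to-right scan whose breakpoints depend only on the common start point $\tau_i+2^{j-1}L$ and the distribution sequence, never on the right endpoint, so the partition of $\calJ$ is a prefix of that of $\calJ'$ with only the last interval possibly truncated. The monotonicity/prefix argument you use for part (c), together with your handling of the $t=e$ boundary check, closes the only points that needed care.
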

With the above new definitions, we have the following specialized lemma for the regret in $\calI_\Gamma$, which is an analogue of Lemma~\ref{lem:interval regret}:
\begin{lemma}
With probability $1-\delta$, \algo guarantees the following for $\calI=\calI_\Gamma$ and $\calI'=\calI_{\Gamma}'$ (recall $j$ is the index of the block that contains $\calI_\Gamma$): 
\begin{align}
    &\sum_{t\in \calI} \left(r_t(\pi_t^*(x_t)) - r_t(a_t)\right)\leq \order\left(\left(\sum_{t\in\calI}\sum_{m\in M_t\cup\{j\}}\Kprime\nu_m\right) +  |\calI|\alpha_{\calI'} + |\calI|\Delta_{\calI'} + |\calI|\varepsilon_{\calI'}\one\{\varepsilon_{\calI'} > D_3\alpha_{\calI'}\}\right).  \label{eqn: term decomposition for gamma}
\end{align}
\end{lemma}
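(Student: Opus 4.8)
The plan is to re-run the proof of Lemma~\ref{lem:interval regret} essentially verbatim, but wherever that argument relates a per-round quantity on the realized interval $\calI=\calI_\Gamma$ to empirical quantities on $\calB_{j-1}$, route the bound through the fictitious interval $\calI'=\calI_\Gamma'$ rather than through $\calI$ itself. This is legitimate because Fact~\ref{fact: fictitious fact}(c) gives $\calI_\Gamma\subseteq\calI_\Gamma'$, and both intervals lie inside block $j$ of epoch $i$, hence inside $\calJ'=[\tau_i+2^{j-1}L,\tau_i+2^jL-1]$, which is exactly the domain on which the excess regret $\varepsilon_{\calI'}$ and its threshold $\alpha_{\calI'}$ are defined. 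Throughout we work on the same high-probability event as in Lemma~\ref{lem:interval regret}, namely the union bound of Azuma's inequality over all $T^2$ intervals, so no additional failure probability is spent.

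First I would invoke that Azuma bound, which already covers the interval $\calI_\Gamma$: $\sum_{t\in\calI}(r_t(\pi_t^*(x_t))-r_t(a_t))\le\sum_{t\in\calI}\E_t[r_t(\pi_t^*(x_t))-r_t(a_t)]+\order(\sqrt{|\calI|\log(T^2/\delta)})$, where $\E_t$ conditions on everything through Step~1 of round $t$. Exactly as in Lemma~\ref{lem:interval regret}, the decomposition of $\E_t[r_t(\pi_t^*(x_t))-r_t(a_t)]$ into $\sum_{m\in M_t\cup\{j\}}K\nu_m$ plus an average (over $m\in M_t$, or the single term $m=j$) of the quantities $\sum_{\pi\in\Pi}Q_m(\pi)\Reg_t(\pi)$ carries over unchanged, since it only uses the definition of $Q^\nu$ and the fact that $\calI$ lies in block $j$.

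The one step that changes is the bound on $\sum_{\pi}Q_m(\pi)\Reg_t(\pi)$ for $t\in\calI$ and $m\in\{1,\dots,j\}$. Since $[t,t]\subseteq\calI'$ and $\calI'\subseteq\calI'$, Lemma~\ref{lem:variation and regret} gives $\Reg_t(\pi)\le\Reg_{\calI'}(\pi)+2\Delta_{\calI'}$; the definition of $\varepsilon_{\calI'}$ gives $\Reg_{\calI'}(\pi)\le 8\whReg_{\calB_{j-1}}(\pi)+\varepsilon_{\calI'}$; the fact that \testblock$(i,j-1)$ passed (we are in block $j$) means Eq.~\eqref{eqn:test_cond4} fails for $k=m-1$ when $m<j$, giving $\whReg_{\calB_{j-1}}(\pi)\le 4\whReg_{\calB_{m-1}}(\pi)+\order(\Kprime\nu_m)$, while for $m=j$ the same inequality is trivial since $\calB_{m-1}=\calB_{j-1}$; and finally Eq.~\eqref{eqn:op1}, applied to $Q_m$ as the solution of (OP) on $\calB_{m-1}$ with exploration $\nu_m$, gives $\sum_\pi Q_m(\pi)\whReg_{\calB_{m-1}}(\pi)\le 2CK\nu_m$. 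Chaining these yields $\sum_\pi Q_m(\pi)\Reg_t(\pi)=\order(\Kprime\nu_m+\Delta_{\calI'})+\varepsilon_{\calI'}$, and after $\varepsilon_{\calI'}\le\varepsilon_{\calI'}\one\{\varepsilon_{\calI'}>D_3\alpha_{\calI'}\}+D_3\alpha_{\calI'}$ this is $\order(\Kprime\nu_m+\alpha_{\calI'}+\Delta_{\calI'})+\varepsilon_{\calI'}\one\{\varepsilon_{\calI'}>D_3\alpha_{\calI'}\}$; the case $m=0$ is handled as in Lemma~\ref{lem:interval regret} by $\sum_\pi Q_0(\pi)\Reg_t(\pi)\le1\le 4\Kprime\nu_0$. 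Summing over $t\in\calI$ --- the $1/|M_t|$ averaging keeping the $m$-independent terms $\alpha_{\calI'},\Delta_{\calI'},\varepsilon_{\calI'}\one\{\varepsilon_{\calI'}>D_3\alpha_{\calI'}\}$ from acquiring a $|M_t|$ factor, and the $\Kprime\nu_m$ terms folding into the displayed double sum --- reproduces the right-hand side of \eqref{eqn: term decomposition for gamma} up to the residual term $\order(\sqrt{|\calI|\log(T^2/\delta)})$.

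I expect the delicate point to be accounting for that residual martingale term. In Lemma~\ref{lem:interval regret} it was absorbed into $|\calI|\alpha_{\calI}$, but here, since $|\calI|\le|\calI'|$ and hence $\alpha_{\calI'}\le\alpha_{\calI}$, the displayed bound carries only the smaller quantity $|\calI|\alpha_{\calI'}$, so a short separate argument is required. I would split on whether $\calI_\Gamma=\calI_\Gamma'$ or $\calI_\Gamma\subsetneq\calI_\Gamma'$: in the former case all fictitious quantities coincide with their realized counterparts and the claim is literally Lemma~\ref{lem:interval regret}; in the latter case block $j$ was cut short by a \testreplay-triggered restart, and the deviation over the truncated interval is charged against the terms already present, using $\sqrt{|\calI|\log(T^2/\delta)}=\order(|\calI|\alpha_{\calI})$ together with the elementary facts $|\calI'|\le 2^{j-1}L$ and $L=\lceil 4KC_0\rceil$ (which imply $\Kprime\nu_j\le\alpha_{\calI'}$ and let short-interval deviations be absorbed up to logarithmic factors). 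The remaining checks are routine: $\Delta_{\calI_\Gamma}\le\Delta_{\calI_\Gamma'}$ because $\calI_\Gamma\subseteq\calI_\Gamma'$, so that no realized-interval variation survives, and $\calI_\Gamma'\subseteq\calJ'$ so that the excess-regret definition genuinely applies to $\calI_\Gamma'$.
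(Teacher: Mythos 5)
Your proposal is correct and takes essentially the same route as the paper: the paper likewise reruns the proof of Lemma~\ref{lem:interval regret} verbatim, only replacing the realized interval by $\calI'$ in the chain Lemma~\ref{lem:variation and regret} $\to$ definition of $\varepsilon_{\calI'}$ $\to$ failure of Eq.~\eqref{eqn:test_cond4} for block $j-1$ $\to$ Eq.~\eqref{eqn:op1}, with the $m=0$ and $m=j$ cases handled exactly as you do. The martingale residual you flag is in fact glossed over by the paper too (it simply appeals to ``the rest of the proof of Lemma~\ref{lem:interval regret}''), and the discrepancy is harmless downstream because the term $|\calI_\Gamma|\alpha_{\calI_\Gamma'}$ is later only used through the upper bound $|\calI_\Gamma|\alpha_{\calI_\Gamma}$, which does absorb $\sqrt{|\calI_\Gamma|\log(T^2/\delta)}$.
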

\begin{proof}
The proof is the same as Lemma~\ref{lem:interval regret}, except that we bound $\sum_{\pi\in\Pi} Q_m(\pi)\Reg_t(\pi)$ slightly differently:   
\begin{align}
    \sum_{\pi \in \Pi} Q_{m}(\pi) \Reg_t(\pi)  \nonumber 
    &\leq \sum_{\pi \in \Pi} Q_{m}(\pi) \Reg_{\calI'}(\pi) + \order(\Delta_{\calI'})   \tag{Lemma~\ref{lem:variation and regret}}   \nonumber \\
    &=  \sum_{\pi \in \Pi} 8Q_{m}(\pi)\whReg_{\calB_{j-1}}(\pi) + \order(\Delta_{\calI'}) +  \varepsilon_{\calI'} \tag{definition of $\varepsilon_{\calI'}$}   \nonumber \\
    &\leq \sum_{\pi \in \Pi} 8Q_{m}(\pi)\left(4\whReg_{\calB_{m-1}}(\pi)+ D_4\Kprime\nu_{m}\right) + \order(\Delta_{\calI'}) +  \varepsilon_{\calI'} \tag{Eq.~\eqref{eqn:test_cond4} does not hold for block $j-1$}    \nonumber \\
    &\leq  \order(\Kprime\nu_m + \Delta_{\calI'}) +  \varepsilon_{\calI'} \tag{Eq.~\eqref{eqn:op1}}  \nonumber \\
    &\leq  \order(\Kprime\nu_m + \alpha_{\calI'} + \Delta_{\calI'}) +  \varepsilon_{\calI'}\mathbf{1}\left\{\varepsilon_{\calI'} > D_3\alpha_{\calI'} \right\}. \nonumber  
\end{align}
Combining this with the rest of the proof of Lemma~\ref{lem:interval regret} finishes the proof.
\end{proof}

Using the this lemma, we write the regret in $\calI_\Gamma'$ as three terms similar to those in Eq.~\eqref{eqn: term decomposition}: 
\begin{align}
    \underbrace{ \sum_{t\in \calI_{\Gamma}} \sum_{m\in M_t\cup\{j\}}\order( \Kprime\nu_m)}_{\termprime{1}} + \underbrace{\order(|\calI_\Gamma|\alpha_{\calI_\Gamma'})}_{\termprime{2}}
    +\underbrace{\order(|\calI_\Gamma|\varepsilon_{\calI_\Gamma'}\one\{\varepsilon_{\calI_\Gamma'} > D_3\alpha_{\calI_\Gamma'}\})}_{\termprime{3}}. 
\end{align}

The rest of this section bounds the three terms $\term{1}+\termprime{1}$,
$\term{2}+\termprime{2}$, and $\term{3}+\termprime{3}$ separately.
Combining these bounds proves Lemma~\ref{lem:block_regret}.

\label{appendix: bound individual terms}
\begin{lemma}[Bounding $\term{1}+\termprime{1}$]
With probability at least $1-\delta/4$, for all block $\calJ$ with index $j$, 
\begin{align*}
\term{1}+\termprime{1}=\sum_{k=1}^\Gamma \sum_{t\in \calI_k} \sum_{m\in M_t\cup\{j\}}\Kprime\nu_m
&\leq  \otil\left(\log(1/\delta)\sqrt{KC_0 2^j L}\right). 
\end{align*}
\end{lemma}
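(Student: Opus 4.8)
The plan is to bound the double sum $\sum_{k=1}^\Gamma \sum_{t\in \calI_k} \sum_{m\in M_t\cup\{j\}}\Kprime\nu_m$ by first observing that the intervals $\calI_1,\ldots,\calI_\Gamma$ partition the block $\calJ$, so $\sum_{k=1}^\Gamma \sum_{t\in\calI_k}(\cdot) = \sum_{t\in\calJ}(\cdot)$, and the quantity reduces to $\sum_{t\in\calJ}\sum_{m\in M_t\cup\{j\}}\Kprime\nu_m$. The term coming from $m=j$ is the easy part: $\sum_{t\in\calJ}\Kprime\nu_j \le 2^jL\cdot\Kprime\nu_j = \Kprime\sqrt{C_0 2^jL/K} = \otil(\sqrt{KC_0 2^jL})$, using $\nu_j = \sqrt{C_0/(K2^jL)}$ and $\Kprime = (\log_2 T)K$, which is within the claimed bound (the $\Kprime$ contributes only the $\otil$ factor). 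So the work is all in controlling $\sum_{t\in\calJ}\sum_{m\in M_t}\Kprime\nu_m$, i.e.\ the contribution from the ongoing replay phases.

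First I would rewrite $\sum_{t\in\calJ}\sum_{m\in M_t}\nu_m$ by swapping the order of summation: for each replay phase recorded in $\calS$ with index $m$ and interval of length $2^mL$, it contributes $\nu_m$ for each of the (at most) $2^mL$ rounds it is active, so the total is at most $\sum_{(m,\calA)\in\calS} |\calA|\,\nu_m = \sum_{(m,\calA)\in\calS} 2^mL\,\nu_m = \sum_{(m,\calA)\in\calS}\sqrt{C_0 2^mL/K}$. Now I need a high-probability bound on how many replay phases of each index $m$ are actually started within block $j$. By Line~\ref{line: start rep}, at each round a replay phase of index $m$ is started with probability $\frac{1}{L}2^{-j/2}2^{-m/2}$ (the product of the Bernoulli parameter and $\Pr[m=b]\propto 2^{-b/2}$, suitably normalized). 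Over the at most $2^jL$ rounds of the block, the expected number of index-$m$ replays started is at most $2^jL\cdot\frac{1}{L}2^{-j/2}2^{-m/2} = 2^{j/2}2^{-m/2}$. A multiplicative Chernoff/Bernstein bound (union bounded over the $\otil(1)$ values of $j$, the $\otil(1)$ values of $m$, and the $\otil(1)$ blocks/epochs, absorbing a $\log(1/\delta)$ factor as in the statement) then gives that the number of index-$m$ replays started within block $j$ is $\otil(\log(1/\delta))\cdot(2^{j/2}2^{-m/2}+1)$ with probability $\ge 1-\delta/4$.

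Plugging this count in, the contribution from index $m$ to $\sum_{(m,\calA)\in\calS} 2^mL\,\nu_m$ is at most $\otil(\log(1/\delta))(2^{j/2}2^{-m/2}+1)\cdot\sqrt{C_0 2^mL/K}\cdot K = \otil(\log(1/\delta))(2^{j/2}+2^{m/2})\sqrt{KC_0 L}$. Summing over $m\in\{0,\ldots,j-1\}$: the $2^{j/2}$ term gives $j\cdot 2^{j/2}\sqrt{KC_0L} = \otil(\sqrt{KC_0 2^jL})$, and the $\sum_m 2^{m/2}$ term is a geometric sum dominated by $2^{j/2}$, again giving $\otil(\sqrt{KC_0 2^jL})$. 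Multiplying back by $\Kprime$ contributes only the suppressed logarithmic factor in $\otil$. Combining with the $m=j$ term finishes the proof.

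The main obstacle I anticipate is making the Chernoff argument fully rigorous: the Bernoulli trials across rounds within a block are independent (the randomness \textsc{rep} and $m$ is drawn afresh each round, independent of the data), so a standard concentration inequality applies, but one must be slightly careful that the number of rounds in $\calJ$ is itself a (stopping-time) random variable due to possible early restarts, and that the union bound is taken over all blocks and epochs. The cleanest fix is to bound using the \emph{planned} block length $2^jL$ (as is done elsewhere in the paper via the fictitious block $\calJ'$), since stopping early only decreases the number of replays started, and then union bound over the $\otil(1)$ many (epoch, block) pairs — this is where the explicit $\log(1/\delta)$ factor in the statement comes from.
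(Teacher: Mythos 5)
Your proposal is correct and follows essentially the same route as the paper: charge each started replay phase its full length $2^mL\cdot\Kprime\nu_m$, bound the number of index-$m$ replays started in the block by its conditional mean $2^{(j-m)/2}$ plus an $\order(\log(T/\delta))$ deviation via a martingale concentration bound (the paper uses Freedman's inequality, which also handles the history-dependence/stopping-time issue you flag), and finish with the geometric sum over $m$ together with the trivial $m=j$ term.
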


\begin{proof}
Recall that $\calJ=\calI_1\cup\cdots \cup \calI_\Gamma$, and that there is no more than $2^{j} L$ steps in block $\calJ$. At every step with probability at most $\frac{1}{L}2^{-j/2}2^{-m/2}$ we start a replay interval with length $2^mL$. 
Therefore, with $\mathbb{I}_{t,m} \triangleq \one\{\text{the algorithm starts a replay phase of index $m$ at time $t$} \}$ we have
\begin{align}
\sum_{t\in \calJ} \sum_{m\in M_t\cup\{j\}}\Kprime\nu_m \nonumber 
&\leq 2^j L \times \Kprime \nu_j + \sum_{m=0}^{j-1} \sum_{t\in \calJ}\mathbb{I}_{t,m} \times 2^m L \times \Kprime \nu_m \nonumber \\
&= \log_2 T \left(\sqrt{KC_02^j L} + \sum_{m=0}^{j-1} \sum_{t\in \calJ} \mathbb{I}_{t,m} \sqrt{KC_0 2^m L} \right). \label{eqn: second term intermediate}
\end{align}
Note that $\sum_{t\in \calJ}\E_t\left[ \mathbb{I}_{t,m}  \right] \leq 2^{j}L \times \frac{1}{L}2^{-j/2}2^{-m/2} \leq 2^{\frac{j-m}{2}}$. 
By Freedman's inequality (Lemma~\ref{lem:freedman} with $\lambda=1$) and a union bound over all possible $\calJ$, all $j$'s and all $m$'s, we have that with probability at least $1-\delta/4$, for all possible $\calJ$ and $j$ and $m$, 
\begin{align*}
\sum_{t\in \calJ}\mathbb{I}_{t,m}
&\leq \sum_{t\in \calJ}\E_t[\mathbb{I}_{t,m}] +  \sum_{t\in \calJ} \E_t[\mathbb{I}_{t,m}^2] +  \log\frac{4T^3}{\delta}  \\
&\leq  2^{\frac{j-m}{2}+1} +  \log\frac{4T^3}{\delta} 
\end{align*}
Combining this with Eq.~\eqref{eqn: second term intermediate} and noting $j \leq \log_2 T$, we get 
\begin{align*}
\sum_{t\in \calJ} \sum_{m\in M_t\cup\{j\}}\Kprime\nu_m
&\leq \log_2 T\left( \sqrt{KC_02^j L} + (\log_2T) \mathcal{O}\left(\sqrt{KC_02^j L }\log(1/\delta)\right) \right) \\
&= \otil\left(\log(1/\delta)\sqrt{KC_0 2^j L}\right). 
\end{align*}
\end{proof}

\begin{lemma}[Bounding $\term{2}+\termprime{2}$]
For all block $\calJ$, 
\begin{align*}
\term{2}+\termprime{2}     &=\left(\sum_{k=1}^{\Gamma-1} |\calI_k| \alpha_{\calI_k}\right) + |\calI_\Gamma|\alpha_{\calI_\Gamma'} \\
&\leq \mathcal{O}\left( \log_2 T \times \min \left\{ \sqrt{KC_0S_\calJ |\calJ|}, \sqrt{KC_0|\calJ|} + (KC_0)^{\frac{1}{3}}\left(\Delta_\calJ\right)^{\frac{1}{3}} |\calJ|^{\frac{2}{3}} \right\}\right). \label{eqn: Term 2 bound}
\end{align*}
\end{lemma}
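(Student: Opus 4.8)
The plan is to reduce both terms to a single Cauchy--Schwarz computation after substituting the definition of the excess regret threshold $\alpha_\calI = \sqrt{2KC_0/|\calI|}\,\log_2 T$. For the first $\Gamma-1$ intervals this is immediate: $|\calI_k|\alpha_{\calI_k} = \sqrt{2KC_0|\calI_k|}\,\log_2 T$. For the last summand I would invoke Fact~\ref{fact: fictitious fact}(c), which gives $|\calI_\Gamma| \le |\calI_\Gamma'|$, and hence
$
|\calI_\Gamma|\alpha_{\calI_\Gamma'} = |\calI_\Gamma|\sqrt{\tfrac{2KC_0}{|\calI_\Gamma'|}}\,\log_2 T \le \sqrt{2KC_0\,|\calI_\Gamma|}\,\log_2 T,
$
so the last term takes exactly the same form as the others. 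Therefore $\term{2}+\termprime{2} \le \sqrt{2KC_0}\,\log_2 T \sum_{k=1}^{\Gamma}\sqrt{|\calI_k|}$.

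Next I would apply Cauchy--Schwarz. Since $\{\calI_1,\ldots,\calI_\Gamma\}$ partitions $\calJ$, $\sum_{k=1}^\Gamma \sqrt{|\calI_k|} \le \sqrt{\Gamma \sum_{k=1}^\Gamma |\calI_k|} = \sqrt{\Gamma\,|\calJ|}$. It then remains to substitute the two bounds on $\Gamma$ from Lemma~\ref{lemma: division}, namely $\Gamma = \order(S_\calJ)$ and $\Gamma = \order\!\left(1+(KC_0)^{-1/3}\Delta_\calJ^{2/3}|\calJ|^{1/3}\right)$. The first yields $\sqrt{\Gamma|\calJ|} = \order(\sqrt{S_\calJ|\calJ|})$ and hence $\order\!\left(\log_2 T\,\sqrt{KC_0 S_\calJ|\calJ|}\right)$. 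The second yields $\Gamma|\calJ| = \order\!\left(|\calJ| + (KC_0)^{-1/3}\Delta_\calJ^{2/3}|\calJ|^{4/3}\right)$, so $\sqrt{\Gamma|\calJ|} = \order\!\left(\sqrt{|\calJ|} + (KC_0)^{-1/6}\Delta_\calJ^{1/3}|\calJ|^{2/3}\right)$; multiplying by $\sqrt{2KC_0}\,\log_2 T$ gives $\order\!\left(\log_2 T\,(\sqrt{KC_0|\calJ|} + (KC_0)^{1/3}\Delta_\calJ^{1/3}|\calJ|^{2/3})\right)$. Taking the smaller of the two bounds gives the claimed inequality.

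The argument is essentially routine — definition of $\alpha$, Cauchy--Schwarz, and arithmetic on the two expressions for $\Gamma$. The only step requiring a little care is the treatment of the final interval $\calI_\Gamma$: because the block may be cut short by a restart, $\calI_\Gamma$ and its threshold $\alpha_{\calI_\Gamma}$ depend on a random stopping time, and one avoids this by passing to the fictitious block and bounding $|\calI_\Gamma|\alpha_{\calI_\Gamma'}$ via $|\calI_\Gamma|\le|\calI_\Gamma'|$, which is exactly the content of Fact~\ref{fact: fictitious fact} that makes this work cleanly.
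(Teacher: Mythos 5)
Your proposal is correct and follows essentially the same route as the paper: the bound $|\calI_\Gamma|\alpha_{\calI_\Gamma'} \le \sqrt{2KC_0|\calI_\Gamma|}\,\log_2 T$ via $|\calI_\Gamma|\le|\calI_\Gamma'|$ is exactly the paper's observation that $\alpha_{\calI_\Gamma'}\le\alpha_{\calI_\Gamma}$, after which both arguments plug in the definition of $\alpha$, apply Cauchy--Schwarz over the partition of $\calJ$, and substitute the two bounds on $\Gamma$ from Lemma~\ref{lemma: division}. Your explicit arithmetic for the variation-based bound matches what the paper leaves implicit.
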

\begin{proof}
Note $\alpha_{\calI_\Gamma'}\leq \alpha_{\calI_\Gamma}$ (because $\calI_\Gamma\subseteq \calI_\Gamma'$). Therefore the left hand side is upper bounded by $\sum_{k=1}^\Gamma |\calI_k|\alpha_{\calI_k}$. 
We simply plug in the definition of $\alpha_{\calI_k}$, 
apply Cauchy-Schwarz inequality,
and use the bound on $\Gamma$ from Lemma~\ref{lemma: division} to get:
\begin{align*}
    \sum_{k=1}^{\Gamma} |\calI_k| \alpha_{\calI_k} 
    &= \log_2 T \times \sum_{k=1}^{\Gamma} \sqrt{2KC_0|\calI_k|}\leq \log_2 T \times \sqrt{2KC_0 \Gamma |\calJ|} \nonumber \\
    &\leq \mathcal{O}\left( \log_2 T \times \min \left\{ \sqrt{KC_0S_\calJ |\calJ|}, \sqrt{KC_0|\calJ|} + (KC_0)^{\frac{1}{3}}\left(\Delta_\calJ\right)^{\frac{1}{3}} |\calJ|^{\frac{2}{3}} \right\}\right). 
\end{align*}
\end{proof}



\subsection{Bounding $\term{3}+\termprime{3}$}
\label{appendix: detections}

The analysis of $\term{3}+\termprime{3}$ heavily relies on the definition of the fictitious block $\calJ'$ and the partition $\calI'_1\cup \cdots \cup \calI'_\Upsilon$ on it, which are defined at the beginning of Appendix~\ref{appendix: first two terms}.  

For an interval $\calI\subseteq \calJ'$, \term{3} only contributes to regret when the interval satisfies $\varepsilon_\calI > D_3\alpha_\calI$. These intervals have large \textit{excess regret} that causes extra regret. However, we will argue that the larger the excess regret, the sooner the algorithm can detect the non-stationarity and restart the algorithm. To prove this, we make use of the following lemma.
\begin{lemma}
\label{lem:detection}
Assume \event{1} holds. Let $\calI=[s,e]$ be an interval in the fictitious block $\calJ'$ with index $j$, and such that $\Delta_\calI\leq \alpha_\calI$ and $\varepsilon_\calI > D_3\alpha_\calI$. Then
\begin{enumerate}[label=(\alph*)]
    \item there exists an index $m_\calI\in \{0, 1, \ldots, j-1\}$ such that
    $ D_3\Kprime \nu_{m+1} < \varepsilon_\calI \leq D_3\Kprime \nu_{m} $;
    \item $|\calI| > 2^{m_\calI}L$;
    \item if the algorithm starts a replay phase $\calA$ with index $m_\calI$ within the range of $[s, e-2^{m_\calI}L]$, then the algorithm restarts when the replay phase finishes. 
\end{enumerate}
\end{lemma}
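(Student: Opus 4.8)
The plan is to treat the three parts in order: part (a) is an arithmetic statement locating the excess regret $\varepsilon_\calI$ among the thresholds $\{D_3\Kprime\nu_m\}_{m\ge 0}$, part (b) falls out of (a) together with the hypothesis $\varepsilon_\calI>D_3\alpha_\calI$, and part (c) --- the substantive claim --- I would prove by contradiction, feeding the consistency guarantees of Lemma~\ref{lem:2 and 3 hold} into the definition of $\varepsilon_\calI$.

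For part (a), I would first use that $\calI$ lies inside the fictitious block $\calJ'=[\tau_i+2^{j-1}L,\tau_i+2^jL-1]$, whose length is $2^{j-1}L$, so $|\calI|\le 2^{j-1}L$ and hence $\alpha_\calI=\sqrt{2KC_0/|\calI|}\,\log_2T\ge 2\Kprime\nu_j$ after unfolding $\nu_j$ and $\Kprime$; combined with $\varepsilon_\calI>D_3\alpha_\calI$ this gives $\varepsilon_\calI>D_3\Kprime\nu_j$. For the opposite end, $\varepsilon_\calI\le\max_\pi\Reg_\calI(\pi)\le 1$ since $\whReg_{\calB_{j-1}}(\pi)\ge 0$ and rewards lie in $[0,1]$, whereas $D_3\Kprime\nu_0\ge(D_3\log_2T)/4>1$ using $\nu_0\ge 1/(4K)$. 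Since $\nu_0>\nu_1>\cdots>\nu_j$ is strictly decreasing, the half-open intervals $(D_3\Kprime\nu_{m+1},D_3\Kprime\nu_m]$ for $m=0,\ldots,j-1$ partition $(D_3\Kprime\nu_j,D_3\Kprime\nu_0]$, so exactly one $m_\calI\in\{0,\ldots,j-1\}$ satisfies $D_3\Kprime\nu_{m_\calI+1}<\varepsilon_\calI\le D_3\Kprime\nu_{m_\calI}$. Part (b) is then immediate: from $\varepsilon_\calI\le D_3\Kprime\nu_{m_\calI}$ and $\varepsilon_\calI>D_3\alpha_\calI$ I get $\alpha_\calI<\Kprime\nu_{m_\calI}$, and unfolding the definitions and squaring yields $|\calI|>2\cdot 2^{m_\calI}L$, which is even stronger than the claim; I would keep the inequality $\alpha_\calI<\Kprime\nu_{m_\calI}$ on hand for part (c).

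For part (c), suppose the replay phase $\calA$ of index $m_\calI$ is started at some $s'\in[s,e-2^{m_\calI}L]$, so that $\calA=[s',s'+2^{m_\calI}L-1]\subseteq[s,e]=\calI\subseteq\calJ'$; in particular $\calA$ ends strictly before block $j$ ends. Arguing by contradiction, I would assume no restart is triggered at or before the end of $\calA$. Then $\calA$ runs to completion --- it is discontinued neither by a restart nor by a block transition --- so it is a complete replay phase of index $m_\calI$ within epoch $i$, block $j$, and when it finishes the algorithm invokes \testreplay$(i,j,m_\calI,\calA)$, which must return \emph{Pass}; hence none of Eq.~\eqref{eqn:test_cond1}--\eqref{eqn:test_cond3} holds for any $\pi$. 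Because Eq.~\eqref{eqn:test_cond2} and Eq.~\eqref{eqn:test_cond3} fail, Lemma~\ref{lem:2 and 3 hold} applies and gives $\Reg_\calA(\pi)\le 2\whReg_\calA(\pi)+C_3\Kprime\nu_{m_\calI}+C_4\Delta_\calA$ for every $\pi$. Taking $\pi^\dagger$ to attain the max in $\varepsilon_\calI$ and chaining this with Lemma~\ref{lem:variation and regret} ($\Reg_\calI(\pi^\dagger)\le\Reg_\calA(\pi^\dagger)+2\Delta_\calI$), the failure of Eq.~\eqref{eqn:test_cond1} ($\whReg_\calA(\pi^\dagger)<4\whReg_{\calB_{j-1}}(\pi^\dagger)+D_1\Kprime\nu_{m_\calI}$), and $\Delta_\calA\le\Delta_\calI\le\alpha_\calI<\Kprime\nu_{m_\calI}$, I expect to reach
\[
\varepsilon_\calI=\Reg_\calI(\pi^\dagger)-8\whReg_{\calB_{j-1}}(\pi^\dagger)<(2D_1+C_3+C_4+2)\Kprime\nu_{m_\calI}=(2D_1+C_3+C_4+2)\sqrt{2}\,\Kprime\nu_{m_\calI+1}<D_3\Kprime\nu_{m_\calI+1},
\]
where the last step uses $\nu_{m_\calI}=\sqrt2\,\nu_{m_\calI+1}$ and the numerical check $2D_1+C_3+C_4+2=2012826$ together with $2012826\sqrt2<4.1\times10^6=D_3$. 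This contradicts part (a), so a restart is triggered no later than the end of $\calA$, as claimed.

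The step I expect to be the main obstacle is the bookkeeping in part (c): one has to argue carefully that $\calA$ is genuinely a \emph{complete} replay phase of the claimed index $m_\calI$ carried out within the right epoch and block --- so that the subroutine triggered at its end is precisely \testreplay$(i,j,m_\calI,\calA)$ and the quantities $\whReg_{\calB_{j-1}}$ appearing in the test are exactly the ones used in the definition of $\varepsilon_\calI$ --- and then keep the chain of regret comparisons tight enough that the accumulated constant stays strictly below $D_3$.
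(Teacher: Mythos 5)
Your proposal is correct and follows essentially the same route as the paper: parts (a) and (b) are the identical arithmetic, and part (c) uses the same ingredients (failure of Eq.~\eqref{eqn:test_cond2}--\eqref{eqn:test_cond3} feeding Lemma~\ref{lem:2 and 3 hold}, then Lemma~\ref{lem:variation and regret} and $\Delta_\calA\le\Delta_\calI\le\alpha_\calI\le\Kprime\nu_{m_\calI}$ against the definition of $\varepsilon_\calI$). The only cosmetic difference is that you phrase the conclusion as a contradiction with part (a) via the check $(2D_1+C_3+C_4+2)\sqrt{2}<D_3$, while the paper rearranges the same inequalities to show Eq.~\eqref{eqn:test_cond1} must hold via $\tfrac{0.5D_3-2-C_3-C_4}{2}>D_1$; both constant checks go through.
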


\begin{proof}
For notation simplicity, we use $m$ as shorthand for $m_\calI$. For (a), simply note that on one hand $\varepsilon_\calI\leq \max_{\pi}\Reg_\calI(\pi)\leq 1\leq D_3\Kprime\nu_0$; and on the other hand, $\varepsilon_\calI > D_3\alpha_\calI =D_3\sqrt{\frac{2KC_0}{|\calI|}}\log_2 T \geq D_3\sqrt{\frac{KC_0}{2^{j}L}}\log_2 T=D_3\Kprime\nu_j$, where the second inequality is because $|\calI|\leq |\calJ'|\leq 2^{j-1}L$. 

For (b), note that $D_3\sqrt{\frac{2KC_0}{|\calI|}}\log_2 T = D_3\alpha_\calI < \varepsilon_\calI \leq D_3\Kprime\nu_m = D_3\sqrt{\frac{KC_0}{2^{m}L}}\log_2 T$, which implies $|\calI| > 2\times 2^m L$. 

For (c), we show that the \testreplay fails when the replay phase finishes. That is, one of Eq.~\eqref{eqn:test_cond1}-Eq.~\eqref{eqn:test_cond3} will hold for some $\pi$.
Suppose for all $\pi\in\Pi$, Eq.~\eqref{eqn:test_cond2} and Eq.~\eqref{eqn:test_cond3} do not hold, then by Lemma~\ref{lem:2 and 3 hold} we have for all $\pi$, 
   \begin{align*}
       \Reg_\calA(\pi) &\leq 2\whReg_\calA(\pi) + C_3\Kprime\nu_m + C_4\Delta_\calA \\
       &\leq 2\whReg_\calA(\pi) + C_3\Kprime\nu_m + C_4\Delta_\calI \tag{$\calA$ lies in $\calI$} \\
       &\leq 2\whReg_\calA(\pi) + C_3\Kprime\nu_m + C_4\alpha_\calI \tag{by the condition} \\
       &\leq 2\whReg_\calA(\pi) + C_3\Kprime\nu_m + C_4\Kprime\nu_m \tag{$D_3\alpha_\calI < \varepsilon_\calI \leq D_3\Kprime\nu_m$} \\
       &= 2\whReg_\calA(\pi) + (C_3+C_4)\Kprime\nu_m. 
   \end{align*}
   Also by the definition of excess regret, there exists $\pi'$ such that 
   \begin{align*}
       \Reg_\calA(\pi') &\geq \Reg_\calI(\pi') - 2\Delta_\calI \tag{Lemma~\ref{lem:variation and regret}} \\
       &\geq 8\whReg_{\calB_{j-1}}(\pi') + \varepsilon_\calI - 2\alpha_\calI \\
       &\geq 8\whReg_{\calB_{j-1}}(\pi') + D_3\Kprime \nu_{m+1} - 2\Kprime\nu_m \\
       &\geq 8\whReg_{\calB_{j-1}}(\pi') + (0.5D_3-2) \Kprime\nu_m. 
   \end{align*}
   Combining the two inequalities we get 
   \begin{align*}
       \whReg_\calA(\pi') > 4\whReg_{B_{j-1}}(\pi') + \frac{0.5D_3-2-C_3-C_4}{2}\Kprime\nu_m > 4\whReg_{B_{j-1}}(\pi') +  D_1\Kprime\nu_m,  
   \end{align*}
   which makes Eq.~\eqref{eqn:test_cond1} hold.
\end{proof}
Now we  are ready to bound $\term{3}+\termprime{3}$:
\begin{lemma}[Bounding $\term{3}+\termprime{3}$]
With probability at least $1-\delta$, 
    \begin{align*}
        \term{3}+\termprime{3}
& =\sum_{k=1}^{\Gamma-1} |\calI_k|\varepsilon_{\calI_k}\one\{\varepsilon_{\calI_k} > D_3\alpha_{\calI_k}\} + |\calI_\Gamma|\varepsilon_{\calI_\Gamma'} \one\{\varepsilon_{\calI'_\Gamma} > D_3\alpha_{\calI'_\Gamma}\} \\
& \leq  \order\left( \log(1/\delta)\log(T)\sqrt{KC_0\Gamma 2^j L}  \right) \\
& \leq  \order\left( \log(1/\delta)\log(T) \min\left\{ \sqrt{KC_0S_\calJ 2^j L}, \sqrt{KC_0 2^jL} + (KC_0)^{\frac{1}{3}}\Delta_{\calJ}^{\frac{1}{3}}(2^j L)^{\frac{2}{3}}  \right\} \right). 
    \end{align*}
\end{lemma}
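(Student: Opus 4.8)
The plan is to cash in Lemma~\ref{lem:detection}. Conditioning on the history before block $j$, the fictitious block $\calJ'$, its partition $\calI'_1,\dots,\calI'_\Upsilon$, and the set $K^*\triangleq\{k:\varepsilon_{\calI'_k}>D_3\alpha_{\calI'_k}\}$ are all deterministic (the environment is oblivious and $\whReg_{\calB_{j-1}}$ only uses pre-block data). Since $\term{3}+\termprime{3}$ only collects indices $k\le\Gamma$ with $\varepsilon>D_3\alpha$, i.e. $k\in K^*$, Lemma~\ref{lem:detection}(a) lets me replace every surviving term by $|\calI_k|\varepsilon_{\calI'_k}\le D_3\Kprime\nu_{m_k}|\calI_k|$, where $m_k\in\{0,\dots,j-1\}$ is the ``correct'' replay index for $\calI'_k$, with $|\calI'_k|>2\cdot 2^{m_k}L$; so it suffices to bound $\sum_{k\in K^*,\,k\le\Gamma}\Kprime\nu_{m_k}|\calI_k|$. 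The key leverage is part (c) of Lemma~\ref{lem:detection}: if the algorithm ever starts a replay of index $m_k$ inside the window $W_k\triangleq[s_{\calI'_k},\,e_{\calI'_k}-2^{m_k}L]$ (whose length exceeds $|\calI'_k|/2$), it must restart before the end of $\calI'_k$.

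I would treat the fully realized bad intervals ($k<\Gamma$, so $\calI_k=\calI'_k$) and the truncated tail $\calI_\Gamma$ separately. For $k<\Gamma$, no restart occurs inside $\calI_k$, hence by Lemma~\ref{lem:detection}(c) no index-$m_k$ replay was ever started in $W_k$. Now group these intervals by their scale $m_k=m$. The windows $W_k$ at a fixed scale are pairwise disjoint, and the per-round indicator $\one\{\text{an index-}m\text{ replay is started at round }t\}$ is an independent $\mathrm{Bernoulli}(p_m)$ with $p_m=\frac1L2^{-j/2}2^{-m/2}$ regardless of restarts, so the event that none of them fires over $\bigcup_{i\le r}W_{k_i}$ has probability at most $\exp(-\tfrac12 p_m\sum_{i\le r}|\calI'_{k_i}|)$. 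A union bound over the at most $T$ choices of ``which prefix of the bad index-$m$ intervals is realized'', over all scales $m$, blocks $j$, and epochs $i$, shows that with probability $1-\delta/4$ the realized length at each scale, $L_m\triangleq\sum_{k\in K^*,\,k<\Gamma,\,m_k=m}|\calI'_k|$, is $\otil(1/p_m)=\otil(L2^{j/2}2^{m/2})$. Substituting $\nu_m=\sqrt{C_0/(K2^mL)}$ gives $\Kprime\nu_m L_m=\otil(\sqrt{C_02^jL/K})$ for each of the $\le\log_2T$ scales, so the $k<\Gamma$ part is $\otil(\sqrt{C_02^jL/K})\le\otil(\sqrt{KC_02^jL})$.

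For the tail term $|\calI_\Gamma|\varepsilon_{\calI'_\Gamma}\one\{\Gamma\in K^*\}$ I would show $|\calI_\Gamma|\le 2^{m_\Gamma}L+\otil(1/p_{m_\Gamma})$ with high probability by splitting on whether a restart occurs inside $\calI_\Gamma$: if not, $\calI_\Gamma=\calI'_\Gamma$ is a fully realized bad interval and the single-interval version of the previous argument caps $|\calI'_\Gamma|$ by $\otil(1/p_{m_\Gamma})$; if a restart occurs at some $t_0\in\calI_\Gamma$, then no index-$m_\Gamma$ replay fired in $[s_{\calI'_\Gamma},t_0-2^{m_\Gamma}L]$ (such a replay would trigger a strictly earlier restart), so that window has length $\otil(1/p_{m_\Gamma})$ with high probability. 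Either way $\Kprime\nu_{m_\Gamma}|\calI_\Gamma|=\otil(\sqrt{C_02^jL/K})\le\otil(\sqrt{KC_02^jL})$. Adding the two pieces, $\term{3}+\termprime{3}=\otil(\sqrt{KC_02^jL})\le\otil(\sqrt{KC_0\Gamma2^jL})$, and the last inequality in the statement follows by plugging in $\Gamma=\order(\min\{S_\calJ,(KC_0)^{-1/3}\Delta_\calJ^{2/3}|\calJ|^{1/3}+1\})$ from Lemma~\ref{lemma: division} together with $|\calJ|\le2^jL$.

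The main obstacle is the probabilistic bookkeeping: $K^*$ and the windows $W_k$ are deterministic given the pre-block history, but which of them get ``played'' (which $k<\Gamma$) is random and correlated with the replay coin flips through the restart rule, so one cannot simply condition. The clean workaround I would stress is to phrase everything through the per-round replay indicators — which stay i.i.d.\ Bernoulli irrespective of restarts — and to union-bound over all possible realized prefixes at each scale rather than conditioning; the disjointness of the windows $W_k$ at a fixed scale is exactly what prevents an extra $\sqrt{\Gamma}$ (let alone $\Gamma$) factor from appearing, which a naive per-interval argument would produce.
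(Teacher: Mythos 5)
Your argument is correct and runs on the same engine as the paper's proof: condition on the pre-block history so that the fictitious partition $\calI'_1,\ldots,\calI'_\Upsilon$ and the bad set are deterministic, cap each surviving excess regret by $D_3\Kprime\nu_{m_k}$ via Lemma~\ref{lem:detection}(a), and use parts (b)--(c) together with the independence of the per-round replay coins to argue that a long stretch of realized bad intervals forces many missed restart opportunities, an exponentially unlikely event. Where you genuinely differ is the bookkeeping. The paper splits each bad interval's contribution into a $2^{m_k}L\cdot D_3\Kprime\nu_{m_k}$ piece (\term{4}), bounded deterministically by Cauchy--Schwarz at the price of the $\sqrt{\Gamma}$ factor, and a remainder (\term{5}) bounded through the survival function $f(\tau)$ and the chain $\Pr\{\term{5}>f(\tau)\}\le\Pr\{e_\Gamma>\tau\}\le\exp(-f(\tau)/C^*)$, with the truncated last interval absorbed by the same device. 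You instead keep each $|\calI_k|\Kprime\nu_{m_k}$ whole, group the bad intervals by scale $m$, exploit the disjointness of the windows $W_k$ (each of length at least $|\calI'_k|/2$ by part (b)) to bound the total realized length at scale $m$ by $\otil\left(\log(1/\delta)/p_m\right)$ with $p_m=\tfrac1L2^{-j/2}2^{-m/2}$, and handle $\calI_\Gamma$ by a separate case analysis on whether a restart occurs inside it. This buys a slightly cleaner and in fact marginally stronger intermediate bound, $\otil\left(\log(1/\delta)\sqrt{KC_0 2^jL}\right)$ per scale and hence overall, with no $\sqrt{\Gamma}$ (which the paper's \term{4} pays; harmless, since Lemma~\ref{lemma: division} is invoked afterwards anyway), at the cost of the per-scale grouping and the prefix/restart-time union bounds that the paper's single $f(\tau)$ threshold trick avoids. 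Your emphasis on phrasing everything through the per-round replay indicators (deferred decisions) rather than conditioning on the random $\Gamma$ is exactly the right way to sidestep the correlation with the restart rule, and it matches what the paper's product bound over missed opportunities does implicitly. One small slip: your intermediate computation $\Kprime\nu_mL_m=\otil\left(\sqrt{C_02^jL/K}\right)$ drops the factor $K$ inside $\Kprime$; the correct value is $\otil\left(\sqrt{KC_02^jL}\right)$, which is still within your stated final bound, so the conclusion is unaffected.
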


\begin{proof}
We condition on the history before block $j$. As discussed at the beginning of Appendix~\ref{appendix: first two terms}, under this condition, the partition $\calI_1', \ldots, \calI_{\Upsilon}'$, as well as excess regret and excess regret thresholds defined on them are all fixed. 
Define $\calK=\{k\in [\Upsilon]~|~\varepsilon_{\calI_k'} > D_3\alpha_{\calI_k'}\}$. Then by Fact~\ref{fact: fictitious fact}$, \term{3}+\termprime{3}$ can be written as 
\begin{align*}
\sum_{k\in [\Gamma]} |\calI_k|\varepsilon_{\calI_k'}\one\{\varepsilon_{\calI_k'}>D_3\alpha_{\calI_k'}\} =  \sum_{k\in [\Gamma] \cap \calK} |\calI_k| \varepsilon_{\calI_k'}. 
\end{align*}
%
For $k\in \calK$, denote by $m_k$ the index $m_{\calI_k'}$ defined in Lemma~\ref{lem:detection}. Then
\begin{align*}
\sum_{k\in [\Gamma] \cap \calK} |\calI_k| \varepsilon_{\calI_k'}
&\leq \sum_{k\in [\Gamma] \cap \calK} |\calI_k| D_3 K\nu_{m_k}\\
&= \sum_{k\in [\Gamma] \cap \calK} \left( 2^{m_k}L \times D_3\Kprime \nu_{m_k}+ (|\calI_k|-2^{m_k}L) \times D_3\Kprime \nu_{m_k}  \right) \\
&\leq \underbrace{ \sum_{k\in [\Gamma] \cap \calK}(\log_2 T)D_3\sqrt{KC_0 2^{m_k}L}}_{\term{4}} + \underbrace{\sum_{k\in[\Gamma] \cap \calK} (|\calI_k|-2^{m_k}L) D_3 \Kprime\nu_{m_k}}_{\term{5}}.
\end{align*}
\term{4} can be bounded as
\begin{align*}
\sum_{k\in [\Gamma] \cap \calK}(\log_2 T)D_3\sqrt{KC_0 2^{m_k}L}
&\leq \sum_{k\in [\Gamma] \cap \calK} (\log_2 T)D_3\sqrt{KC_0 |\calI_k'|}   \tag{Lemma~\ref{lem:detection}, (b)}\\
&\le (\log_2 T)D_3\sqrt{KC_0\Gamma \sum_{k\in[\Gamma] \cap \calK}|\calI_k'|}   \tag{Cauchy-Schwarz}\\
&= \order\left((\log T)\sqrt{KC_0\Gamma 2^j L}\right).  
\end{align*}
Next we want to bound \term{5}. Rewrite it as the following (denote $\calI_k=[s_k, e_k], \calI_k'=[s_k', e_k']$): 
\begin{align*}
\term{5} &\leq \sum_{k\in[\Gamma]\cap \calK}\sum_{t\in[s_k+2^{m_k}L, e_k]}D_3 \Kprime \nu_{m_k} = \sum_{k\in \calK}\sum_{t\in[s_k+2^{m_k}L, e_k]}D_3 \Kprime \nu_{m_k} \one\{t\leq e_\Gamma\} \\
&=\sum_{k\in \calK}\sum_{t\in[s_k'+2^{m_k}L, e_k']}D_3 \Kprime \nu_{m_k} \one\{t\leq e_\Gamma\}
\end{align*}
Below we show how to upper bound for a number $z$ the probability $\Pr\{\term{5}> z\}$.  
Define the following function:
\begin{align*}
f(\tau)\triangleq\sum_{k\in \mathcal{K}}\sum_{t\in[s_k'+2^{m_k}L,e_k']} D_3 \Kprime\nu_{m_k}\one\{t\le\tau \},
\end{align*}
which is again not random conditioning on the history before block $j$.
Our strategy is to first bound $\Pr\{\term{5}>f(\tau)\}$ by some function of $f(\tau)$. First by comparing definitions it is clear that $\Pr\{\term{5}>f(\tau)\}\leq \Pr\{e_\Gamma > \tau\}$.  Note that $e_\Gamma > \tau$ is the event that the block ends later than $\tau$. 
From Lemma~\ref{lem:detection}, we know that in interval $\calI_k'$ with $k\in \calK$, if the algorithm starts an replay phase at some $t^*\in [s_k, e_k-2^{m_k}L]$, then conditioned on \event{1}, the algorithm will restart at (or before) $t^*+2^{m_k}L$. That is, for an intervals $k\in\mathcal{K}$, the algorithm always has $|\calI_k'|-2^{m_k}L$ opportunities to start a replay phase with index $m_k$ which triggers restart eventually. Thus, if the algorithm proceeds to time $\tau$, and has not restarted yet, it has missed all such opportunities before $\tau$. More precisely, for all $k\in\calK$ with $e_k' < \tau$ (i.e., the intervals that are before $\tau$), the algorithms misses all opportunities in 
\begin{align*}
[s_k', e_k'-2^{m_k}L]
\end{align*}
to start a replay phase with index $m_k$; for $k$ such that $\tau\in[s_k', e_k']$ (i.e., the interval where $\tau$ lies in), the algorithm misses all opportunities in 
\begin{align*}
[s_k', \tau-2^{m_k}L]
\end{align*}
to start a replay phase with index $m_k$ (define this set to be empty if $\tau-2^{m_k}L<s_k'$). Combining these cases, we see that the probability that the block ends later than $\tau$ (i.e., $e_\Gamma>\tau$) is smaller than 
\begin{align*}
    &\prod_{k\in\mathcal{K}}\prod_{t\in[s_k',e_k'-2^{m_k}L]} (1-q_{m_k}\one\{t\le\tau-2^{m_k}L\}),\\  
    &=\prod_{k\in\mathcal{K}}\prod_{t\in[s_k'+2^{m_k}L,e_k']} (1-q_{m_k}\one\{t\le\tau\}). 
\end{align*}
where $q_{m}=\frac{1}{L}\sqrt{\frac{1}{2^j2^{m}}}=\sqrt{\frac{K}{C_02^jL}}\nu_{m}$ is the probability to start a replay phase with index $m$ at any time. 
Using the inequality $1-x\leq e^{-x}$, the above probability can further be upper bounded by
\begin{align*}
&\prod_{k\in\mathcal{K}}\prod_{t\in[s_k'+2^{m_k}L,e_k']} \exp\left( -q_{m_k}\one\{t\le\tau\}) \right) \\
&= \exp\left(- \sum_{k\in\mathcal{K}}\sum_{t\in[s_k'+2^{m_k}L,e_k']} q_{m_k}\one\{t\le\tau\} \right) \\
&= \exp\left(- \sum_{k\in\mathcal{K}}\sum_{t\in[s_k'+2^{m_k}L,e_k']} \sqrt{\frac{K}{C_02^jL}}\nu_{m_k}\one\{t\le\tau\} \right) \\
&= \exp\left( -\sum_{k\in\mathcal{K}}\sum_{t\in[s_k'+2^{m_k}L,e_k']} D_3\Kprime \nu_{m_k} \one\{t\le\tau\}\cdot\frac{1}{C^*} \right) \\
&=  \exp\left(  -\frac{f(\tau)}{C^*} \right), 
\end{align*}
where $C^*\triangleq (\log_2 T)D_3\sqrt{KC_02^j L}$. Combining all arguments above, we have
\begin{align*}
\Pr\{\term{5} > f(\tau)\} \leq \Pr\{e_\Gamma > \tau\} \leq \exp\left( -\frac{f(\tau)}{C^*} \right). 
\end{align*}
Picking $z=\log(e/\delta)C^*$, we then let $\tau$ be such that $f(\tau)\le z<f(\tau+1)$. If no such $\tau$ exists, $\Pr\left[\term{5} > z\right]=0$; otherwise, since $f(\tau)\geq f(\tau+1) - D_3(\log_2 T) > z-D_3(\log_2 T)$, we have

\begin{align*}
\Pr\left[\term{5}> z\right]
&\le\Pr\left[\term{5}> f(\tau)\right]
\le \exp\left(-\frac{f(\tau)}{C^*}\right)
\le\exp\left(\frac{D_3\log_2 T}{C^*}-\frac{z}{C^*} \right) \\
&\leq e\times \frac{\delta}{e}=\delta. 
\end{align*}
Therefore, we conclude that $\term{5}\ge\log(e/\delta)C^*=\log(e/\delta)(\log_2 T)D_3\sqrt{KC_02^j L}$ with probability at most $\delta$. 
Combining \term{4} and \term{5}, we get the first bound. Further using Lemma~\ref{lemma: division} gives the second bound. 
\end{proof}

\section{Bounding the Number of Restarts}
\label{appendix:restart}
\begin{lemma}
\label{lem:no rerun until}
Assume \event{1} holds. Then for all $t$ in epoch $i$ with $\Delta_{[\tau_i,t]}\leq \sqrt{\frac{KC_0}{t-\tau_i+1}}$, restart will not be triggered at time $t$. 
\end{lemma}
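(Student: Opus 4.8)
I would argue by contradiction. Suppose a restart is triggered at some time $t$ in epoch $i$ with $\Delta_{[\tau_i,t]}\le\sqrt{KC_0/(t-\tau_i+1)}$. By the algorithm a restart at time $t$ comes from exactly one of two events: (i) a complete replay phase $\calA=[s,t]$ of some index $m$, lying inside the current block $j$, whose \testreplay$(i,j,m,\calA)$ returns \textit{Fail} (Line~\ref{line: restart1}); or (ii) $t=\tau_i+2^jL-1$ is a block endpoint and \testblock$(i,j)$ returns \textit{Fail} (Line~\ref{line: restart2}). In either case I will show that every inequality the relevant test checks is violated, so the test actually returns \textit{Pass}, a contradiction. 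A preliminary normalization makes this possible: total variation is decomposable, so every sub-interval $\calI\subseteq[\tau_i,t]$ has $\Delta_\calI\le\Delta_{[\tau_i,t]}$; and since $t-\tau_i+1\ge 2^mL$ in case (i) and $=2^jL$ in case (ii), the hypothesis gives $\Delta_{[\tau_i,t]}\le\sqrt{KC_0/(2^mL)}=K\nu_m$ (resp.\ $K\nu_j\le K\nu_k$ for $k\le j$), hence also $\Delta_{[\tau_i,t]}/\nu_m\le K$ (resp.\ $\Delta_{[\tau_i,t]}/\nu_{k+1}\le K$). Thus every $\Delta$-term below is $O(\Kprime\nu)$ on the regret side and $O(K)$ on the variance side — exactly the scale of the test thresholds. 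Throughout I use \event{1}, which is assumed.

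\textbf{Case (i), \testreplay.} Since $\calA$ lies inside block $j$, the interval $\calB_{(i,j-1)}$ is an earlier, fully-completed block of epoch $i$ over which the algorithm ran without restart, so Lemma~\ref{lem:Reg gap for B_j} applies: $\Reg_{\calB_{(i,j-1)}}(\pi)$ and $\whReg_{\calB_{(i,j-1)}}(\pi)$ agree up to a factor $2$ plus $O(\Kprime\nu_{j-1}+\Delta_{\calB_{(i,j-1)}})=O(\Kprime\nu_m)$. I then invoke Lemma~\ref{lem: regret relation with [1,e]} — which needs only \event{1}, \emph{not} any yet-unproven test outcome — for the complete replay phase $\calA$, getting the same agreement up to $O(\Kprime\nu_m+\Delta_{[\tau_i,t]})=O(\Kprime\nu_m)$, and Lemma~\ref{lem:variation and regret} gives $|\Reg_\calA(\pi)-\Reg_{\calB_{(i,j-1)}}(\pi)|\le 2\Delta_{[\tau_i,t]}=O(\Kprime\nu_m)$. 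Chaining these three, both $\whReg_\calA(\pi)-4\whReg_{\calB_{(i,j-1)}}(\pi)$ and its mirror are $\le c\,\Kprime\nu_m$ for an explicit $c<D_1$, so Eq.~\eqref{eqn:test_cond1} and Eq.~\eqref{eqn:test_cond2} fail. For Eq.~\eqref{eqn:test_cond3}, apply Lemma~\ref{lem:variance_deviation2} on $\calA$, then Lemma~\ref{lem:variation and TVD} to replace $V_\calA$ by $V_{\calB_{(i,j-1)}}$ at cost $\Delta_{[\tau_i,t]}/\nu_m\le K$, then Lemma~\ref{lem:variance_deviation2} on $\calB_{(i,j-1)}$; using $\nu_m^2|\calA|=C_0/K$ and $\nu_m^2|\calB_{(i,j-1)}|\ge C_0/K$, the two concentration slacks are each $\le 80K$, the two factors $6.4$ compound to $6.4^2=40.96<41$, and the additive error is $\le 6.4(80+1)K+80K<D_2K$; hence Eq.~\eqref{eqn:test_cond3} fails and \testreplay returns \textit{Pass}.

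\textbf{Case (ii), \testblock.} Here $\calB_{(i,j)}=[\tau_i,t]$ is a completed block of epoch $i$ (a pending restart fires only at $t+1$), so for every $k\in\{0,\dots,j-1\}$ Lemma~\ref{lem:Reg gap for B_j} applies to both $\calB_{(i,k)}$ and $\calB_{(i,j)}$; combined with Lemma~\ref{lem:variation and regret} this gives $|\whReg_{\calB_{(i,j)}}(\pi)-4\whReg_{\calB_{(i,k)}}(\pi)|\le c'\,\Kprime\nu_k$ with $c'<D_4$, killing Eq.~\eqref{eqn:test_cond4} and Eq.~\eqref{eqn:test_cond5}; and the same concentration-plus-variation chain as above, now on $\calB_{(i,j)}\supseteq\calB_{(i,k)}$ with $Q=Q_{(i,k+1)}$ and $\nu=\nu_{k+1}$, bounds $\whV_{\calB_{(i,j)}}(Q_{(i,k+1)},\nu_{k+1},\pi)-41\,\whV_{\calB_{(i,k)}}(Q_{(i,k+1)},\nu_{k+1},\pi)$ by $O(K)<D_5K$, killing Eq.~\eqref{eqn:test_cond6}. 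So \testblock returns \textit{Pass}, and in either case no restart is triggered at time $t$, contradicting the assumption and proving the lemma.

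\textbf{Main obstacle.} The delicate step in both cases is the second concentration argument for the variance conditions: chaining the two directions of Lemma~\ref{lem:variance_deviation2} inevitably produces the multiplicative blow-up $6.4^2\approx 41$ — which is precisely why the tests compare variances with the coefficient $41$ rather than $1$ — and one must carefully bookkeep the accumulated additive error, whose dominant piece is $6.4$ times the concentration slack $\tfrac{80C_0}{\nu^2|\calI|}$ on the \emph{shorter} interval (for instance $\nu_{k+1}^2|\calB_{(i,k)}|=C_0/(2K)$, so that slack is $160K$), against the fixed thresholds $D_2K$ and $D_5K$. All the regret-side estimates are comparatively mechanical once Lemmas~\ref{lem:Reg gap for B_j}, \ref{lem: regret relation with [1,e]}, and \ref{lem:variation and regret} are available, and the one conceptual subtlety — that several supporting lemmas appear to presuppose a test passing — is sidestepped by using Lemma~\ref{lem: regret relation with [1,e]} (not Lemma~\ref{lem:2 and 3 hold}) for the replay interval and by noting that Lemma~\ref{lem:Reg gap for B_j} depends only on the OP constraints and \event{1}, never on a test outcome.
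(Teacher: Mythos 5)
Your proposal follows the paper's own proof almost step by step (arguing by contradiction rather than directly showing both tests pass is immaterial): the same chains through Lemma~\ref{lem:Reg gap for B_j}, Lemma~\ref{lem: regret relation with [1,e]} and Lemma~\ref{lem:variation and regret} for the regret conditions, and the same two applications of Lemma~\ref{lem:variance_deviation2} bridged by Lemma~\ref{lem:variation and TVD} for the variance conditions; you also correctly identify the non-circularity point (use Lemma~\ref{lem: regret relation with [1,e]}, not Lemma~\ref{lem:2 and 3 hold}). The \testreplay half, including the constants ($6.4(80+1)K+80K=598.4K<D_2K$), matches the paper exactly.

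The gap is in the \testblock variance step. You bound $\whV_{\calB_{(i,j)}}(Q_{(i,k+1)},\nu_{k+1},\pi)-41\,\whV_{\calB_{(i,k)}}(Q_{(i,k+1)},\nu_{k+1},\pi)$, and by your own accounting the dominant error is $6.4$ times the $160K$ slack on the \emph{shorter} block $\calB_{(i,k)}$; that piece alone is $1024K$, and the full chain gives $6.4(160+1)K+80K\approx 1110K$, which is \emph{not} below $D_5K=800K$. So your concluding claim that the error is $O(K)<D_5K$ does not close with the paper's constants. The paper's proof runs this chain in the opposite direction, establishing $\whV_{\calB_{(i,k)}}\le 41\whV_{\calB_{(i,j)}}+D_5K$, so that the factor $6.4$ multiplies the $80K$ slack on the \emph{longer} block and the total is $6.4\cdot 80+160+6.4=678.4\le D_5$. (Admittedly that direction does not literally match the sign of Eq.~\eqref{eqn:test_cond6} as printed — the paper appears to have the roles of the two blocks swapped between the test and this proof — but with $D_5=800$ only the paper's direction is verifiable; the direction you chose would force enlarging $D_5$ and repropagating constants, e.g.\ into Lemma~\ref{lem:2 and 3 hold}.) As written, then, your case (ii) variance verification fails numerically: you must either flip the chain as the paper does, or explicitly note that the stated threshold does not accommodate your direction, rather than assert that it does.
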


\begin{proof}
We will show that both tests pass and thus the algorithm does not restart.

\paragraph{No restarts by \testblock.}
Let $t=\tau_i+2^j L-1$ for some $j$. Then $\Delta_{[\tau_i,t]}\leq \sqrt{\frac{KC_0}{t-\tau_i+1}} = K\nu_j$. For any $\pi\in\Pi, k\in[0,j-1]$, 
\begin{align*}
    \whReg_{\calB_j} 
    &\leq 2\Reg_{\calB_j} + C_1\Kprime\nu_j + C_2\Delta_{[\tau_i,t]} \tag{Lemma~\ref{lem:Reg gap for B_j}}\\
    &\leq 2\Reg_{\calB_k} + C_1\Kprime\nu_j + (C_2+4)\Delta_{[\tau_i,t]} \tag{Lemma~\ref{lem:variation and regret}}\\
    &\leq 2\left(2\whReg_{\calB_k} + C_1\Kprime\nu_k + C_2\Delta_{[\tau_i,t]}\right) + C_1\Kprime\nu_j + (C_2+4)\Delta_{[\tau_i,t]} \tag{Lemma~\ref{lem:Reg gap for B_j}}\\
    & \leq 4\whReg_{\calB_k} + 3C_1\Kprime\nu_k + (3C_2+4)\Delta_{[\tau_i,t]} \\
    &\leq 4\whReg_{\calB_k} + D_4\Kprime\nu_k. \tag{$3C_1+3C_2+4\leq D_4$}
\end{align*}
Similarly, $\whReg_{\calB_k}\leq 4\whReg_{\calB_j} + D_4\Kprime\nu_k$. On the other hand, 
\begin{align*}
    \whV_{\calB_{k}}(Q_{k+1}, \nu_{k+1}, \pi) 
    &\leq 6.4V_{\calB_k}(Q_{k+1}, \nu_{k+1}, \pi) +\frac{80C_0}{\nu_{k+1}^2|\calB_k|}  \tag{\event{1}} \\
    &\leq 6.4V_{\calB_j}(Q_{k+1}, \nu_{k+1}, \pi) + \frac{80C_0}{\nu_{k+1}^2|\calB_k|}  + \frac{6.4\Delta_{[\tau_i,t]}}{\nu_{k+1}} \tag{Lemma~\ref{lem:variation and TVD}} \\
    &\leq 6.4\left(6.4\whV_{\calB_j}(Q_{k+1}, \nu_{k+1}, \pi) + \frac{80C_0}{\nu_{k+1}^2|\calB_j|}\right) + \frac{80C_0}{\nu_{k+1}^2|\calB_k|}  + \frac{6.4\Delta_{[\tau_i,t]}}{\nu_{k+1}} \tag{\event{1}}\\
    &\leq 41\whV_{\calB_j}(Q_{k+1}, \nu_{k+1}, \pi) + D_5K. \tag{$6.4\times  80 +  160 + 6.4 \leq D_5$}
\end{align*}

Therefore, Eq.~\eqref{eqn:test_cond4}-\eqref{eqn:test_cond6} do not hold for all $\pi$ and all $k\in[0,j-1]$ and the algorithm will not restart. 

\paragraph{No restarts by \testreplay.}
Let $\calA\subseteq [\tau_i,t]$ be a complete replay interval of index $m$. Then $\Delta_{[\tau_i,t]}\leq \sqrt{\frac{KC_0}{t-\tau_i+1}}\leq \sqrt{\frac{KC_0}{|\calA|}} = K\nu_m$. We have the following:  
\begin{align*}
    \whReg_{\calA}(\pi) 
    &\leq 2\Reg_{\calA}(\pi) + C_5\Kprime\nu_m + C_6\Delta_{[\tau_i,t]}  \tag{Lemma~\ref{lem: regret relation with [1,e]}}\\
    &\leq 2\Reg_{\calB_{j-1}}(\pi) + C_5\Kprime\nu_m + (C_6+4)\Delta_{[\tau_i,t]} \tag{Lemma~\ref{lem:variation and regret}}\\
    &\leq 2\left(2\whReg_{\calB_{j-1}}(\pi)+ C_5\Kprime\nu_{j-1}+C_6\Delta_{[\tau_i,t]}\right) + C_5\Kprime\nu_m + (C_6+4)\Delta_{[\tau_i,t]} \tag{Lemma~\ref{lem:Reg gap for B_j}}\\
    &\leq 4\whReg_{\calB_{j-1}}(\pi) + D_1\Kprime\nu_m.   \tag{$3C_5+3C_6+4\leq D_1$}
\end{align*}
Similarly, $\whReg_{\calB_{j-1}}(\pi)\leq 4\whReg_{\calA}(\pi) + D_1\Kprime\nu_m$. Also, 
\begin{align*}
    \whV_{\calA}(Q_m, \nu_m, \pi) &\leq 6.4V_{\calA}(Q_m, \nu_m, \pi) + \frac{80C_0}{\nu_m^2 |\calA|} \tag{\event{1}} \\
    &\leq 6.4V_{\calB_{j-1}}(Q_m, \nu_m, \pi) + 80K + \frac{6.4\Delta_{[\tau_i,t]}}{\nu_m} \tag{Lemma~\ref{lem:variation and TVD}}\\
    &\leq 6.4\left(6.4\whV_{\calB_{j-1}}(Q_m, \nu_m, \pi) + \frac{80C_0}{\nu_m^2|\calB_{j-1}|}\right) + 80K + \frac{6.4\Delta_{[\tau_i,t]}}{\nu_m} \tag{\event{1}}\\
    &\leq 41 \whV_{\calB_{j-1}}(Q_m, \nu_m, \pi) + D_2K. \tag{$6.4\times 80+80+6.4\leq D_2$}
\end{align*}
Therefore Eq.~\eqref{eqn:test_cond1}-\eqref{eqn:test_cond3} do not hold for all $\pi$ and the algorithm will not restart. 
\end{proof}

\begin{proof}[Lemma~\ref{lemma: number of restarts}]
Condition on \event{1}, which happens with probability $1-\delta/2$.
By Lemma~\ref{lem:no rerun until}, if there is no distribution change since last restart (which implies $\Delta_{[\tau_i,t]}=0$), then the algorithm will not restart again. Thus $E\leq S$. 

Let the epoch length be $T_1, \ldots, T_{E}$. Again by Lemma~\ref{lem:no rerun until}, in epoch $i$, the total variation has to be larger than $\sqrt{\frac{KC_0}{T_i}}$. By H\"older's inequality, we have
\begin{align*}
    E-1 \leq \left(\sum_{i=1}^{E-1} T_i\right)^{\frac{1}{3}}\left(\sum_{i=1}^{E-1} \sqrt{\frac{1}{T_i}} \right)^{\frac{2}{3}} \leq T^{\frac{1}{3}} 
    \left(\frac{\Delta}{\sqrt{KC_0}}\right)^{\frac{2}{3}} = (KC_0)^{-\frac{1}{3}}\Delta^{\frac{2}{3}}T^{\frac{1}{3}}. 
\end{align*}
This finishes the proof.
\end{proof}

\section{Notation Table}
\label{app:notation_table}

\begin{table}[htbp]\caption{General Notations}
\begin{center}
\begin{tabular}{r c p{10cm} }
\toprule
Notation & & Meaning \\
\hline
\hline
$\calX$  & ~ & context space \\
$K$ & ~ & number of actions \\
$\calD_t$  &~ & the density function over $\calX\times [0,1]^K$ at time $t$\\
$\calD_t^\calX$  &~ & the marginal distribution of $\calD_t$ over the context space $\calX$\\
$(x_t, r_t)$ &~& context-reward pair drawn from $\calD_t$ at time $t$\\
$S$ & ~ & $1+\sum_{t=2}^T \one\{\calD_{t}\neq \calD_{t-1}\}$ \\
$\var$ & ~ & $\sum_{t=2}^T \TVD{\calD_t-\calD_{t-1}}$ \\
$\calI = [s,s']$& ~ & an time interval consisting of time steps $s,s+1, \ldots, s'$\\
$S_{[s,s']}$ & ~ & $1+\sum_{\tau=s+1}^{s'}\one\{\calD_{\tau}\neq \calD_{\tau-1}\}$\\
$\Delta_{[s,s']}$ & ~ & $\sum_{\tau=s+1}^{s'} \TVD{\calD_{\tau}-\calD_{\tau-1}}$\\
$\calR_t(\pi)$ & ~ & $\E_{(x,r)\sim \calD_t}\left[r(\pi(x))\right]$\\
$\pi_t^*$& ~ & $\argmax_{\pi\in \Pi}\calR_t(\pi)$\\
$\calR_\calI(\pi)$ & ~ & $\frac{1}{|\calI|}\sum_{t\in\calI} \calR_t(\pi)$\\
$\pi_\calI^*$ & ~ & $\argmax_{\pi\in\Pi} \calR_\calI(\pi)$\\
$\Reg_\calI(\pi)$ & ~ & $ \calR_{\calI}(\pi_\calI^*) - \calR_\calI(\pi)$\\ 
$\ips_t(a)$ & ~ & $\frac{r_t(a)}{p_t(a)}\one\{a_t=a\}$ where $a_t \sim p_t$ is the action selected at time $t$\\
$\avgR_\calI(\pi)$ & ~ & $\frac{1}{|\calI|}\sum_{t\in\calI} \ips_t(\pi(x_t))$\\
$\whpi_\calI$ & ~ & $\argmax_{\pi\in\Pi} \avgR_\calI(\pi)$.\\
$\whReg_\calI(\pi)$ & ~ & $ \avgR_\calI(\whpi_\calI) - \avgR_\calI(\pi)$\\
$Q \in \Delta^\Pi$ & ~ & $\{Q \in \fR^{|\Pi|}_+: \sum_{\pi \in \Pi} Q(\pi) =
1\}$\\
$Q(a|x)$ & $ ~ $ & $\sum_{\pi: \pi(x)=a} Q(\pi)$\\
$Q^\nu(\cdot|x)$ & $ ~ $ & $\nu\one +(1-K\nu)Q(\cdot|x)$\\
$\whV_\calI (Q, \nu, \pi)$ & ~ & $ \frac{1}{|\calI|} \sum_{t \in \calI} \left[ \frac{1}{Q^{\nu}(\pi(x_t) | x_t)} \right]$\\ 
$V_\calI(Q, \nu, \pi)$ & ~ & $ \frac{1}{|\calI|} \sum_{t \in \calI} \E_{x \sim \calD_t^\calX} \left[ \frac{1}{Q^{\nu}(\pi(x) | x)} \right]$\\
$\Kprime$ & ~ & $(\log_2T)K$\\
 
\bottomrule
\end{tabular}
\end{center}
\label{tab:TableOfNotationForMyResearch}
\end{table}

\end{document}